%% file: main.tex
\documentclass[a4paper,fleqn,floatsintext]{cas-sc}

\usepackage[numbers]{natbib}

\usepackage{changes}
\usepackage{algorithm}
\usepackage{algorithmic}
\newproof{proof}{Proof}

\usepackage{subfigure}
\graphicspath{{./figures/}{./figures/FPE/}{./figures/Interaction/}{./figures/MEP/}}
\usepackage{physics}

\usepackage{hyperref}
\hypersetup{
     unicode=false,          
    pdftoolbar=true,        
    pdfmenubar=true,        
    pdffitwindow=false,     
    pdfstartview={FitH},    
    pdftitle={My title},    
    pdfauthor={Author},     
    pdfsubject={Subject},   
    pdfcreator={Creator},   
    pdfproducer={Producer}, 
    pdfkeywords={keyword1} {key2} {key3}, 
    pdfnewwindow=true,      
    colorlinks=true,       
    linkcolor=blue,          
    citecolor=blue,        
    filecolor=magenta,      
    urlcolor=cyan           
}

\newtheorem{remark}{Remark}
\newtheorem{assumption}{Assumption}
\newtheorem{theorem}{Theorem}
\newtheorem{lemma}{Lemma}

\renewcommand{\d}{\mathop{}\!\ensuremath{\mathrm{d}}}

\newcommand{\argmin}{ \operatornamewithlimits{argmin} }

\def\FW {Freidlin-Wentzell }
\def\DW {Dawson-G\"{a}rtner }

\usepackage{todonotes}

 \newcommand{\KL}{\operatorname{KL}}

 \newcommand{\pll}{\kern 0.56em/\kern -0.8em /\kern 0.56em} 

\def\tsc#1{\csdef{#1}{\textsc{\lowercase{#1}}\xspace}}
\tsc{WGM}
\tsc{QE}
\tsc{EP}
\tsc{PMS}
\tsc{BEC}
\tsc{DE}

\begin{document}
\let\WriteBookmarks\relax
\def\floatpagepagefraction{1}
\def\textpagefraction{.001}
\shorttitle{Generative Wasserstein Gradient Path Method}
\shortauthors{C. Liu and X. Zhou}

\title [mode = title]{Generative Path-Finding Method for Wasserstein Gradient Flow}                      

\tnotetext[1]{This work was funded by the Hong Kong General Research Funds  (11318522,11308323,11304525).}

\credit{Conceptualization of this study, Methodology, Software}

\affiliation[1]{organization={Department of Data Science, City University of Hong Kong},
                addressline={Kowloon}, 
                city={Hong Kong SAR},
                citysep={}}

\affiliation[2]{organization={Department of Mathematics, City University of Hong Kong},
                addressline={Kowloon}, 
                city={Hong Kong SAR},
              citysep={}
                }


\author[1]{Chengyu Liu}[%
  style=chinese,
  orcid=0009-0002-4513-1752]
\ead{cliu687-c@my.cityu.edu.hk}
\author[2]{Xiang Zhou}[%
  style=chinese,
  orcid=0000-0002-3835-3894]
\cormark[1]
\ead{xizhou@cityu.edu.hk}

\cortext[cor1]{Corresponding author}
\begin{abstract}
Wasserstein gradient flows (WGFs) describe the evolution of probability distributions in Wasserstein space as steepest-descent dynamics that minimize  a free-energy functional. 
To compute the entire path from an arbitrary initial distribution to an equilibrium distribution requires a long physical time, posing significant challenges for both forward‑ and implicit‑Euler (JKO) time‑marching schemes.  Grid-based Eulerian approaches for spatial variables are severely limited by the curse of dimensionality, whether solving dynamic or stationary equations for the density function. On the other hand, existing Lagrangian approaches, which leverage particle movements or generative maps, struggle to adaptively improve efficiency through time‑step tuning.
To address these limitations, we propose a  \emph{generative path-finding}  framework for    the   Wasserstein gradient path (GenWGP). This approach constructs a generative flow model that geometrically transports mass from an initial density to the unknown equilibrium distribution, guided by a path loss function that encodes the full trajectory, including the terminal endpoint for the equilibrium distribution. The path loss is derived from the geometric action functional based on  Dawson-G\"{a}rtner large-deviation theory, which characterizes the most probable  evolution   of empirical distributions in interacting diffusion systems. 
We first construct the path loss over an arbitrary finite time horizon using physical time parametrization and then derive the reparameterization-invariant geometric action functional, based on the Wasserstein arc-length parametrization. GenWGP employs normalizing flows to compute the geometric  curve converging to the equilibrium distribution.
A key feature of GenWGP is its enforcement of intrinsic constant-speed movement between adjacent layers of the generative neural network, encouraging approximately that discretized distributions remain equidistant with respect to  the Wasserstein metric along the entire path, even under complex free energy landscapes.
Consequently, our approach circumvents the need for intricate time-stepping schemes that are typically constrained by step-size limitations. Instead, it facilitates stable training that is robust and independent of the specific temporal or geometric discretization of the underlying continuous descent path. Furthermore, the learned generative map serves as a reusable sampler, enabling efficient computation of statistical quantities along the gradient flow.
We evaluate GenWGP on a variety of benchmark problems, including Fokker–Planck equations with both convex and non‑convex potentials, as well as interacting particle systems governing aggregation, aggregation‑drift, and aggregation‑diffusion dynamics. Numerical results demonstrate that GenWGP matches or exceeds the accuracy of high‑fidelity reference solutions, using as few as a dozen of discretization points for the entire gradient flow path, while effectively capturing complex dynamical behaviors.

\end{abstract}




\begin{keywords}
Wasserstein gradient flow \sep
Minimum action method\sep 
Large deviation \sep
Geometric action \sep
Normalizing flow \sep 
\end{keywords}

\maketitle
\input{1-Introduction}

\input{2-Background}
\input{3-Physicaltime}

\input{4-Geometric}

\input{5-Numerical}

\input{6-Conclusion}

\bibliographystyle{apalike}
\bibliography{GenerativeWGF,gad}

\appendix
\section{Proofs in Section \ref{sec:GenWGP_finitetime}}

\subsection{Proof of Theorem \ref{theo:KL_bound}}\label{pf:KL_bound}
\renewcommand{\theequation}{A.1.\arabic{equation}}
\setcounter{equation}{0} 
\input{proof_KL_bound}

\subsection{Proof of Theorem \ref{theo:disc_error_bound}}\label{pf:disc_error_bound}
\renewcommand{\theequation}{A.2.\arabic{equation}}
\setcounter{equation}{0} 
\input{proof_disc_error_bound}

\subsection{Proof of Theorem \ref{thm:gMAM_equivalence}} \label{pf:gMAM_equivalence}
\renewcommand{\theequation}{A.3.\arabic{equation}}
\setcounter{equation}{0} 
\input{proof_gMAM}

\subsection{Proof of Theorem \ref{theo:EL_compare}}\label{pf:EL_compare}
\renewcommand{\theequation}{A.4.\arabic{equation}}
\setcounter{equation}{0} 
\input{proof_EL_compare}

\subsection{Proof of Theorem \ref{theo:disc_conv}}\label{pf:disc_conv}
\renewcommand{\theequation}{A.5.\arabic{equation}}
\setcounter{equation}{0} 
\input{proof_disc_conv}

\end{document}

%% file: 1-Introduction.tex
\section{Introduction}
Free energy on probability measure spaces provides a unified framework for analyzing both equilibrium states and non-equilibrium dynamics of complex systems in physics and applied mathematics \cite{lafferty1988density, rousset2010free}.
It is a fundamental concept to modeling phenomena ranging from diffusion and chemotaxis to pattern formation. 
In this work, we consider 
the following free energy $\mathcal{F}: \mathcal{P}_2(\Omega) \to \mathbb{R} \cup \{+\infty\}$
\begin{equation} \label{eqn:energy}
\mathcal{F} (\rho) := \int_{\Omega} \left[\beta^{-1} U_m(\rho(x)) + V(x) \rho(x) \right] \d x + \frac{1}{2} \int_{\Omega \times \Omega} W(x-y) \rho(x) \rho(y) \d x \d y,
\end{equation}
where $\rho$ denotes a probability density in $\mathcal{P}_{2}(\Omega)$, the space of probability measures that are absolutely continuous with respect to the Lebesgue measure and possess finite second moments.
Throughout this paper, we identify the probability measure with its Radon-Nikodym derivative (density) $\rho$ with respect to the Lebesgue measure.

Equation \eqref{eqn:energy} comprises three   contributions. The first term, $\int_{\Omega} \beta^{-1} U_m(\rho(x)) \d x$, corresponds to  the internal energy, where $\beta = (k_B T)^{-1}$ denotes the inverse temperature, $k_B$ is the Boltzmann constant, and $T$ is the absolute temperature. The choice of the function $U_m$ corresponds  to   distinct diffusive behavior: for instance, $U_1(\rho)=\rho\log\rho$ yields the classical Boltzmann entropy \cite{boltzmann1872weitere} and recovers linear diffusion \cite{villani2021topics}. Nonlinear diffusion is captured by the power-law form $U_m(\rho) = \frac{\rho^m}{m-1}$ (for $m>0, m \neq 1$), encompassing the porous medium regime ($m>1$) and the fast diffusion regime ($0<m<1$). Key properties of the solutions—such as mass conservation, finite-time extinction, finite propagation, and self-similarity—depend critically on the interplay between the power 
  $m$ and the dimension $d$ \cite{otto2001geometry, carrillo2003kinetic, vazquez2007porous}.
The second term involves an external potential $V(x)$, representing spatial confinement or environmental drift.
The third term accounts for pairwise interactions, modeling attraction or repulsion between particles.
A notable example of \eqref{eqn:energy} is the relative entropy (Kullback–Leibler divergence) $D_{\mathrm{KL}}(\rho\| p_\ast)=\int \rho\log(\rho/p_\ast)\d x$, by setting $\beta=1$, $m=1$, $W\equiv 0$, and $V=-\log p_\ast$.

A central goal in the applications is to identify the minimizers of the free energy $\mathcal{F}$ in \eqref{eqn:energy}, which correspond to stable equilibrium distributions of the underlying particle system.
The relaxation dynamics from an initial state $\rho_0$ toward an equilibrium is well described by the \textbf{Wasserstein Gradient Flow (WGF)} - the steepest descent  of $\mathcal{F}$ in the Wasserstein space equipped with the 2-Wasserstein metric $\d_\mathcal{W}$:  
\begin{equation} \label{eqn:GF_abstract}
\begin{cases}
\partial_t p_t= -\nabla_{\d_\mathcal{W}}\mathcal{F}(p_t),
\\ p_t|_{t=0} = \rho_0.
\end{cases}
\end{equation}
Using the Benamou-Brenier differential structure of the Wasserstein space \cite{ambrosio2008gradient}, the abstract gradient flow \eqref{eqn:GF_abstract} takes the form of a continuity equation:
\begin{equation} \label{eqn:GF_concrete}
\partial_t p_t= \nabla \cdot \left( p_t \nabla \frac{\delta \mathcal{F}(p_t)}{\delta p_t} \right),
\end{equation}
where the first variation (Fr\'echet derivative) of \eqref{eqn:energy} is given by
$$ \frac{\delta \mathcal{F}(\rho)}{\delta \rho}(x) = \beta^{-1} U'_m(\rho(x)) + V(x) + \int_{\Omega} W(x-y)\rho(y)\d y. $$
Under suitable conditions, $p_t$ converges as $t \to \infty$ to a steady state (equilibrium distribution or invariant measure) $p_\infty$hat corresponds to a minimizer of $\mathcal{F}$ \cite{ambrosio2008gradient, peletier2014variational}.
The gradient flow \eqref{eqn:GF_concrete} encompasses a broad class of fundamental evolution equations: 
selecting  $U(\rho)=U_1(\rho)=\rho\log\rho$ and $W\equiv 0$ recovers the Fokker-Planck equation; incorporating  the nonlocal term $W\ast\rho$ leads to  the McKean-Vlasov equation; and the setting  $V=W=0$ and $U_m$ with $m>1$ yields the porous medium equation.

Solving \eqref{eqn:GF_concrete} numerically poses significant challenges. Existing methods generally fall into two categories.
\textbf{Eulerian schemes}, such as Finite Difference or Finite Volume methods, solve  the PDE \eqref{eqn:GF_concrete} by discretizing the spatial domain on a fixed grid or mesh  \cite{carrillo2015finite}. While accurate in low dimensions, they suffer from the \emph{curse of dimensionality} as grid complexity scales exponentially with $d$.
Recent deep learning-based solvers addresses  this issue by parameterizing $p_t$ with neural networks \cite{raissi2019physics, yu2018deep} and defining the loss function as the mean squared error of the residual for \eqref{eqn:GF_concrete}. While these methods offer flexibility and generalization for high-dimensional problems, they often fail to strictly preserve essential physical invariants, such as probability mass conservation and non-negativity, and rely on sophisticated adaptive training techniques.

In contrast, \textbf{Lagrangian approaches}  employ particle dynamics  to track individual particles or trajectories, 
avoiding the need for fixed spatial grids. Numerically, these methods approximate the underlying particle flow rather than directly solving for the probability density function.   It preserves essential  probability properties, adapts naturally to sparse or localized solutions, and avoids the curse of dimensionality when equipped with the modern generative methods.
Prominent examples include score-based methods, which typically employ forward Euler time discretization \cite{boffi2023probability, lu2024score, huang2024vy}, and the Jordan–Kinderlehrer–Otto (JKO) scheme, based on implicit Euler discretization \cite{jordan1998variational}, as well as its neural extensions \cite{lee2024deep, xu2023normalizing, hu2024energetic}.

However, regardless of the specific spatial discretization approaches, the above existing methods fundamentally operate as time-marching schemes. Thesse methods rely on sequential updates of the density or particle movements over short time steps, requiring the solution of a subproblem at each step.This creates a critical dependency on the time‑step size: smaller step sizes - while necessary for accuracy and stability - significantly increase computational costs, making long‑time simulations particularly prohibitive.
In principle, adaptive tuning of the step size, as done in traditional Euclidean spaces, is feasible. However, in practice, it is   constrained by the intrinsic complexity of the Wasserstein metric. Moreover,   the gradient flow  requires infinite time to reach the exact equilibrium. Truncating the flow to a finite time interval can result in bias, causing the final state to deviate from the true equilibrium. There is no definitive way to select the truncation time interval {\it a prior}, as the accuracy of the truncation varies on a case-by-case basis.


To address these limitations, we propose a  \textbf{Generative Wasserstein Gradient Path (GenWGP)} method, which re-frames the problem from sequential time-stepping to global path optimization, emphasizing the geometric path viewpoint of the underlying Wasserstein gradient flow. 
The \emph{Wasserstein Gradient Path} (WGP) refers to a curve representing the Wasserstein-2 gradient flow from an arbitrary initial density toward a nearby equilibrium density, parameterized either by the original physical time or by an alternative geometric parameter, such as the Wasserstein arc-length.
To achieve such a path optimization, we connect the gradient flow with the large deviation principle \cite{DG1987,DW1989, Feng2006, Zimmer2013}, and construct the loss functional in path space based on the action functional of the \DW\ large deviation principle for interacting particles \cite{DW1989}.  
The gradient flow corresponds to the \emph{zero-action} path \cite{FW2012, weinan-MAM2004, aMAM2008, Zimmer2013}. 
We shall  see later that,  our path loss function is equivalent to the mean-squared  residual of the time-dependent PDE \eqref{eqn:GF_concrete} measured in the $H^{-1}_\rho$ metric associated with the Wasserstein structure. 
Thus our  variational principle is to  learn the entire trajectory up to a prescribed terminal time $T$, rather than updating the solution one short time interval after another.
With a sufficiently large $T$, our method identifies the learned  $\rho_T$ as an approximation to the equilibrium distribution. 
If the extra penalty of $\mathcal{F}(\rho_T)$ is added  to the path-loss functional,   the optimization method can simultaneously determines the final terminal state an  equilibrium and resolve the whole gradient path quite well. 

Compared with the explicit time-marching schemes, 
our resulting time-discretized path loss avoids the stability restrictions and permits to take large time step size. But there still remains two accuracy issues: the choice of the truncation horizon $T$ and the non-uniform accuracy induced by a non-optimal temporal grid.  In long-time relaxation problem, we face the typical situation that  the early stage of the evolution may contain rapid transients, while the later stage approaches equilibrium only in a very slow pace. 
A physical-time parametrization is therefore not the most efficient one for describing the whole trajectory.
The classical action in the \DW\ large deviation principle is expressed as an integral over physical time. 
Motivated by the Maupertuis principle, as in the geometric minimum action method for the \FW\ action functional \cite{heymann2008geometric}, we reformulate the original \DW\ action functional into a geometric form that is \emph{invariant under time reparameterization} and \emph{free of the explicit dependence on the horizon truncation $T$}. 
In this formulation, the free energy at the terminal state is always added to the path-loss functional to guide the convergence to the equilibrium state.

The optimization of the curve allows us to represent the same gradient path by its geometry alone, independently of how fast or slow the path is traversed in physical time. 
In particular, it enables more suitable parametrizations for long-time relaxation, such as the Wasserstein-2 arc-length parametrization.  
Our GenWGP first computes the desired curve geometrically, and then recovers the corresponding physical-time parameter through a simple post-processing step based on the derived nonlinear relation between the geometric and physical parametrizations. 
In this way, the geometric formulation retains the essential dynamic information while avoiding the inefficiency of resolving the slow tail directly on a physical-time grid.

Numerically, to parameterize a Wasserstein path with $K$ discrete points, our GenWGP method employs a single \textbf{Normalizing Flow (NF)} neural network \cite{kobyzev2020normalizing,papamakarios2021normalizing} composed of $K$ stacked layers, each of which represents a diffeomorphic map transporting mass between two neighboring distributions along the path. 
This yields a mesh-free, generative solver that computes the whole Wasserstein gradient path globally from the initial distribution to the final equilibrium distribution.
Enforcing a meaningful arc-length parametrization is a central challenge in path-based methods. 
Existing approaches typically rely either on global path reparametrization, which is mainly designed for finite-dimensional state spaces \cite{String2002}, or on penalizing the Riemannian norm of the tangent vector to enforce constant speed \cite{StringNET2026}. 
In contrast, GenWGP exploits the structure of Normalizing Flows together with the Monge representation of the Wasserstein distance. 
For two neighboring path images $p_{k-1}$ and $p_k$, the Wasserstein distance can be characterized by minimizing the $L_2$ mean-square distance between admissible transport maps sharing the same reference measure $\rho_0$. 
This makes it natural to lift the geometric action from the density level to the map level and to train the NF directly through the discrete transport increments between neighboring layers. 
When the corresponding Monge minimizer is realized, the segment cost coincides exactly with the Wasserstein distance. 
Therefore, imposing a constant-speed constraint on these lifted segment lengths provides a practical and geometrically consistent mechanism for distributing the $K$ discretized path images approximately evenly along the Wasserstein curve.

\medskip
\noindent\textbf{Main contributions.}
The main contributions of this work can be summarized as follows.

\begin{enumerate}
    \item \textbf{A global path-optimization formulation.}
    Instead of computing the WGF \eqref{eqn:GF_concrete} by sequential time marching, we reformulate the problem as the optimization of an entire probability path. This shifts the numerical viewpoint from local time stepping to a global least-action principle in path space, and provides a variational framework for learning the full relaxation trajectory from an initial distribution toward equilibrium.

    \item \textbf{A generative Lagrangian parameterization.}
    We represent the discrete probability path by one normalizing flow whose layerwise composition plays the role of transport along the path. This yields a mesh-free Lagrangian solver that avoids Eulerian spatial grids and directly parameterizes the evolving distributions and particle trajectories in a unified manner.

    \item \textbf{Physical-time and geometric action formulations.}
    Starting from the \DW action functional \cite{DW1989}, we derive both a physical-time path loss on a prescribed horizon and a geometric, reparameterization-invariant formulation for long-time relaxation problems. The geometric formulation removes the explicit dependence on the physical-time horizon and is therefore better suited to computing paths that approach equilibrium.

    \item \textbf{A practical discrete optimization framework.}
    We construct trainable discrete objectives for both formulations using Crank-Nicolson-type temporal discretization, Monte Carlo particle approximation, and a lifted map-level geometric loss. In the geometric setting, we further introduce an arc-length regularization and a terminal free-energy penalty to stabilize training and to guide the endpoint toward a low-energy equilibrium state.

    \item \textbf{Supporting analytical results and numerical validation.}
    Under the stated regularity assumptions, we establish an \emph{a priori} KL-divergence estimate for the physical-time formulation, a trajectory-error decomposition for the discrete physical-time scheme, and a consistency result for the discrete geometric objective. Numerical experiments on Fokker-Planck and interacting-particle systems (aggregation, aggregation-drift, and aggregation-diffusion dynamics) demonstrate that the proposed framework can accurately approximate both relaxation paths and terminal states.

\end{enumerate}

{\bf Related work}
We review related works in the recent literature to highlight the distinct features of our approach.

\textbf{Equilibrium Solvers.}
Several recent works focus solely on identifying the equilibrium $\rho_\infty$ by training NFs via strong/weak/variational formulations of the \emph{static} PDE $\partial_t p_t = 0$ in \eqref{eqn:GF_concrete} \cite{tang2022adaptive, cai2024weak, xie2023deep, cai2025weak}. While these methods efficiently find the steady state using adaptive sampling, they do not capture the path of evolution.

{\bf Score-Based and Flow-Based Time-Marching Methods.}  
This new class of methods on discrete physical time grid learns the time-dependent velocity field $\mathcal{V}[p_t](x)$ or Stein score ($\nabla \log p_t$) 
\cite{boffi2023probability, shen2024entropy, li2023selfconsistent, huang2024vy}. 
These approaches typically use forward Euler integration to advance the solution. Consequently, they inherit the standard limitations of ODE solvers, including the need for small time steps and lack of global error control. Furthermore, all score-based methods are essentially restricted to linear diffusion (entropy-based) models.

{\bf JKO-based Variational Schemes.}
The JKO scheme has inspired methods like JKO-iFlow \cite{xu2023normalizing}, EVNN \cite{hu2024energetic}, and Deep JKO \cite{lee2024deep}, which use NFs to solve the variational subproblem at each time step. While these methods correctly leverage the Lagrangian framework, they solve a sequence of optimization problems at $t_1, t_2, \dots$, which is computationally expensive and prone to error drift. In contrast,our GenWGP optimizes the global action in a single training loop.

{\bf Parametric WGF.} 
The Parameterized WGF scheme \cite{liu2022neural, nurbekyan2023efficient} projects the gradient flow onto a neural network's parameter space. This transforms the PDE into an ODE on parameters but requires to compute and invert the Fisher Information Matrix. This inversion is computationally prohibitive for a large number of network parameters, a bottleneck that GenWGP avoids entirely.

{\bf Minimum Action Methods and \DW Large Deviation Principle.}
Our approach is based on the equivalence of the zero-action path 
and the gradient flow. The background of large deviation on 
  \FW and \DW theories can be found in \cite{FW2012,DW1989}
as well as \cite{Zimmer2013}. 
An nonexhaustive list for the  development of minimum action methods,
developed either in the finite dimensional configuration   space  $\mathbb{R}^d$
or in the function space $L^2(\mathbb{R}^d)$  include \cite{weinan-MAM2004,aMAM2008,WantMAM2015, Heymann2006, Heyman2008, CiCP2018-SUNZHOU}. However, none of these approaches have been extended to the Wasserstein manifold.
More recent path‑finding frameworks that combine variational principles with deep neural networks are presented in \cite{StringNET2026,SIMONNET2023112349}.

\medskip
The remainder of this paper is organized as follows. Section~\ref{sec:Background} introduces the mathematical background on Wasserstein gradient flows, the Dawson-G\"artner large deviation theory, and normalizing flows. Section~\ref{sec:GenWGP_finitetime} develops the physical-time GenWGP formulation for approximating Wasserstein gradient flows on a finite time horizon. Section~\ref{sec:GenWGP_geo} then extends this framework to a geometric, reparameterization-invariant formulation for paths converging to equilibrium. Section~\ref{sec:Numer} presents numerical experiments to demonstrate the accuracy and effectiveness of the proposed framework. Finally, Section~\ref{sec:Conclusion} concludes the paper.

%% file: 2-Background.tex
\section{ Background and Preliminaries} 
\label{sec:Background}

This section lays out the foundational concepts underpinning our method. We begin by reviewing the Riemannian geometry of the Wasserstein space and its associated gradient flow structure. We then present two complementary variational characterizations of the dynamics—one local and one global. Finally, we introduce normalizing flows as a flexible framework for parameterizing transport maps.

\subsection{Wasserstein Geometry and Gradient Flows}
Let $\mathcal{P}_{2}(\Omega)$ denote the space of absolutely continuous probability measures on $\Omega \subset \mathbb{R}^d$ with finite second moments. We identify each measure with its density function $\rho$.
The Wasserstein-2 distance between $\rho_0, \rho_1 \in \mathcal{P}_{2}(\Omega)$ is defined by the optimal transport problem \cite{villani2021topics}:
\begin{equation} \label{eqn:Monge}
\d_\mathcal{W}^2(\rho_0, \rho_1) = \inf_{T_\# \rho_0 = \rho_1} \int_\Omega |x - T(x)|^2 \rho_0(x)  \d x,
\end{equation}
where $T: \Omega \to \Omega$ is a transport map and $T_\# \rho_0$ denotes the pushforward measure.


\textit{Tangent Space and Metric.}
The space $\mathcal{P}_{2}(\Omega)$ possesses a formal Riemannian structure \cite{otto2001geometry}. 
Given $\rho \in \mathcal{P}_2(\Omega)$, the tangent space at $\rho$ can then be identified with density perturbations $f = \partial_t \rho_t|_{t=0}$, namely
\begin{equation}\label{eqn:tangent-space}
  \mathcal{T}_\rho \mathcal{P}_2(\Omega)
  =
  \biggl\{
    f \in C^\infty(\Omega)
    \;\bigg|\;
    \int_\Omega f(x)\mathrm{d}x = 0,\ 
    \exists\psi \in C^\infty(\Omega)\ \text{such that}\
    f = - \nabla\cdot\bigl(\rho \nabla \psi\bigr)
  \biggr\},
\end{equation}
where $\psi$ is uniquely determined up to an additive constant under suitable boundary conditions.
The Riemannian metric at $\rho$ is given by the weighted negative Sobolev norm $\|\cdot\|_{-1,\rho}$. For two tangent vectors $f_i = -\nabla \cdot (\rho \nabla \psi_i)$ ($i=1,2$), the corresponding inner product is defined by
$
\langle f_1, f_2 \rangle_{-1, \rho} = \int_\Omega \nabla \psi_1(x) \cdot \nabla \psi_2(x)\rho(x)\d x.
$
Equivalently, the induced norm admits the dynamic characterization
\begin{equation}\label{eqn:H-1}
\|f\|_{-1,\rho}^{2}
=
\inf_{\mathbf u: f=-\nabla\cdot(\rho \mathbf u)}
\int_{\Omega} |\mathbf u(x)|^{2}\rho(x)\d x,
\end{equation}
where the infimum is attained at $\mathbf u=\nabla\psi$ whenever $f=-\nabla\cdot(\rho\nabla\psi)$.

\textit{Wasserstein Gradient Flow.}
The Wasserstein gradient of a functional $\mathcal{F}(\rho)$ \eqref{eqn:energy} is given by $\nabla_{\d_\mathcal{W}} \mathcal{F}(\rho) = -\nabla \cdot \left( \rho \nabla \frac{\delta \mathcal{F}(\rho)}{\delta \rho} \right).$
The gradient flow equation $\partial_t p_t = -\nabla_{\d_\mathcal{W}} \mathcal{F}(p_t)$, takes the form of a continuity equation:
\begin{equation} \label{eqn:WGF}
\partial_t p_t = -\nabla \cdot \left( p_t \mathcal{V}[p_t] \right),
\end{equation}
where $\mathcal{V}[p_t]$ is the specific Lagrangian velocity field:
\begin{equation} \label{eqn:velocity}
    \mathcal{V}[p_t](x) = - \nabla \left( \frac{\delta \mathcal{F}(p_t)}{\delta \rho} (x) \right) = 
     - \nabla \left (
    \beta^{-1} U'_m(p_t(x)) + V(x) + (W\ast p_t)(x) \right ).
\end{equation}
Here $(W\ast p_t)(x)=\int_\Omega W(x-y)p_t(y)\mathrm{d}y$ and $U_m'(\cdot)$ denotes the derivative of the internal-energy density.

\subsection{Variational and Dynamic Formulations}
We interpret the WGF \eqref{eqn:WGF} as  two complementary variational characterizations. The local-in-time formulation motivates time-marching algorithms, while the global-in-time formulation motivates our path-finding approach.

\textit{ Local: Onsager's Principle (Least Dissipation).}
At any given  time $t$, the system selects an instantaneous velocity field $\mathbf{v}$  that minimizes the sum of dissipation potential and energy change rates \cite{onsager1931reciprocal1}:
\begin{equation}\label{eqn:variational-wgf}
\mathbf{v}_t = \underset{\mathbf{v}}{\operatorname{argmin}} \left\{ \frac{1}{2} \int_\Omega |\mathbf{v}(x)|^2 p_t(x)  \mathrm{d}x + \frac{d}{dt} \mathcal{F}(p_t) \right\}=\underset{\mathbf{v}}{\operatorname{argmin}} \left\{ \frac{1}{2} \int_\Omega |\mathbf{v}(x)|^2 p_t(x)  \mathrm{d}x +  \int \partial_t p_t \frac{\delta \mathcal{F}}{\delta \rho} (p_t)  \mathrm{d}x  \right\},
\end{equation}
subject to the continuity constraint $\partial_t p_t + \nabla \cdot (p_t \mathbf{v}) = 0$.  By integral by part, the unique minimizer of \eqref{eqn:variational-wgf} is  exactly  the   velocity field  \eqref{eqn:velocity}, i.e., $\mathbf{v}^*_t = -\nabla \left(\frac{\delta \mathcal{F}}{\delta \rho}  \right) =  \mathcal{V}[p_t]$.
At this optimal velocity,  $\frac{d}{dt} \mathcal{F}(p_t) = - \int_\Omega \| \mathcal{V}[p_t] \|^2 p_t \d x \leq 0$, ensuring energy dissipation.

\textit{Global: Least Action Principle.}
Integrating the gradient flow \eqref{eqn:GF_abstract} over the time interval $[0, T]$ formally yields the \textit{Dawson-G\"artner action functional} \cite{DG1987}.
For a path $p = (p_t)_{t\in[0,T]}$, the action is defined as:
\begin{equation} \label{eqn:rate-functional}
S_T[p]
:=
\begin{cases}
\displaystyle
\frac12\int_0^T
\Bigl\|
\partial_t p_t+\nabla_{\d_{\mathcal W}}\mathcal F(p_t)
\Bigr\|_{-1,p_t}^2\d t,
& \text{if }  p_t
 \text{is absolutely continuous and the integral converges},\\[1.2em]
+\infty,
& \text{otherwise.}
\end{cases}
\end{equation}
where the norm  $\left\| \cdot \right\|_{-1, p_t}^2$ is defined in  \eqref{eqn:H-1}.
For entropy-driven diffusion, namely the case $m=1$, this $S_T$ is known as the \textit{rate function} in the large deviation theory of interacting diffusion particle systems \cite{Dawson1983, DG1987, DW1989}, and it quantifies the likelihood of observing a prescribed trajectory of the empirical measure $(p_t^N)_{0\le t\le T}$ of $N$ diffusion particles following the McKean-Vlasov system. More precisely, consider the system of $N$ interacting It\^o processes
\begin{equation} \label{eqn:NP-SDE}
\d X_i(t) = -\nabla V(X_i(t))\d t - \frac{1}{N} \sum_{j=1}^N \nabla W(X_i(t) - X_j(t))\d t + \sqrt{2\beta^{-1}}\d B^i_t, \quad i = 1,\dots,N,
\end{equation}
where $\{B_t^i\}_{i=1}^N$ are independent standard Brownian motions. The associated empirical measure
$
p_t^N:= \frac{1}{N} \sum_{i=1}^N \delta_{X_i(t)}
$
converges, under standard assumptions on $V$ and $W$, to a deterministic measure $p(t)$ solving the McKean-Vlasov SDE
\begin{equation} \label{eqn:mf-sde}
\d X_t = -\nabla V(X_t)\d t - (\nabla W * p(t))(X_t)\d t + \sqrt{2\beta^{-1}}\d B_t, \qquad p(t):=\mathrm{Law}(X_t),
\end{equation}
whose time-marginal density satisfies the WGF \eqref{eqn:WGF}. The fluctuations of the empirical-measure path $\{p_t^N\}_{t\in[0,T]}$ around the macroscopic limit are, for entropy-driven diffusion (namely the case $m=1$), formally described by a path-space large deviation principle of Dawson-G\"artner type \cite{DG1987}. For a fixed $T$, this suggests that the path law of $p^N=(p_t^N)_{t\in[0,T]}$ admits a path-space large-deviation description with speed $N$ and rate functional $S_T$. More precisely, for a prescribed absolutely continuous path $q=(q_t)_{t\in[0,T]}$, one formally writes
\[
\Pr\bigl(p^N \approx q\bigr) \asymp \exp\bigl(-N S_T[q]\bigr).
\]
Here the large deviation principle is understood in the usual setwise sense; heuristically, for sufficiently small Wasserstein neighborhoods of $q$, one expects probabilities of order $\exp(-N S_T[q])$. In particular, for admissible paths $q$ for which the action is well-defined, $S_T[q]=0$ if and only if $q_t$ satisfies the WGF equation \eqref{eqn:WGF} almost everywhere in $t\in[0,T]$. Hence the deterministic gradient flow trajectory is a zero-action path, reflecting the fact that it arises as the law-of-large-numbers limit of the underlying interacting particle system. Motivated by this large-deviation structure, for more general free energies of Wasserstein gradient-flow type, we still use the same action functional as the natural global variational principle for path computation.
For a prescribed terminal distribution $\rho_1$, it is therefore natural to consider the associated \textit{minimum action problem} \cite{weinan-MAM2004, Heyman2008,aMAM2008}:
\begin{equation} \label{eqn:path-finding}
\begin{aligned}
 \inf_{p} \quad & S_T(p), \\
\text{subject to} \quad & p_0(x) = \rho_0(x), \;\; p_T(x) = \rho_1(x).
\end{aligned}
\end{equation}
This variational problem determines the least-action path connecting $\rho_0$ to $\rho_1$. To find the zero-action path, we simply drop the terminal state and minimize $S_T(p)$ only with the initial $p_0=\rho_0$, since the solution of the WGF \eqref{eqn:WGF} indeed gives zero action under this initial constraint.

\subsection{Normalizing Flows} \label{ss:NF}
To numerically approximate the probability path, we employ Normalizing Flows (NFs)\cite{rezende2015variational,dinh2016density,chen2018neural,durkan2019neural,kobyzev2020normalizing,papamakarios2021normalizing}.
An NF is a deep generative model that represents a complex probability distribution as the pushforward of a simple reference distribution $\rho_{\text{ref}}$ (e.g., the standard Gaussian) through an invertible map $\Phi$.
This map is typically parameterized as a composition of $K$ invertible neural network blocks $\Psi_k$ (commonly referred to as layers):
\[
\Phi = \Psi_K \circ \cdots \circ \Psi_2 \circ \Psi_1.
\]
For a sample $z_0 \sim \rho_{\text{ref}}$, the target density at $z_K = \Phi(z_0)$ is computable via the change-of-variables formula:
\begin{equation} \label{eqn:change-of-variable}
\log p(z_K) = \log \rho_{\text{ref}}(z_0) - \sum_{k=1}^K \log \left| \det \frac{\partial \Psi_k}{\partial z_{k-1}} \right|.
\end{equation}
Various architectures facilitate efficient computation. For instance, RealNVP coupling flows~\cite{dinh2016density} use triangular Jacobians to ensure linear-cost determinant evaluation ($O(d)$). More expressive variants include spline-based flows~\cite{durkan2019neural} which implement flexible monotone transforms, and diffeomorphic non-uniform B-spline flows~\cite{hong2023neural} which offer $C^2$-smooth, bi-Lipschitz maps with controlled regularity.

In our framework, we interpret the layer-wise composition of the flow as a \textit{temporal discretization} of the Lagrangian trajectory.
By associating each layer $\Psi_k$ with the transport over a time step $\Delta t$, the intermediate activations $z_k = (\Psi_k \circ \cdots \circ \Psi_1)(z_0)$ represent the positions of particles at time $t_k$.
Consequently, the full network $\Phi$ parameterizes the entire discrete trajectory $(p_{t_k})_{k=0}^K$.
This design  differs from Neural ODEs \cite{chen2018neural} as it retains the exact tractability of the density via Eq.~\eqref{eqn:change-of-variable} at every layer,circumventing the need for ODE solvers, numerical integration errors, and post‑hoc density estimation. As a result, the action functional can be evaluated directly and efficiently, making the method particularly well‑suited for path‑based variational problems in the Wasserstein space.

%% file: 3-Physicaltime.tex
\section{The Generative Wasserstein Gradient Path (GenWGP) Method for Wasserstein Gradient Flow}
\label{sec:GenWGP_finitetime}
This section develops our  Lagrangian, generative framework for computing Wasserstein gradient path. 
We  first present  a \textit{physical-time parameterized} formulation, which learns the WGF dynamics on a fixed time horizon $[0,T]$.

\subsection{Physical Time Parameterized Lagrangian Representation of the Action Functional}
\label{ssec:physical_time}
Building on the variational principles established in Section~\ref{sec:Background}, we derive a computational framework to minimize the action functional $S_T[p]$
 \eqref{eqn:rate-functional} by employing  a \textit{Lagrangian particle approximation} of the density $p_t$. The continuum weighted $H^{-1}_{p_t}$ norm appearing in the rate functional \eqref{eqn:rate-functional}  is then replaced by a discrete $L^2$ norm over a finite ensemble of particle trajectories, thereby avoiding the need to solve an elliptic PDE at every time step.

We first take the continuous time perspective and characterize the trajectory of the probability density using a time-dependent velocity field $\mathbf{f}_t: \Omega \to \mathbb{R}^d$.
This field determines the motion of particles via the characteristic ODE:
\begin{equation} \label{eqn:flow-ODE}
    \partial_t \Phi(t, z) = \mathbf{f}_t(\Phi(t, z)), \quad \Phi(0, z) = z,
\end{equation}
where $z \sim \rho_0$ represents the initial Lagrangian coordinate.
The time-dependent density $p_t$ is defined as the pushforward $p_t = \Phi(t, \cdot)_\# \rho_0$.
By the transport theorem, $p_t$ satisfies the continuity equation driven by $\mathbf{f}_t$:
\begin{equation} \label{eqn:continuity_f}
    \partial_t p_t + \nabla \cdot (p_t \mathbf{f}_t) = 0.
\end{equation}

This path $(p_t)$ is used to match the target Wasserstein gradient flow \eqref{eqn:WGF} governed by the thermodynamic driving force $\mathcal{V}[p_t]$ in \eqref{eqn:velocity}, via the least action principle of minimizing the action functional \eqref{eqn:rate-functional}. To do this,  we 
substitute  $\partial_t p_t$ in  $S_T$ by  the continuity equation \eqref{eqn:continuity_f} and  see the ``residual'' term becomes the divergence of the velocity mismatch:
$$
    \partial_t p_t - (-\nabla_{\d_\mathcal{W}} \mathcal{F}(p_t)) = -\nabla \cdot \left( p_t (\mathbf{f}_t - \mathcal{V}[p_t]) \right)
    =-\nabla \cdot \left( p_t (\partial_t \Phi(t,z) - \mathcal{V}[p_t]) \right).
$$
Applying the definition of $H^{-1}_\rho$ norm \eqref{eqn:H-1}, we arrive at the following minimization problem for the loss function:
\begin{equation}
\begin{aligned} \label{eqn:J-continuous}
\inf_p S_T[p]
&=
\inf_p\inf_{\mathbf f:\partial_t p_t+\nabla\cdot(p_t\mathbf f_t)=0}
\frac{1}{2} \int_0^T \int_{\Omega}
\left\| \mathbf{f}_t(x) - \mathcal{V}[p_t](x) \right\|^2 p_t(x)\d x\d t \\
&=
\inf_p\inf_{\Phi: (\Phi_t)_\# \rho_0 = p_t}
\frac{1}{2} \int_0^T \mathbb{E}_{z \sim \rho_0}
\left[
\left\| \partial_t \Phi(t, z) - \mathcal{V}[p_t](\Phi(t, z)) \right\|^2
\right] \d t =: \inf_{\Phi} J[\Phi].
\end{aligned}
\end{equation}
This formulation seeks a flow map  $\Phi$ whose instantaneous kinematic velocity matches the  thermodynamic driving force $\mathcal{V}[p_t]$  given by \eqref{eqn:velocity}.
This formulation can be interpreted as a \emph{Physics-Informed Neural Network (PINN)} in the Wasserstein space, but unlike standard Eulerian version of the PINNs that minimizes PDE residuals  for $p_t$ on spatial grids, our method minimizes the ``residual''  of the flow map 
$\Phi$ as governed by  the neural  ODE \eqref{eqn:flow-ODE}. This Lagrangian perspective for the PINN  in the Wasserstein space naturally builds the connections of many generative models and the evolution PDE for the density.   


We rigorously justify our path loss function in \eqref{eqn:J-continuous} below by providing an {\it a priori} bound on the  error measured in Kullback-Leibler divergence.

\begin{theorem}[KL Divergence Bound] \label{theo:KL_bound}
Let $p_t$ be the density induced by the flow $\Phi$ via \eqref{eqn:flow-ODE}\eqref{eqn:continuity_f} and $\widehat{p}_t$ be the exact solution to the WGF \eqref{eqn:WGF}. Under the regularity assumptions \ref{ass:densitytheta}, there exist constants $\alpha,\gamma > 0$ such that
\begin{equation} \label{ineqn:KLmix2}
    \sup_{t\in[0,T]} D_{\mathrm{KL}}(p_t \| \widehat{p}_t) \leq  \exp(\gamma T) \alpha J[\Phi].
\end{equation}
\end{theorem}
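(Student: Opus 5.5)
We compare the two densities by differentiating the relative entropy along both evolutions. Write $D(t):=D_{\mathrm{KL}}(p_t\|\widehat p_t)=\int_\Omega p_t\log(p_t/\widehat p_t)\,\d x$. The first equation is the continuity equation \eqref{eqn:continuity_f} with velocity $\mathbf f_t$. The second is \eqref{eqn:WGF}, $\partial_t\widehat p_t+\nabla\cdot(\widehat p_t\,\mathcal V[\widehat p_t])=0$. Using $\int\partial_t p_t=0$ and integrating by parts (Assumption~\ref{ass:densitytheta} supplies decay or no-flux boundary terms and enough smoothness), we get the standard identity
\begin{equation*}
\frac{\d}{\d t}D(t)=\int_\Omega p_t\,\bigl(\mathbf f_t-\mathcal V[\widehat p_t]\bigr)\cdot\nabla\log\frac{p_t}{\widehat p_t}\,\d x .
\end{equation*}
We then split the velocity difference as $\mathbf f_t-\mathcal V[\widehat p_t]=(\mathbf f_t-\mathcal V[p_t])+(\mathcal V[p_t]-\mathcal V[\widehat p_t])$. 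The first piece is exactly the residual appearing in $J[\Phi]$ in \eqref{eqn:J-continuous}.

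For the residual piece, Cauchy--Schwarz and Young give
\begin{equation*}
\int p_t\,(\mathbf f_t-\mathcal V[p_t])\cdot\nabla\log\tfrac{p_t}{\widehat p_t}\,\d x\le \tfrac{1}{2\varepsilon}\int|\mathbf f_t-\mathcal V[p_t]|^2p_t\,\d x+\tfrac{\varepsilon}{2}\,I(p_t\|\widehat p_t),
\end{equation*}
where $I(p_t\|\widehat p_t)=\int p_t|\nabla\log(p_t/\widehat p_t)|^2\,\d x$ is the relative Fisher information.

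The second piece is treated term by term using \eqref{eqn:velocity}. The external potential $V$ cancels exactly. The internal-energy part is $-\beta^{-1}\nabla\bigl(U_m'(p_t)-U_m'(\widehat p_t)\bigr)$.
\begin{itemize}
\item For $m=1$ this equals $-\beta^{-1}\nabla\log(p_t/\widehat p_t)$, which produces the good dissipative term $-\beta^{-1}I(p_t\|\widehat p_t)$. Choosing $\varepsilon=\beta^{-1}$ absorbs the Fisher term from the residual estimate.
\item For $m\neq1$, the assumptions should provide two-sided bounds on $p_t$ and $\widehat p_t$ and their gradients. With these, the internal-energy difference can be bounded by $C|\nabla\log(p_t/\widehat p_t)|$ plus lower-order terms. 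Those terms are absorbed the same way, with a possibly smaller $\varepsilon$.
\end{itemize}

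The interaction part is $-\nabla W*(p_t-\widehat p_t)$. Assuming $\nabla W$ is bounded and Lipschitz (as in Assumption~\ref{ass:densitytheta}), we bound $\|\nabla W*(p_t-\widehat p_t)\|_\infty\le\|\nabla W\|_\infty\|p_t-\widehat p_t\|_{TV}$. Pinsker's inequality gives $\|p_t-\widehat p_t\|_{TV}\le\sqrt{2D(t)}$. Young's inequality then bounds the pairing by $\tfrac{\varepsilon}{2}I+C\,D(t)$. This is the source of the exponential factor $\gamma$.

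Collecting terms yields a differential inequality
\begin{equation*}
D'(t)\le \gamma D(t)+\alpha'\int|\mathbf f_t-\mathcal V[p_t]|^2p_t\,\d x .
\end{equation*}
Here $D(0)=0$, since $p_0=\rho_0=\widehat p_0$. Grönwall's lemma gives, for each $t\le T$,
\begin{equation*}
D(t)\le e^{\gamma t}\alpha'\int_0^t\mathbb E_{z\sim\rho_0}|\partial_s\Phi-\mathcal V[p_s](\Phi)|^2\,\d s\le e^{\gamma T}\,2\alpha' J[\Phi].
\end{equation*}
Taking the supremum over $t$ proves \eqref{ineqn:KLmix2} with $\alpha=2\alpha'$.

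The main obstacle is controlling the difference $\mathcal V[p_t]-\mathcal V[\widehat p_t]$ so that it is absorbed by the Fisher-information dissipation, especially for nonlinear diffusion $m\ne1$. There the difference is not a pure multiple of $\nabla\log(p_t/\widehat p_t)$. My first attempt is to write $U_m'(p)-U_m'(\widehat p)$ via the mean value theorem and rely on the assumed uniform positive bounds on the densities. Rigorously justifying the integration by parts and the differentiation of $D(t)$ is routine given the regularity assumptions.
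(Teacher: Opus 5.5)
Your treatment of the entropy case $m=1$ is correct and coincides with the paper's proof. The ingredients are the same: the KL-derivative identity and velocity splitting, Young on the residual, Pinsker plus Young on the interaction, $\varepsilon=\beta^{-1}$ to cancel the Fisher terms, and Gr\"onwall with $D(0)=0$.

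The gap is the power-law case $m\neq1$, which the theorem covers and where the real work lies. There the internal energy is the \emph{only} dissipative mechanism, so you need a coercive \emph{lower} bound from it. Your upper bound of $|\nabla(U_m'(p_t)-U_m'(\widehat p_t))|$ by $C|\nabla\log(p_t/\widehat p_t)|$ gives, after pairing, a term $+C\,I(p_t\|\widehat p_t)$. There is then no negative Fisher term to absorb either it or the $\tfrac{\varepsilon}{2}I$ terms from the residual and interaction estimates. Relative Fisher information is not controlled by $D$, so shrinking $\varepsilon$ does not help and the differential inequality does not close.

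The missing idea is the following splitting, with $k=m-1$ and $\nabla U_m'(\rho)=m\rho^{k}\nabla\log\rho$:
\[
p^{k}\nabla\log p-q^{k}\nabla\log q=q^{k}\nabla\log\frac{p}{q}+\bigl(p^{k}-q^{k}\bigr)\nabla\log p ,\qquad p=p_t,\ q=\widehat p_t .
\]
The first piece yields the dissipation $-\tfrac{m}{\beta}\int q^{k}|\nabla\log(p/q)|^2p\,dx\le-\tfrac{m}{\beta}a_m I$. Here $a_m=C_{\min}^{k}$ if $k>0$ and $a_m=C_{\text{MAX}}^{k}$ if $k<0$, because $s\mapsto s^k$ reverses monotonicity.

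The second piece becomes lower order only after three steps:
\begin{enumerate}
\item the mean value theorem $p^k-q^k=k\xi^{k-1}(p-q)$;
\item the score bound $\|\nabla\log p_t\|^2\le C_s$, combined with Young's inequality with a parameter $\lambda<2a_m/C_s$, so that net dissipation survives;
\item an estimate $\int|p-q|^2\,dx\le C\,D_{\mathrm{KL}}(p\|q)$ for densities bounded above (the paper's Lemma~\ref{L2KL}).
\end{enumerate}

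The third step cannot be replaced by Pinsker as you do for the interaction. Pinsker only gives $\int|p-q|^2\le\|p-q\|_\infty\|p-q\|_{L^1}\le C\sqrt{D}$. The resulting inequality $D'\le\gamma D+c\sqrt D+\alpha'\int|\mathbf f_t-\mathcal V[p_t]|^2p_t$ does not yield a bound proportional to $J[\Phi]$: with zero residual and $D(0)=0$ it still permits $D(t)=(ct/2)^2$. Only with all three ingredients can you choose the Young constant (the paper's $\Lambda=\beta/(mK_1^{\sigma_m})$) to cancel every Fisher term and run Gr\"onwall.
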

\begin{proof}
See Appendix \ref{pf:KL_bound}.
\end{proof}

\subsection{Discretization via Normalizing Flows.}
In the numerical implementation, we parameterize the flow map $\Phi(t, \cdot)$ at discrete time points using a Normalizing Flow.
We partition the time horizon $[0, T]$ into $K$ intervals - for instance, with a uniform step size  $\Delta t = T/K$.
The particle positions at time $t_k = k \Delta t$ are modelled by the generative map 
$
    \Phi_k(z) = (\Psi_{\theta_k} \circ \dots \circ \Psi_{\theta_1})(z),
$
and the velocity field $\mathbf{f}_t$ is approximated by the finite difference scheme:
\begin{equation} \label{eqn:dis-v}
    \mathbf{f}_{t_k}(\Phi_k(z)) \approx \frac{\Phi_k(z) - \Phi_{k-1}(z)}{\Delta t}.
\end{equation}
To achieve second‑order temporal accuracy for the path loss \eqref{eqn:J-continuous}, we employ the Crank–Nicolson scheme, which yields the following discrete empirical loss:
\begin{equation} \label{eqn:particle-loss}
J^K_N[\Phi] = \frac{\Delta t}{N} \sum_{k=1}^K \sum_{i=1}^N
\left\| \frac{\Phi_k(z_i) - \Phi_{k-1}(z_i)}{\Delta t}
- \frac{\mathcal{V}_N[p_k](\Phi_k(z_i)) + \mathcal{V}_N[p_{k-1}](\Phi_{k-1}(z_i))}{2} \right\|^2,
\end{equation}
where $\{z_i\}_{i=1}^N$ are i.i.d. samples drawn from $\rho_0$.
Here, $\mathcal{V}_N[p_k]$ denotes the empirical approximation of the velocity field  \eqref{eqn:velocity} computed using the particle batch at step $k$:
\begin{equation} \label{eqn:est-v}
    \mathcal{V}_N[p_k](x) := - \nabla \left( \beta^{-1} U_m^{\prime}(p_k(x)) + V(x) + \frac{1}{N} \sum_{j=1}^N W(x - x^{(j)}_k) \right),\quad x^{(j)}_k := \Phi_k(z_j).
\end{equation}
The density  $p_k(x)$ is computed via the formula \eqref{eqn:change-of-variable}.
The training procedure is  summarized in Algorithm~\ref{alg:MAM}.

\begin{algorithm}[H]
\caption{GenWGP (Physical-Time)}
\label{alg:MAM}
\begin{algorithmic}[1]
\REQUIRE NF parameters $\{\theta_k\}_{k=1}^K$, initial distribution $\rho_0$, horizon $T$, steps $K$.
\FOR{each training iteration}
    \STATE Sample a batch of $N$ particles $\{z_i\}_{i=1}^N \sim \rho_0$.
    \STATE Compute particle trajectories $x_k^{(i)} := \Phi_k(z_i)$ for $k=0, \dots, K$.
    \STATE For each step $k$, compute densities $p_k(x_k^{(i)})$ via \eqref{eqn:change-of-variable} and empirical velocities $\mathcal{V}_N[p_k](x_k^{(i)})$ via \eqref{eqn:est-v}.
    \STATE Evaluate the discrete loss $J^K_N[\Phi]$ using \eqref{eqn:particle-loss}.
    \STATE Update parameters $\{\theta_k\}$ via gradient descent (e.g., Adam).
\ENDFOR
\RETURN Optimized NF parameters $\{\theta_k\}_{k=1}^K$.
\end{algorithmic}
\end{algorithm}

As $\Delta t\to 0$ and $N\to \infty$, the discrete loss \eqref{eqn:particle-loss} consistently approximates the continuous action \eqref{eqn:J-continuous}. 
The following result gives a trajectory-error estimate in terms of a consistency residual measured against the exact Crank-Nicolson driving force.
\begin{theorem}[Residual-based trajectory-error bound] \label{theo:disc_error_bound}

Let $X^*(t, z)$ be the exact characteristic trajectory  associated with the velocity field \eqref{eqn:velocity} starting from $z \sim \rho_0$, and let $X^N_k(z) := \Phi_k(z)$ denote the discrete numerical trajectory generated by Algorithm~\ref{alg:MAM}.
We define the \textit{consistency residual} $\varepsilon$ as the maximum mismatch between the kinematic velocity and the Crank-Nicolson driving force:
\begin{equation} \label{eqn:epsilon_def}
    \varepsilon := \sup_{k, z} \left\| \frac{\Phi_k(z) - \Phi_{k-1}(z)}{\Delta t} -  \frac{\mathcal{V}_N[\widehat{p}_{t_k}](\Phi_k(z))+\mathcal{V}_N[\widehat{p}_{t_{k-1}}](\Phi_{k-1}(z))}{2} \right\|.
\end{equation}
Under the assumptions in Appendix~\ref{ass:densitytheta}, the expected trajectory error at any step $t_k$ satisfies:
\begin{equation} \label{eqn:disc_error}
    \mathbb{E}_{z \sim \rho_0} \left\| X^*(t_k, z) - X^N_k(z) \right\|
    \leq \left[ \underbrace{\mathcal{O}\left(N^{-1/2}\right)}_{\text{Sampling Error}} + \underbrace{\mathcal{O}(\varepsilon)}_{\text{Consistency Residual}} + \underbrace{\mathcal{O}(\Delta t^2)}_{\text{Discretization Error}} \right] t_k.
\end{equation}
\end{theorem}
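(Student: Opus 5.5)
We plan a standard local-truncation-plus-stability argument comparing the discrete trajectory $X^N_k(z)=\Phi_k(z)$ with the exact characteristic $X^*(t_k,z)$, which solves $\partial_t X^* = \mathcal{V}[\widehat p_t](X^*)$. Set $e_k(z):=X^*(t_k,z)-X^N_k(z)$, with $e_0=0$ since both start at $z$. We write the exact increment via the trapezoidal rule:
\[
\frac{X^*(t_k,z)-X^*(t_{k-1},z)}{\Delta t}=\frac{\mathcal{V}[\widehat p_{t_k}](X^*(t_k,z))+\mathcal{V}[\widehat p_{t_{k-1}}](X^*(t_{k-1},z))}{2}+\tau_k(z).
\]
Assuming the velocity along characteristics is $C^2$ in time (part of Assumption~\ref{ass:densitytheta}), the truncation error satisfies $|\tau_k|\le C\Delta t^2$. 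For the numerical trajectory, the definition \eqref{eqn:epsilon_def} gives
\[
\frac{X^N_k-X^N_{k-1}}{\Delta t}=\frac{\mathcal{V}_N[\widehat p_{t_k}](X^N_k)+\mathcal{V}_N[\widehat p_{t_{k-1}}](X^N_{k-1})}{2}+r_k,
\]
with $|r_k|\le\varepsilon$.

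Subtracting the two relations, each difference $\mathcal{V}[\widehat p_{t_j}](X^*_j)-\mathcal{V}_N[\widehat p_{t_j}](X^N_j)$ splits into two parts. The first is a Lipschitz part $\mathcal{V}[\widehat p_{t_j}](X^*_j)-\mathcal{V}[\widehat p_{t_j}](X^N_j)$, bounded by $L|e_j|$, using the uniform spatial Lipschitz constant $L$ of $\mathcal{V}[\widehat p_t]$ from the regularity assumption. The second is a sampling part $\mathcal{V}[\widehat p_{t_j}](X^N_j)-\mathcal{V}_N[\widehat p_{t_j}](X^N_j)$. Only the interaction term $\nabla W*\widehat p$ against its empirical average $\frac1N\sum_j\nabla W(x-x^{(j)})$ contributes to it, since the $U_m'$ and $V$ pieces are evaluated at the same density $\widehat p_{t_j}$ and cancel.

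We expect the sampling term to be the main obstacle, because the particles $x^{(j)}_k=\Phi_k(z_j)$ are distributed according to the learned $p_k$, not $\widehat p_{t_k}$, and they are not independent of the evaluation point. We would split it as follows. The first piece is the empirical measure of the $\widehat p$-distributed points $X^*(t_k,z_j)$ against $\widehat p$. With bounded $\nabla W$ and i.i.d.\ $z_j$, a variance computation gives $\mathbb{E}|\cdot|\le C N^{-1/2}$ uniformly in $x$; if needed, we remove the self-term to handle the dependence on $z_i$. The second piece is the difference between evaluating at $X^*(t_k,z_j)$ and at $X^N_k(z_j)$, which is bounded by $\mathrm{Lip}(\nabla W)\frac1N\sum_j|e_k(z_j)|$. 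Its expectation is $\mathbb{E}|e_k|$, so it folds into the Lipschitz term with constant $L'=L+\mathrm{Lip}(\nabla W)$.

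Taking expectations over $z\sim\rho_0$ and the sample yields, with $E_k:=\mathbb{E}|e_k|$,
\[
E_k\le E_{k-1}+\frac{L'\Delta t}{2}(E_k+E_{k-1})+\Delta t\bigl(CN^{-1/2}+\varepsilon+C\Delta t^2\bigr).
\]
For $L'\Delta t<1$, rearranging and iterating this discrete Gr\"onwall recursion from $E_0=0$ gives
\[
E_k\le \Bigl(\tfrac{1+L'\Delta t/2}{1-L'\Delta t/2}\Bigr)^{k}\,k\Delta t\,(\cdots)\le e^{2L't_k}\,t_k\bigl(CN^{-1/2}+\varepsilon+C\Delta t^2\bigr).
\]
This is \eqref{eqn:disc_error} with the exponential absorbed into the $\mathcal{O}$ constants on bounded horizons.
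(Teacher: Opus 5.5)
Your proposal is correct and follows essentially the same route as the paper's proof. Both arguments use the trapezoidal (Crank--Nicolson) truncation error $\mathcal{O}(\Delta t^2)$, split the velocity mismatch into a sampling part and a spatial-Lipschitz part, bound the remainder by $\varepsilon$, and close with a discrete Gr\"onwall recursion from $E_0=0$. The differences are minor: you evaluate the sampling error at the numerical point and use the Lipschitz constant of the exact field $\mathcal{V}[\widehat p_t]$, whereas the paper evaluates it at the exact point and uses the Lipschitz constant of $\mathcal{V}_N$. Your extra split of the empirical interaction term also explicitly handles the fact that the particles $\Phi_k(z_j)$ are distributed as $p_k$ rather than $\widehat p_{t_k}$, a point the paper passes over by treating $\mathcal{V}_N[\widehat p_{t_j}](x)$ as an unbiased i.i.d.\ average.
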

\begin{proof}
See Appendix \ref{pf:disc_error_bound}. 
In particular, Eq.~\eqref{eqn:disc_error} shows that the trajectory error is controlled by three contributions: the sampling error $O(N^{-1/2})$, the consistency residual $O(\varepsilon)$, and the Crank-Nicolson discretization error $O(\Delta t^2)$.
\end{proof}

\begin{remark}
Our path formulation allows flexible  time discretization with no essential implementation barriers. 
Here we employ the Crank-Nicolson scheme, which uses the average of the velocity $\mathcal{V}_N$ evaluated at $t_k$ and $t_{k+1}$. 
Under suitable regularity, this gives a second-order accurate discretization in time. 
By incorporating information from both ends of each time interval, the scheme typically provides a more faithful approximation of the continuous trajectory than the  first‑order explicit or implicit methods.

\end{remark}

\begin{remark}
\label{rem:ada}
While the above results extend straightforwardly to non‑uniform time grids, the adaptive time‑meshing strategy employed in the adaptive minimum action method \cite{aMAM2008,CiCP2018-SUNZHOU} is not an easy task in the infinite‑dimensional space $\mathcal{P}_2(\Omega)$. The reason lies in the architecture of the normalizing flow: each layer corresponds exactly to a specified time point. Consequently, adjusting to a new time mesh requires an expensive {\it refitting} of the entire network \cite{xu2023normalizing}, which is fundamentally different from the simple component-wise interpolation in finite‑dimensional settings.
 \end{remark}

 \begin{remark}
Since we are interested in the zero-action path for the WGP,
we can introduce a weight function $\omega_t $ for the path loss \eqref{eqn:J-continuous} or the discrete $\omega_k$ for the discrete loss \eqref{eqn:particle-loss} as in the  score-training approach \cite{song2021scorebased}. For example, to enhance the path accuracy near the initial state, we may use a larger weight $\omega_1$ than  $\omega_2,\cdots,\omega_K$.
This weighted training   effectively changes the $L^2$ norm in \eqref{eqn:particle-loss} and may be interpreted  as a new  large deviation rate function   for   
\eqref{eqn:mf-sde} associated with a time-dependent $
\beta_t$ for the noise amplitude. For simplicity, we use the constant weight in our algorithms and examples.
 \end{remark}

\subsection{Connections to Classical Results}

Our Lagrangian action‑minimization framework \eqref{eqn:J-continuous} provides a valuable len for reinterpreting existing numerical topics, such as numerical time‑marching schemes, geometric optimal transport, and the probabilistic theory of interacting particle systems.

\begin{enumerate}
 \item \textit{Time-discretization schemes:}
    The path loss functional $J(\Phi)$ in~\eqref{eqn:J-continuous}, if restricted in a single time interval from $t_k$  to $t_{k+1}$ sequentially,  is closely related to several well-known time-discretization schemes for WGFs:
    \begin{itemize}
        \item \textit{Forward Euler (explicit):}  choosing
    $
    \frac{\Phi_{k+1}(z) - \Phi_k(z)}{\Delta t} - \mathcal{V}_N[p_k](\Phi_k(z))
    $ in \eqref{eqn:particle-loss} 
    results in an explicit marching scheme.  This is  related to  score‑based transport modeling methods \cite{boffi2023probability, lu2024score,huang2024vy},   where  a score function,  instead of  the entire velocity field $\mathcal{V}$  independently trained by a neural network based on data.

    \item \textit{Backward Euler (implicit) and JKO:}
    Replacing the midpoint velocity in \eqref{eqn:particle-loss} by the backward endpoint velocity  produces a backward-Euler-type residual.
The  stationary condition is formally consistent with the Euler-Lagrange equation associated with the JKO minimization \cite{jordan1998variational}
\[
p_{k+1} = \argmin_{\rho \in \mathcal P_2(\mathbb R^d)}
\left\{
\mathcal F[\rho] + \frac{1}{2\Delta t} \d_\mathcal W^2(\rho,p_k)
\right\}.
\]
This establishes the connection between our action-based formulation and implicit variational time-stepping methods such as  Deep JKO \cite{lee2024deep},  JKO-iFlow \cite{xu2023normalizing} and EVNN \cite{hu2024energetic}.

    \end{itemize}

\item \textit{Relation to dynamic optimal transport:}
In the special case $\mathcal F\equiv 0$, the driving force vanishes and \eqref{eqn:J-continuous} reduces to a kinetic-energy minimization over transport maps. In this sense, the proposed formulation is consistent with the Benamou-Brenier dynamic characterization of the Wasserstein-2 distance \cite{Villani2009}.

 

\item \textit{Entropy-driven flows, Fokker-Planck equation, and  score-based diffusion models:}
For $U = U_1(\rho) = \rho \log \rho$, the WGF becomes the Fokker-Planck equation. Minimizing the action \eqref{eqn:J-continuous} is equivalent to:
\begin{equation*}
    \frac{1}{2} \int_0^T \mathbb{E}_{x \sim p_t} \left[ \left\| \mathbf{f}_t(x) - \left( -\nabla \log p_t(x) - \nabla V(x) \right) \right\|^2 \right]  \d t.
\end{equation*}
This is a continuous-time analogue of score matching \cite{hyvarinen2005estimation, song2021scorebased}.

\end{enumerate}

We emphasize that while most of these results are limited to gradient flows corresponding to the diffusion or entropy case of the free energy \eqref{eqn:GF_concrete} with 
$m=1$ only, our action‑minimization framework applies more broadly to gradient flows in the Wasserstein space. This formulation can even characterize transition paths between two distinct local minima \cite{Dawson1983}, although this application  is not further pursued in the present work.

%% file: 4-Geometric.tex
\section{The GenWGP Method for Wasserstein Gradient Flow Converging to Equilibrium Distribution}
\label{sec:GenWGP_geo}

The physical-time path formulation in Section~\ref{sec:GenWGP_finitetime} provides a Lagrangian path loss to learn the gradient-flow dynamics over a specified finite horizon $[0,T]$. 
However, when the purpose is to capture the full relaxation from an initial state $\rho_0$ to an {\it equilibrium} $\rho_\infty$ - namely, a stationary point of the free energy - we suffer from the finite time interval truncation. Without any prior knowledge about the truncation error between $\rho_T$ and the true $\rho_\infty$ due to a finite $T$,  a safe play  is to use a very large $T$. Even though this practically works for classical adaptive minimum action method \cite{aMAM2008}, 
 our  Remark \ref{rem:ada} pointed out the fundamental difficulty of adaptive  time-stepping strategy of repeated   global  redistribution of the temporal mesh and model refitting of normalizing flow. 
In fact, parameterizing the entire Wasserstein gradient flow by physical time  is suboptimal for describing its convergence to an equilibrium. 

The main purpose is to characterize the path by its geometry rather than by the speed at which it is traversed. By reparameterizing the trajectory with an intrinsic variable — such as arc‑length — one removes the explicit dependence on the physical time horizon and transforms the long‑horizon relaxation problem into a finite‑length path optimization problem on the Wasserstein manifold.
In this section, we develop the geometric reformulation in the Wasserstein space, adapting the core principle of the geometric Minimum Action Method \cite{heymann2008geometric} originally formulated in Euclidean space—to this infinite‑dimensional Wasserstein space.

\subsection{Reparameterization-Invariant   Path Formulation of Geometric Action}
\label{ssec:geometric_formulation}

The basic idea is in the spirit of Maupertuis's  principle, which allows  for possible variation in the final time $T$ while keeping the   beginning and end points fixed, in contrast to Hamilton mechanics's principle in Section \ref{sec:GenWGP_finitetime}  with fixed initial state and final time. 
We start with the establishment of a geometric action functional below. 
The optimal relation between the arc-length and the physical time as the result of  the variation of the time  interval $T$ is used  in   Section \ref{ssec:recover_time} to recover the physical time.

\begin{theorem}[Geometric reformulation of the \DW  action function] \label{thm:gMAM_equivalence}

Assume $\rho_a,\rho_b\in \mathcal P_2(\Omega)\cap \mathrm{Dom}(\mathcal F)$.
Let $AC_{\rho_a,\rho_b,T}$ denote the set of absolutely continuous paths (with respect to Wasserstein metric) connecting two distributions from $\rho_a$ to $\rho_b$ over $[0,T]$.  Under the regularity assumptions \ref{ass:gMAM} specified in Appendix \ref{pf:gMAM_equivalence}, the variational problem associated with the time-dependent \DW  action $S_T[p]$ in \eqref{eqn:rate-functional} admits the following geometric variational form: 
\begin{equation} \label{eqn:gMAM_equivalence}
    \inf_{T > 0} \inf_{p \in AC_{\rho_a,\rho_b,T}} S_T[p] = \inf_{p \in AC_{\rho_a,\rho_b,1}} \widehat{S}[p],
\end{equation}
where the \textit{Geometric Action} $\widehat{S}[p]$ is defined on the  interval $\tau \in [0,1]$ by 
\begin{equation}\label{eqn:geo-action}
\widehat S[p]
:=
\begin{cases}
\displaystyle
\int_0^1
\left(
\|\partial_\tau p_\tau\|_{-1,p_\tau}
\|\nabla_{\d_{\mathcal W}}\mathcal F(p_\tau)\|_{-1,p_\tau}
+
\Bigl\langle
\nabla_{\d_{\mathcal W}}\mathcal F(p_\tau),\partial_\tau p_\tau
\Bigr\rangle_{-1,p_\tau}
\right)\d\tau,
& \text{if } p_\tau \text{ is absolutely continuous} \\ & \text{and the integral converges},\\[1.1em]
+\infty,
& \text{otherwise.}
\end{cases}
\end{equation}
\end{theorem}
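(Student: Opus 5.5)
The argument mirrors the geometric minimum action method of \cite{heymann2008geometric}, transplanted to the formal Riemannian structure of $(\mathcal P_2(\Omega),\d_{\mathcal W})$. It has three steps.

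\textbf{Step 1: expand the action and reduce to a pointwise inequality.} Write $a=\|\partial_t p_t\|_{-1,p_t}$ and $b=\|\nabla_{\d_{\mathcal W}}\mathcal F(p_t)\|_{-1,p_t}$. Expanding the square in \eqref{eqn:rate-functional} gives
\[
S_T[p]=\int_0^T\Bigl(\tfrac12 a^2+\tfrac12 b^2+\langle \nabla_{\d_{\mathcal W}}\mathcal F(p_t),\partial_t p_t\rangle_{-1,p_t}\Bigr)\d t .
\]
Since $\tfrac12(a^2+b^2)\ge ab$, we get $S_T[p]\ge \int_0^T\bigl(ab+\langle\cdot,\cdot\rangle\bigr)\d t$. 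The right-hand side is invariant under reparametrization. Indeed, for $p_t=q_{\tau(t)}$ with $\tau$ an increasing absolutely continuous bijection $[0,T]\to[0,1]$, we have $\partial_t p_t=\dot\tau\,\partial_\tau q_\tau$. Both terms are homogeneous of degree one in the velocity, and $\dot\tau\,\d t=\d\tau$. Hence the lower bound equals $\widehat S[q]$, and taking infima yields ``$\ge$'' in \eqref{eqn:gMAM_equivalence}.

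\textbf{Step 2: the reverse inequality by a time change.} Given $q\in AC_{\rho_a,\rho_b,1}$ with $\widehat S[q]<\infty$, choose the time change that equalizes the two norms, namely $\d t/\d\tau=\|\partial_\tau q_\tau\|_{-1,q_\tau}/\|\nabla_{\d_{\mathcal W}}\mathcal F(q_\tau)\|_{-1,q_\tau}$. With this choice $a=b$ along the reparametrized path $p$, so $S_T[p]=\widehat S[q]$ with $T=\int_0^1\|\partial_\tau q\|/\|\nabla\mathcal F\|\,\d\tau$.

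Two issues must be handled. First, $T$ may be infinite. Second, the set where $\nabla_{\d_{\mathcal W}}\mathcal F(q_\tau)=0$ must be treated separately. This happens, for instance, when $\rho_b$ is an equilibrium, which is exactly the case of interest.

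I would use an approximation argument. Replace the ratio by $\|\partial_\tau q\|/\max(\|\nabla\mathcal F\|,\delta)$, which gives a finite $T_\delta$. On the set where $b\ge\delta$ the integrand is exact. On the complementary set it exceeds $ab+\langle\cdot,\cdot\rangle$ by at most $O(\delta)\int_0^1\|\partial_\tau q\|\d\tau$. This excess vanishes as $\delta\to0$, because the Wasserstein length of $q$ is finite. Hence $\inf_T\inf_p S_T\le \widehat S[q]+o(1)$.

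If $q$ pauses ($\partial_\tau q=0$) on some interval, that interval simply contributes zero time. To get an honest absolutely continuous path on $[0,T]$, I would first reparametrize $q$ by Wasserstein arc length.

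\textbf{Step 3: justify the formal manipulations.} This means checking that the $H^{-1}_{p}$ calculus, the measurability of $\tau\mapsto b(\tau)$, and the chain rule for $\mathcal F$ along absolutely continuous curves are valid. This is where Assumptions \ref{ass:gMAM} enter. In particular they should guarantee that $\tau\mapsto \|\nabla_{\d_{\mathcal W}}\mathcal F(q_\tau)\|_{-1,q_\tau}$ is lower semicontinuous and finite, and that the metric slope formalism of \cite{ambrosio2008gradient} applies.

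\textbf{Main obstacle.} I expect the hardest step to be Step 2 near critical points of $\mathcal F$, where the equalizing time change blows up. There the infimum over $T$ is not attained, and the argument has to rely on the $\delta$-regularization. That regularization must in turn be justified by the finiteness of the Wasserstein length together with the integrability furnished by the assumptions.
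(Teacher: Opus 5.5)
Your proposal is correct and follows essentially the same route as the paper. The lower bound comes from AM--GM plus reparametrization invariance, and the upper bound from first passing to constant Wasserstein speed $\ell$ and then using a regularized equalizing time change $\d t/\d\tau = a/(\text{regularized } b)$, whose excess action is $O(\delta\ell)$. The differences are cosmetic: you apply $\tfrac12(a^2+b^2)\ge ab$ pointwise in $t$ (slightly more direct, since it needs no splitting of the integral), whereas the paper rescales linearly and uses $\frac{A}{2T}+\frac{BT}{2}\ge\sqrt{AB}$ followed by Cauchy--Schwarz, and you regularize with $\max(b,\delta)$ where the paper uses $b+\varepsilon$.
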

\begin{proof}
    See Appendix \ref{pf:gMAM_equivalence}.
\end{proof}
Here we slightly abuse the notation: $(p_\tau)_{0\le \tau \le 1} $ and  $(p_t)_{0\le t \le T}$ ($T$ could be infinity) denote    the same  curve $p$, parametized by arc‑length $\tau$ and by physical time $t$, respectively.
We highlight here that 
$\widehat{S}$ is invariant under any reparametrization. So $\tau$ here may  refer  to any curve parameter, not restricted to the arc-length parameter.

The integrand in \eqref{eqn:geo-action} is equivalent to   
$\|\partial_\tau p_\tau\|_{-1,p_\tau}
\|\nabla_{\d_{\mathcal W}}\mathcal F(p_\tau)\|_{-1,p_\tau} (1-\cos \alpha)
$ where $\alpha$ is the angle between the tangent and the negative Wasserstein gradient. When the action $\hat{S}[p]$ is zero, this angle is exactly zero everywhere,
indicating the path $(p_\tau)_{0\le \tau \le 1}$ is indeed the Wasserstein gradient flow.
The second term in \eqref{eqn:geo-action} is the endpoint contribution of the free energy along the path. 
When the terminal state is prescribed, namely $p_0=\rho_a$ and $p_1=\rho_b$, the chain rule in Wasserstein space gives
\begin{equation} \label{eqn:geo-boundary-term}
\int_0^1 \langle \nabla_{\d_\mathcal{W}} \mathcal{F}(p_\tau), \partial_\tau p_\tau \rangle_{-1, p_\tau}  \d\tau
=\int_0^1 \frac{\d}{\d \tau} \mathcal{F}(p_\tau)   \d \tau=
\mathcal{F}(\rho_b)-\mathcal{F}(\rho_a).
\end{equation}
Hence, for fixed $p_0$ and $p_1$, minimizing the geometric action $\widehat{S}[p]$ is equivalent to minimizing only its first term, namely the \textit{Eulerian Geometric Action}:
\begin{equation} \label{eqn:geo-eulerian}
\mathcal{J}_{\text{Euler}}[p] :=\int_0^1
    \left\| \nabla_{\d_\mathcal{W}} \mathcal{F}(p_\tau) \right\|_{-1, p_\tau}
    \left\| \partial_\tau p_\tau \right\|_{-1, p_\tau}  \d\tau,
\end{equation}
which is certainly invariant under any reparameterization of the curve $\{p_\tau\}$.
To implement this geometric principle by particle-based generative models, we utilize the same isometry discussed in Section~\ref{ssec:physical_time} to minimize the equivalent \textit{Lagrangian Geometric Action}:
\begin{equation} \label{eqn:geo-lagrangian}
    \mathcal{J}_{\text{Lagrangian}}[\Phi] := \int_0^1 \left( \mathbb{E}_{z \sim \rho_0} \left\| \mathcal{V}[p_\tau](\Phi(\tau,z)) \right\|^2 \right)^{1/2} \left( \mathbb{E}_{z \sim \rho_0} \left\| \partial_\tau \Phi(\tau,z) \right\|^2 \right)^{1/2}  \d\tau,
\end{equation}
which expresses the same geometric cost 
as $\mathcal{J}_{\text{Euler}}[p]$, but at the level  of particles' path and the velocity fields evaluated along them.

In the present work of searching the geometric path connecting $p_0=\rho_a$ to an {\it unknown} equilibrium $\rho_b$,  the terminal state  $p_{\tau=1}$ is  optimized jointly with the  path losses $\mathcal{J}$ by keeping the additional  terminal free-energy penalty $\mathcal{F}(p_{\tau=1})$.

\subsection{Discrete Geometric Optimization}
We approximate the continuous-integral  action functional by discretizing  $\tau \in [0,1]$ into $K$ equal intervals. Let $p = \{p_k\}_{k=0}^K$ be the sequence of densities (which are referred to as discrete ``images'' in the minimum action method\cite{weinan-MAM2004}), and $\Phi = \{\Phi_k\}_{k=0}^K$ be the corresponding sequence of transport maps, with $\Phi_0 = \text{Id}$ and $p_k = (\Phi_k)_\# \rho_0$.
A natural discretization of the Eulerian geometric action \eqref{eqn:geo-eulerian} approximates $ \left\| \partial_\tau p_\tau \right\|_{-1, p_\tau} \d\tau$ with the Wasserstein distance 
between two neighbors and leads to the following  sum
\begin{equation} \label{eqn:prob_euler}
    \min_{p} \mathcal{J}^K_{\text{Euler}}[p] := \sum_{k=1}^{K} \d_\mathcal{W}(p_k, p_{k-1}) \cdot \frac{\left\| \nabla_{\d_\mathcal{W}} \mathcal{F} (p_{k-1}) \right\|_{-1, p_{k-1}}+\left\| \nabla_{\d_\mathcal{W}} \mathcal{F} (p_k) \right\|_{-1, p_k} }{2},
    \quad \text{s.t.} \quad p_0 = \rho_0, \ p_K = \rho_1.
\end{equation}
Likewise, we obtain the following equivalent discrete Lagrangian problem for $\Phi$:
\begin{equation} \label{eqn:prob_lagrange}
    \min_{\Phi} \mathcal{J}^K_{\text{Lagrangian}}[\Phi] := \sum_{k=1}^K \left\| \Phi_k - \Phi_{k-1} \right\|_{L^2(\rho_0)} \cdot \frac{ \left\| \mathcal{V}[p_{k-1}] \circ \Phi_{k-1} \right\|_{L^2(\rho_0)} + \left\| \mathcal{V}[p_k] \circ \Phi_k \right\|_{L^2(\rho_0)} }{2},
    \quad \text{s.t.} \quad \Phi_0 = \text{Id}, \ (\Phi_K)_\# \rho_0 = \rho_1.
\end{equation}
where $p_k = (\Phi_k)_\# \rho_0$. The consistency between these two formulations is formally established below.

\begin{theorem}[Equivalence of Discrete Formulations] \label{theo:EL_compare}
Let $\rho_0$ and $\rho_1$ be two given probability distributions.
\begin{enumerate}
    \item[(a)] If $\Phi^*$ is a minimizer of the Lagrangian problem \eqref{eqn:prob_lagrange}, then the induced density sequence $p^*_k = (\Phi^*_k)_\# \rho_0$ is a minimizer of the Eulerian problem \eqref{eqn:prob_euler}.
    \item[(b)] Conversely, if $p^*$ is a minimizer of \eqref{eqn:prob_euler}, and $\Psi_k$ is the optimal transport map from $p^*_{k-1}$ to $p^*_k$, then the composite map $\Phi^*_k = \Psi_k \circ \cdots \circ \Psi_1$ is a minimizer of \eqref{eqn:prob_lagrange}.
\end{enumerate}
\end{theorem}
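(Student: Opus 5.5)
The proof compares the two objectives through a pointwise inequality and then closes the gap with optimal transport maps. The key observation is that the two problems share the same ``force'' factors. For any admissible $\Phi$ with $p_k=(\Phi_k)_\#\rho_0$, a change of variables gives
\[
\|\mathcal V[p_k]\circ\Phi_k\|_{L^2(\rho_0)}^2=\int_\Omega|\mathcal V[p_k]|^2p_k\,\d x=\|\nabla_{\d_\mathcal W}\mathcal F(p_k)\|_{-1,p_k}^2 .
\]
The second equality holds because $\mathcal V[p_k]=-\nabla\frac{\delta\mathcal F}{\delta\rho}$ is a gradient and therefore attains the infimum in \eqref{eqn:H-1}. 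Hence the averaged weights $w_k:=\tfrac12(\|\cdot\|_{k-1}+\|\cdot\|_k)\ge 0$ in \eqref{eqn:prob_lagrange} and \eqref{eqn:prob_euler} coincide and depend only on the density sequence $p$.

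Next, the segment lengths are compared. The pair $(\Phi_{k-1},\Phi_k)_\#\rho_0$ is a coupling of $p_{k-1}$ and $p_k$, so $\|\Phi_k-\Phi_{k-1}\|_{L^2(\rho_0)}\ge \d_\mathcal W(p_{k-1},p_k)$. Because $w_k\ge0$, this gives
\[
\mathcal J^K_{\text{Lagrangian}}[\Phi]\ge \mathcal J^K_{\text{Euler}}[(\Phi_k)_\#\rho_0]
\]
for every admissible $\Phi$. The constraint $\Phi_0=\mathrm{Id}$ maps to $p_0=\rho_0$, and $(\Phi_K)_\#\rho_0=\rho_1$ maps to $p_K=\rho_1$.

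The reverse direction uses a Brenier construction. Given any admissible density sequence $p$ with each $p_k$ absolutely continuous, let $\Psi_k$ be the Brenier optimal map from $p_{k-1}$ to $p_k$, and set $\Phi_k=\Psi_k\circ\cdots\circ\Psi_1$. Then $(\Phi_k)_\#\rho_0=p_k$ by induction. Moreover, since $\Phi_{k-1}$ pushes $\rho_0$ to $p_{k-1}$,
\[
\|\Phi_k-\Phi_{k-1}\|_{L^2(\rho_0)}^2=\int|\Psi_k(y)-y|^2p_{k-1}(y)\,\d y=\d_\mathcal W^2(p_{k-1},p_k).
\]
Thus $\mathcal J^K_{\text{Lagrangian}}[\Phi]=\mathcal J^K_{\text{Euler}}[p]$. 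Combining the two directions shows that the infima of the two problems are equal.

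With the infima equal, the two claims follow.

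For (a), take a Lagrangian minimizer $\Phi^*$ and any Euler-admissible $p$. Using the lower bound, then minimality of $\Phi^*$, then the Brenier construction,
\[
\mathcal J_{\text{Euler}}[p^*]\le\mathcal J_{\text{Lagrangian}}[\Phi^*]\le \mathcal J_{\text{Lagrangian}}[\Phi^{p}]=\mathcal J_{\text{Euler}}[p].
\]
So $p^*$ is an Euler minimizer.

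For (b), the composite map $\Phi^*$ built from optimal maps satisfies $\mathcal J_{\text{Lagrangian}}[\Phi^*]=\mathcal J_{\text{Euler}}[p^*]$. For any admissible $\Phi$, the lower bound and Euler-minimality of $p^*$ give
\[
\mathcal J_{\text{Lagrangian}}[\Phi^*]=\mathcal J_{\text{Euler}}[p^*]\le \mathcal J_{\text{Euler}}[(\Phi_k)_\#\rho_0]\le \mathcal J_{\text{Lagrangian}}[\Phi].
\]

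The main obstacle is technical rather than structural. Existence of Brenier maps requires each $p_k$ to be absolutely continuous, which holds on $\mathcal P_2(\Omega)$ as defined in the paper. The force-norm identity requires $\mathcal V[p_k]\in L^2(p_k)$; if this norm is infinite, both objectives are $+\infty$ and the comparison is trivial. One further point needs checking: the Euler problem must range over absolutely continuous densities, so that every admissible $p$ is realizable by maps. I would state these standing assumptions explicitly at the start of the proof.
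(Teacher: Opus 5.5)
Your proof is correct and follows the same route as the paper. The coupling $(\Phi_{k-1},\Phi_k)_\#\rho_0$ gives $\mathcal{J}^K_{\text{Euler}}[p]\le\mathcal{J}^K_{\text{Lagrangian}}[\Phi]$, composing optimal transport maps attains equality, and (a) and (b) follow from these two facts.

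Your version is slightly better in one place. You apply the optimal-map construction to an arbitrary admissible $p$, not only to an Euler minimizer. The paper's proof of (a) equates the two minimum values by assuming an Euler minimizer exists, which is not a hypothesis of (a); your argument avoids that assumption.
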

\begin{proof}
    See Appendix \ref{pf:EL_compare}.
\end{proof}

We parameterize $\Phi_k$ using Normalizing Flows and approximate the $L^2(\rho_0)$ expectations in Eq.~\eqref{eqn:prob_lagrange} via Monte Carlo sampling. 
We finally have the trainable \textit{Empirical Discrete Geometric Loss} for  a batch of data $\{z_i\}_{i=1}^N \sim \rho_0$ as follows:
\begin{equation} \label{eqn:discrete-geo-loss}
\hat{J}^K_N[\Phi] := \sum_{k=1}^K d_k(\Phi) \cdot \frac{v_{k-1}(\Phi) + v_k(\Phi)}{2},
\end{equation}
where $d_k(\Phi)$ and $v_k(\Phi)$ are the batch estimators:
\begin{equation} \label{eqn:dk_vk_def}
    d_k(\Phi) := \left( \frac{1}{N} \sum_{i=1}^N \left\| \Phi_k(z_i) - \Phi_{k-1}(z_i) \right\|^2 \right)^{1/2}, \quad
    v_k(\Phi) := \left( \frac{1}{N} \sum_{i=1}^N \left\| \mathcal{V}_N[p_k](\Phi_k(z_i)) \right\|^2 \right)^{1/2}.
\end{equation}

\begin{theorem}[Consistency of the discrete geomtric objective]\label{theo:disc_conv}
Let $\hat{J}[\Phi]$ be the continuous geometric action \eqref{eqn:geo-lagrangian}, and let $\hat{J}_N^K[\Phi]$ be the discrete empirical objective \eqref{eqn:discrete-geo-loss}. Under the regularity and moment assumptions stated in Appendix~\ref{pf:disc_conv}, one has
\[
\mathbb{E}\big|\hat{J}_N^K(\Phi)-\hat{J}(\Phi)\big|
=
\mathcal{O}(K^{-2})+\mathcal{O}(N^{-1/2}).
\]
In particular, the geometric discretization is second-order accurate in the number of 
path segments $K$ (equivalent to the number of layers in the neural networks), up to the Monte Carlo sampling error.
\end{theorem}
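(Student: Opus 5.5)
The plan is to split the error with the triangle inequality into a deterministic discretization part and a stochastic sampling part:
\[
\mathbb{E}\big|\hat J_N^K(\Phi)-\hat J(\Phi)\big|\le \big|J^K(\Phi)-\hat J(\Phi)\big| + \mathbb{E}\big|\hat J_N^K(\Phi)-J^K(\Phi)\big|.
\]
Here $J^K(\Phi)$ denotes the population discrete objective $\mathcal{J}^K_{\text{Lagrangian}}[\Phi]$ in \eqref{eqn:prob_lagrange}. In it $d_k$ and $v_k$ are replaced by their exact versions $D_k:=\|\Phi_k-\Phi_{k-1}\|_{L^2(\rho_0)}$ and $W_k:=\|\mathcal V[p_k]\circ\Phi_k\|_{L^2(\rho_0)}$. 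Throughout I take $\Phi_k=\Phi(\tau_k,\cdot)$ with $\tau_k=k/K$, so that $\Phi$ is a sampled smooth curve. The assumptions I would use are: $\Phi\in C^3$ in $\tau$ with bounded moments of $\partial_\tau^j\Phi$; $\tau\mapsto \mathcal V[p_\tau]\circ\Phi_\tau$ is $C^2$ into $L^2(\rho_0)$; and the speed $s(\tau):=\|\partial_\tau\Phi_\tau\|_{L^2(\rho_0)}$ and force $w(\tau):=\|\mathcal V[p_\tau]\circ\Phi_\tau\|_{L^2(\rho_0)}$ are bounded below by some $c>0$, which keeps the square roots smooth.

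\textbf{Deterministic part.} Taylor expand about the midpoint $\tau_{k-1/2}$ with $h=1/K$:
\[
\Phi_k-\Phi_{k-1}=h\,\partial_\tau\Phi(\tau_{k-1/2})+\tfrac{h^3}{24}\partial_\tau^3\Phi+\dots
\]
The even-order terms cancel. Hence $D_k=h\,s(\tau_{k-1/2})+\mathcal O(h^3)$, using $s\ge c$ to linearize the norm. Likewise $\tfrac12(W_{k-1}+W_k)=w(\tau_{k-1/2})+\mathcal O(h^2)$ by the trapezoid/midpoint relation for a $C^2$ function $w$; this uses $w\ge c$ so that $w=\sqrt{w^2}$ is $C^2$. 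The $k$-th summand is then $h\,s(\tau_{k-1/2})w(\tau_{k-1/2})+\mathcal O(h^3)$. Summing the $K$ terms gives the midpoint rule for $\int_0^1 s\,w\,\d\tau=\hat J(\Phi)$, whose error is $\mathcal O(h^2)$. The local remainders contribute $K\cdot\mathcal O(h^3)=\mathcal O(K^{-2})$, so this part is $\mathcal O(K^{-2})$.

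\textbf{Sampling part.} Write $d_k^2$ as an empirical mean of i.i.d.\ variables with mean $D_k^2$. Since $|\sqrt a-\sqrt b|\le |a-b|/\sqrt b$,
\[
\mathbb E|d_k-D_k|\le D_k^{-1}\,\mathrm{Var}\big(\|\Phi_k-\Phi_{k-1}\|^2\big)^{1/2}N^{-1/2}.
\]
Because $\|\Phi_k-\Phi_{k-1}\|\le h\sup|\partial_\tau\Phi|$ and $D_k\ge ch$, this bound is $\mathcal O(h\,N^{-1/2})$. For $v_k$ there is a second source of error: $\mathcal V_N[p_k]$ replaces $W*p_k$ by a $U$-statistic-like empirical convolution built from the same batch. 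I would split
\[
v_k-W_k=\big(v_k-\tilde v_k\big)+\big(\tilde v_k-W_k\big),
\]
where $\tilde v_k$ uses the exact field $\mathcal V[p_k]$ on the samples. The second bracket is $\mathcal O(N^{-1/2})$ by the same variance argument. For the first, I would use $|v_k-\tilde v_k|\le \big(\tfrac1N\sum_i|\nabla W*(p_k^N-p_k)(x_k^{(i)})|^2\big)^{1/2}$ together with a bounded-second-moment assumption on $\nabla W$ (e.g.\ $\nabla W$ Lipschitz or bounded), which gives $\mathcal O(N^{-1/2})$ after accounting for the $\mathcal O(1/N)$ diagonal self-interaction term. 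The density $p_k$ is computed exactly from \eqref{eqn:change-of-variable}, so it introduces no error.

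\textbf{Combining.} Expand
\[
d_k\bar v_k-D_k\bar W_k=(d_k-D_k)\bar v_k+D_k(\bar v_k-\bar W_k),
\]
with bars denoting endpoint averages, and use Cauchy--Schwarz with bounded second moments of $\bar v_k$. Summing over $k$ gives $\sum_k \mathcal O(h N^{-1/2})=\mathcal O(N^{-1/2})$, uniformly in $K$.

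The \textbf{main obstacle} is this uniformity in $K$. A naive bound of $\mathcal O(N^{-1/2})$ per segment would accumulate to $\mathcal O(KN^{-1/2})$. Avoiding this requires that the relative fluctuation of $d_k$ stays $\mathcal O(N^{-1/2})$, which is exactly where the lower bound $D_k\gtrsim h$ is needed. This lower bound would be stated as an assumption (nondegenerate speed), consistent with the constant-speed arc-length regularization. A secondary difficulty is the batch dependence inside $\mathcal V_N$, which the splitting above is designed to handle.
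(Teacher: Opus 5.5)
Your proposal is correct. Its skeleton is the same as the paper's proof: the midpoint rule for $\int_0^1 A B\,\mathrm{d}\tau$, Taylor expansion at $\tau_{k-1/2}$ of the endpoint average of the force norms and of the centred difference quotient, Cauchy--Schwarz on the product of the two discrete factors, and summation of $K$ local errors of size $\mathcal{O}(h^3)+\mathcal{O}(hN^{-1/2})$. Your global split through the population objective $J^K$, instead of the paper's per-interval error $\mathcal{E}_k$ that mixes both effects, is only a cosmetic difference.

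The substantive difference is in the hypotheses. The paper postulates $A,B\in C^2([0,1])$ and the Monte Carlo bounds $\mathbb{E}|v_j-A(\tau_j)|^2\le C_A^2N^{-1}$ and $\mathbb{E}|d_k/\Delta\tau-\widetilde B_k|^2\le C_B^2N^{-1}$, with $K$-independent constants. You instead derive these from more primitive assumptions, and this buys two things:
\begin{itemize}
\item Your lower bounds $s,w\ge c$ give the $C^2$ regularity of the norms and turn the $N^{-1/2}$ fluctuation of $d_k^2$ into a relative $N^{-1/2}$ fluctuation of $d_k$. This makes explicit that the $K$-uniformity you flag as the main obstacle is exactly what the paper's assumption on the normalized estimator $\hat B_k^N=d_k/\Delta\tau$ takes for granted. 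Without a nondegenerate speed, the elementary bound $|d_k-D_k|\le|d_k^2-D_k^2|^{1/2}$ only yields an $N^{-1/4}$ rate.
\item Your splitting into $v_k-\tilde v_k$ and $\tilde v_k-W_k$ explicitly handles the batch-dependent empirical convolution (and self-interaction term) inside $\mathcal{V}_N$. The paper's assumed bound on $\hat A_j^N$ silently absorbs this.
\end{itemize}

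Two small corrections.
\begin{itemize}
\item In the combining step, Cauchy--Schwarz needs the second moment $\mathbb{E}|d_k-D_k|^2=\mathcal{O}(h^2N^{-1})$, not the first moment you wrote. It follows from the same inequality $|d_k-D_k|\le|d_k^2-D_k^2|/D_k$.
\item The expansion $D_k=h\,s(\tau_{k-1/2})+\mathcal{O}(h^3)$ needs only that the norm is $1$-Lipschitz, as in the paper. The lower bound $s\ge c$ is really used for the smoothness of $s$ in the midpoint rule and in the sampling step.
\end{itemize}
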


\begin{proof}
See Appendix \ref{pf:disc_conv}.
\end{proof}

\subsection{Arc-length Parametrization as Regularization in  Training Algorithm}

However, 
direct optimization of the geometric objective can lead to a numerical artifact: degenerate parametrization, where many of the 
$K$ discrete images cluster in a small portion of the path. This occurs because the action $\widehat{S}$
  is invariant under any parametrization, including numerically pathological ones.
To obtain a stable and informative discretization,  we impose     the Wasserstein arc-length parametrization, which is associated with  a \textit{constant-speed constraint}, $\|\partial_\tau p_\tau\|_{-1} \equiv \text{const}$.
In the discrete setting, this is enforced by a variance penalty on the segment lengths $\{d_k\}$ computed via \eqref{eqn:dk_vk_def}:
\begin{equation} \label{eqn:arc_penalty}
\mathcal{L}_{\text{arc}}[\Phi] = \frac{\text{Var}(d_1, d_2, \dots, d_K)}{\text{Mean}(d_1, d_2, \dots, d_K)},
\end{equation}
which is one of standard strategies in adaptive minimum action method \cite{aMAM2008,StringNET2026} to ensure the even distance  $\{d_k\}$ along the path .

Because the equilibrium 
$\rho_b$ is unknown, we employ a penalized  terminal cost for $p_K$, as if our objective were solely to locate this equilibrium rather than the entire Wasserstein gradient flow  path. Specifically, the terminal density 
$p_K$ is treated as an optimization variable, and the penalty 
$\mathcal{F}(p_K)$ is introduced 
\begin{equation}  \label{eqn:terminal_loss}
    \begin{aligned}
    \mathcal{F}(p_K)
    &= \mathbb{E}_{x \sim p_K} \left[ \beta^{-1} \frac{U(p_K(x))}{p_K(x)} + V(x) + \frac{1}{2} \mathbb{E}_{y \sim p_K} [ W(x - y) ] \right] \\
    &= \mathbb{E}_{z \sim \rho_0} \left[ \beta^{-1} \frac{U(p_K(\Phi_K(z)))}{p_K(\Phi_K(z))} + V(\Phi_K(z)) + \frac{1}{2} \mathbb{E}_{z' \sim \rho_0} [ W(\Phi_K(z) - \Phi_K(z')) ] \right]
\end{aligned}
\end{equation}
to drive  this endpoint toward a low-energy terminal state. 
The final  training objective   then consists of the following three contributions :
\begin{equation}  \label{eqn:total_geo_loss}
\mathcal{L}_{\text{total}} = \hat{J}^K_N[\Phi] + \alpha_{\text{term}} \mathcal{F}(p_K) + \alpha_{\text{arc}} \mathcal{L}_{\text{arc}}[\Phi],
\end{equation}
where $\alpha_{\text{term}}$ and $\alpha_{\text{arc}}$ are penalty parameters. The complete training procedure is summarized in Algorithm~\ref{alg:gMAM}.

\begin{algorithm}[H]
\caption{GenWGP: Geometric Path}
\label{alg:gMAM}
\begin{algorithmic}[1]
\REQUIRE NF parameters $\{\theta_k\}_{k=1}^K$, initial distribution $\rho_0$, steps $K$, weights $\alpha_{\text{term}}, \alpha_{\text{arc}}$.
\FOR{each training iteration}
    \STATE Sample a batch of $N$ particles $\{z_i\}_{i=1}^N \sim \rho_0$.
    \STATE Compute paths $x_k^{(i)} = \Phi_k(z_i)$ and densities $p_k(x_k^{(i)})$ via Eq.\eqref{eqn:change-of-variable}.
    \STATE Compute segment lengths $d_k$ and force magnitudes $v_k$ via Eq.~\eqref{eqn:dk_vk_def}.
    \STATE Evaluate Geometric Loss $\hat{J}^K_N[\Phi]$ via Eq.~\eqref{eqn:discrete-geo-loss}.
    \STATE Evaluate Regularizers: terminal energy $\mathcal{F}(p_K)$ via Eq.~\eqref{eqn:terminal_loss} and arc-length penalty $\mathcal{L}_{\text{arc}}[\Phi]$ via Eq.~\eqref{eqn:arc_penalty}.
    \STATE Update parameters $\{\theta_k\}$ via gradient descent on $\mathcal{L}_{\text{total}}$ \eqref{eqn:total_geo_loss}.
\ENDFOR
\RETURN Optimized NF parameters.
\end{algorithmic}
\end{algorithm}

\begin{remark}[Control of Lipschitz Regularity by  transport cost]
\label{rmk:spatial_smoothness}
\cite{huang2023bridging} indicates that  minimizing the transport cost effectively controls the Lipschitz constant of the learned flow, which benefits robustness and generalization.
 Invoking the triangle inequality, the divergence between   two particles $x, y$ at the final map $\Phi_K$ is bounded by the accumulated transport cost:
\begin{equation} \label{eqn:regularity_bound}
    \begin{aligned}
    \|\Phi_K(x) - \Phi_K(y)\| &= \left\| (x - y) + \sum_{k=1}^{K} (\Phi_{k}(x) - \Phi_{k-1}(x)) - \sum_{k=1}^{K} (\Phi_{k}(y) - \Phi_{k-1}(y)) \right\| \\
    &\leq \|x - y\| + \sum_{k=1}^{K} \|\Phi_{k}(x) - \Phi_{k-1}(x)\| + \sum_{k=1}^{K} \|\Phi_{k}(y) - \Phi_{k-1}(y)\|,
    \end{aligned}
\end{equation}
where the summation terms represent the discrete path lengths. 
Our geometric action \eqref{eqn:prob_lagrange} is a weighted total arc-length length, where the weights encode the contributions of movements against the gradient flow. 
So, the geometric action $\hat{J}^K_N$ 
could be understood as a  regularization, similar to 
 the right-hand side of \eqref{eqn:regularity_bound},  to    enhance the regularity of the trained 
diffeomorphic map.
\end{remark}

\subsection{Recovering Physical Time from the Geometric Path}
\label{ssec:recover_time}
The geometric formulation in Section~\ref{ssec:geometric_formulation} produces a WGP $p_\tau$ parameterized by an intrinsic geometric variable $\tau \in [0,1]$, implicitly performing the optimal  adaptive discretization of the physical time march while   
hiding  the temporal evolution.
But 
from any initial $p_0$ (which is not a stationary point of $\mathcal{F}$), we can indeed  recover the original physical-time dynamics $p_t$
from our geometric path $p_\tau$, at least up to the last second one, $p_{K-1}$.

\paragraph{Time-Rescaling Relation.}
The zero-action trajectory follows the Wasserstein gradient flow $\partial_t p_t = -\nabla_{\d_\mathcal{W}} \mathcal{F}(p_t)$ in physical time $t$. 
Let $t(\tau)$ be the strictly increasing map from arc-length parameter $\tau$ to physical time $t$,
then by the chain rule, we obtain the the geometric ``velocity''  as 
\begin{equation} \label{eqn:tangent_relation}
    \partial_\tau  {p}_\tau = \frac{\d t}{\d\tau} \partial_t p_{t} = - \frac{dt}{\d \tau} \nabla_{\d_\mathcal{W}} \mathcal{F}(p_\tau).
\end{equation}
Taking the Wasserstein tangent norm $\|\cdot\|_{-1, p_\tau}$ on both sides, we obtain the scalar differential equation governing the time mapping:
\begin{equation} \label{eqn:scalar_ode}
    \|\partial_\tau p_\tau\|_{-1, p_\tau} = \frac{\d t}{\d \tau} \|\nabla_{\d_\mathcal{W}} \mathcal{F}(p_\tau)\|_{-1, p_\tau}.
\end{equation}


A key feature of the geometric training (Algorithm \ref{alg:gMAM}) is the regularization of the arc-length speed, so the path  $p_\tau$ satisfies the condition of the arc-length parametrization $\|\partial_\tau p_\tau\|_{-1, p_\tau} \approx c$ for some constant (i.e., total length) $c > 0$. 
Substituting this into \eqref{eqn:scalar_ode} allows us to solve for the time scaling factor:
\begin{equation} \label{eqn:dtdtau_final}
    \frac{\d t}{\d \tau} = \frac{c}{\|\nabla_{\d_\mathcal{W}} \mathcal{F}(p_\tau)\|_{-1, p_\tau}} = \frac{c}{\|\mathcal{V}[p_\tau]\|_{p_\tau}}.
\end{equation}
Equation~\eqref{eqn:dtdtau_final} offers a clear physical interpretation: the physical time lapse $\d t$ required to traverse a fixed geometric distance $\d \tau$ is inversely proportional to the magnitude of the driving force. 
Crucially, near equilibrium or metastable states where $\|\mathcal{V}\| \ll 1$, the derivative $\d t/\d \tau$ naturally becomes large. This allows the method to capture the ``slow tail'' of the relaxation process accurately without the computational burden of infinitesimal time-stepping required by Eulerian solvers.

\paragraph{Determination of the Time Constant.}
The constant $c$ represents the total path length in the Wasserstein metric and fixes the global time scale. It is determined from the free-energy dissipation identity along the geometrically parameterized path.
By \eqref{eqn:tangent_relation}, the rate of free energy dissipation along the geometric path is:
\begin{equation}
    \frac{d\mathcal{F}}{\d \tau} = \langle \nabla_{\d_\mathcal{W}} \mathcal{F}[p_\tau], \partial_\tau p_\tau \rangle_{-1, p_\tau}
   = - \frac{\d t}{\d \tau} \|\nabla_{\d_\mathcal{W} }\mathcal{F}[p_\tau]\|_{-1,p_\tau}^2 
    = -c \|\mathcal{V}[p_\tau]\|_{p_\tau}.
\end{equation}
Integrating both sides over $\tau \in [0,1]$ yields the formula for $c$:
\begin{equation} \label{eqn:c_final}
    c = \frac{\mathcal{F}(p_{\tau=0}) - \mathcal{F}(p_{\tau=1})}{\int_0^1 \|\mathcal{V}[p_\tau]\|_{p_\tau}  \d\tau}.
\end{equation}
This ensures that the reconstructed time evolution exactly matches the total free energy difference specified by the boundary conditions.

\paragraph{Numerical Reconstruction.}
Given the discrete sequence of transport maps $\{\Phi_k\}_{k=0}^K$ produced by the Normalizing Flow, we estimate the velocity magnitudes $v_k \approx \|\mathcal{V}[p_k]\|$ using the batch estimator defined in Eq.~\eqref{eqn:dk_vk_def}.
We approximate the integrals using the trapezoidal rule. First, the constant $c$ is estimated by \eqref{eqn:c_final}
\begin{equation}
\label{eqn:discrete_c}
    c \approx \frac{\mathcal{F}(p_0) - \mathcal{F}(p_K)}{\sum_{k=1}^K \frac{v_{k-1} + v_k}{2} \Delta\tau},
\end{equation}
where $\Delta\tau = 1/K$. Subsequently, the physical time increments $\Delta t_k = t_k - t_{k-1}$ are recovered by the mid-point scheme:
\begin{equation} \label{eqn:discrete_dt}
    \Delta t_k \approx \int_{(k-1)\Delta\tau}^{k\Delta\tau} \frac{c}{\|\mathcal{V}[p_\tau]\|_{p_\tau}}  d\tau \approx \frac{c  \Delta\tau}{2} \left( \frac{1}{v_{k-1}} + \frac{1}{v_k} \right).
\end{equation}
This procedure is summarized in Algorithm~\ref{alg:recover_wasserstein_flow}.
This reconstruction of the physical time is quite accurate up the last second   distribution $p_{K-1}$  on the path; the terminal distribution is the equilibrium state, taking infinitely long time to reach in theory.

\begin{algorithm}[H]
\caption{Recover Physical Time from Geometric Path}
\label{alg:recover_wasserstein_flow}
\begin{algorithmic}[1]
\REQUIRE Trained NF parameters $\{\theta_k\}_{k=0}^{K}$, initial distribution $\rho_0$.
\STATE Compute boundary energies $\mathcal{F}_0 = \mathcal{F}((\Phi_0)_\#\rho_0)$ and $\mathcal{F}_K = \mathcal{F}((\Phi_K)_\#\rho_0)$.
\STATE Estimate velocity norms $v_k$ for $k=0,\dots,K$ using batch samples via Eq.~\eqref{eqn:dk_vk_def}.
\STATE Compute path length constant $c$ via discrete approximation of Eq.~\eqref{eqn:discrete_c}.
\STATE Initialize $t_0 \leftarrow 0$.
\FOR{$k=1$ to $K$}
    \STATE Compute time step $\Delta t_k \leftarrow \frac{c \Delta\tau}{2}(v_{k-1}^{-1} + v_k^{-1})$.
    \STATE Update physical time $t_k \leftarrow t_{k-1} + \Delta t_k$.
\ENDFOR
\RETURN Physical timestamps $\{t_k\}_{k=0}^K$.
\end{algorithmic}
\end{algorithm}

\begin{remark} \label{rm4}
Our numerical recover of the physical time using Algorithm \ref{alg:recover_wasserstein_flow} also also yields an approximate terminal time   $t_K$,  even though the theoretical time required to reach equilibrium is infinite. 
Nevertheless, this numerical $t_K$ is  practically meaningful:  it indicates that the  time interval $[0,t_K]$   is sufficiently long  for  the gradient flow to approach equilibrium and for the free energy to converge close to its minimal value. Consequently, the setup of the  terminal $T\approx t_K$ (or between  $t_{K-1}$ and $ t_K$) - together with the entire recovered time mesh $(t_k)_{0\le k\le K-1})$ - can  be directly used  in the physical-time path optimization Algorithm \ref{alg:MAM}  as an optimal adaptive time mesh.
This allows refinement of the path within an practically optimal interval without requiring any change to the network architecture.
\end{remark}

%% file: 5-Numerical.tex
\section{\label{sec:Numer} Numerical examples}

Our numerical examples focus on the validation and application of the geometric GenWGP approach (Algorithm~\ref{alg:gMAM}), together with its time-recovery postprocessing (Algorithm~\ref{alg:recover_wasserstein_flow}). Unlike time-marching methods that operate on a prescribed finite horizon $T$, our goal is to approximate the full Wasserstein gradient flow toward equilibrium and to assess the learned path not only at the terminal state but also along the  evolution in physical time.

Section \ref{sec:Numer} is organized to validate one central numerical claim: the geometric GenWGP formulation provides a more effective representation of long-time relaxation than uniform physical-time discretization, while retaining accurate recovered dynamics on the transient regime.
We begin with analytically tractable Fokker–Planck examples, where exact solutions allow direct verification of both recovered trajectories and terminal states. We then perform matched comparisons with the physical-time formulation under identical architectures and training setups, so that the effect of geometric parametrization and time recovery can be isolated cleanly. Finally, for non-convex and interacting-particle systems where full transient references are unavailable or only partially reliable, we use partial-reference comparisons and structure-preserving diagnostics to test whether the learned path remains dynamically meaningful.

Unless otherwise specified, training samples are drawn from the standard Gaussian base distribution $\mathcal{N}(x;0,I_d)$ with $N$ particles. Models are trained using Adam with exponential learning-rate decay. Our  Python implementation is available at \href{https://github.com/cliu687/GenWGP}{GitHub}.

\subsection{Diffusion Process: The Fokker-Planck Equation}

We begin with entropy-driven dynamics associated with the free energy
$
\mathcal{F}(\rho) = \int \rho(x)\log\rho(x)\d x + \int V(x)\rho(x)\d x,
$
which corresponds to the Fokker-Planck equation at $\beta=1$. In this subsection, the availability of exact reference solutions allows for rigorous   direct quantitative validation of both the recovered physical-time dynamics and the terminal equilibrium state.
\subsubsection{Quadratic Potentials}
We consider convex quadratic potentials
$
V(x)=\tfrac12(x-\mu)^\top\Sigma^{-1}(x-\mu).
$
The corresponding WGF is the Ornstein-Uhlenbeck dynamics
$
\mathrm{d}X_t=-\Sigma^{-1}(X_t-\mu)\mathrm{d}t+\sqrt{2}\mathrm{d}B_t,
$
whose   law is $\mathcal{N}(\mu(t),\Sigma(t))$ given by 
\begin{equation*}
\mu(t)=\mu-(\mu-\mu(0))e^{-\Sigma^{-1}t},\qquad
\Sigma(t)
= e^{-\Sigma^{-1}t}\Sigma(0)e^{-\Sigma^{-1}t}
  + \Sigma\bigl(I-e^{-2\Sigma^{-1}t}\bigr).
\end{equation*}
and   $\mu,\Sigma$  are the 
equilibrium mean and covariance, respectively.

These examples provide the cleanest setting for quantitative validation, since both the  trajectory and the equilibrium state are explicitly known. We test the method on three cases: (i) a 2D isotropic potential, (ii) a 2D anisotropic potential, and (iii) a 10D block-structured potential.

We use $N=5000$ particles. The transport map is parameterized by a RealNVP normalizing flow with $K=9$ affine coupling ``layers''. The  coupling sub-network for each layer is a four-layer MLP of width $128$, with LeakyReLU activations and a final $\tanh$ scale head. Training uses $1000$ epochs, an initial learning rate $8\times10^{-4}$, and exponential decay factor $\gamma=0.9999$.

\paragraph{2D isotropic diffusion.}

\begin{figure}[pos=!htbp]
    \centering
\includegraphics[width=0.4\linewidth]{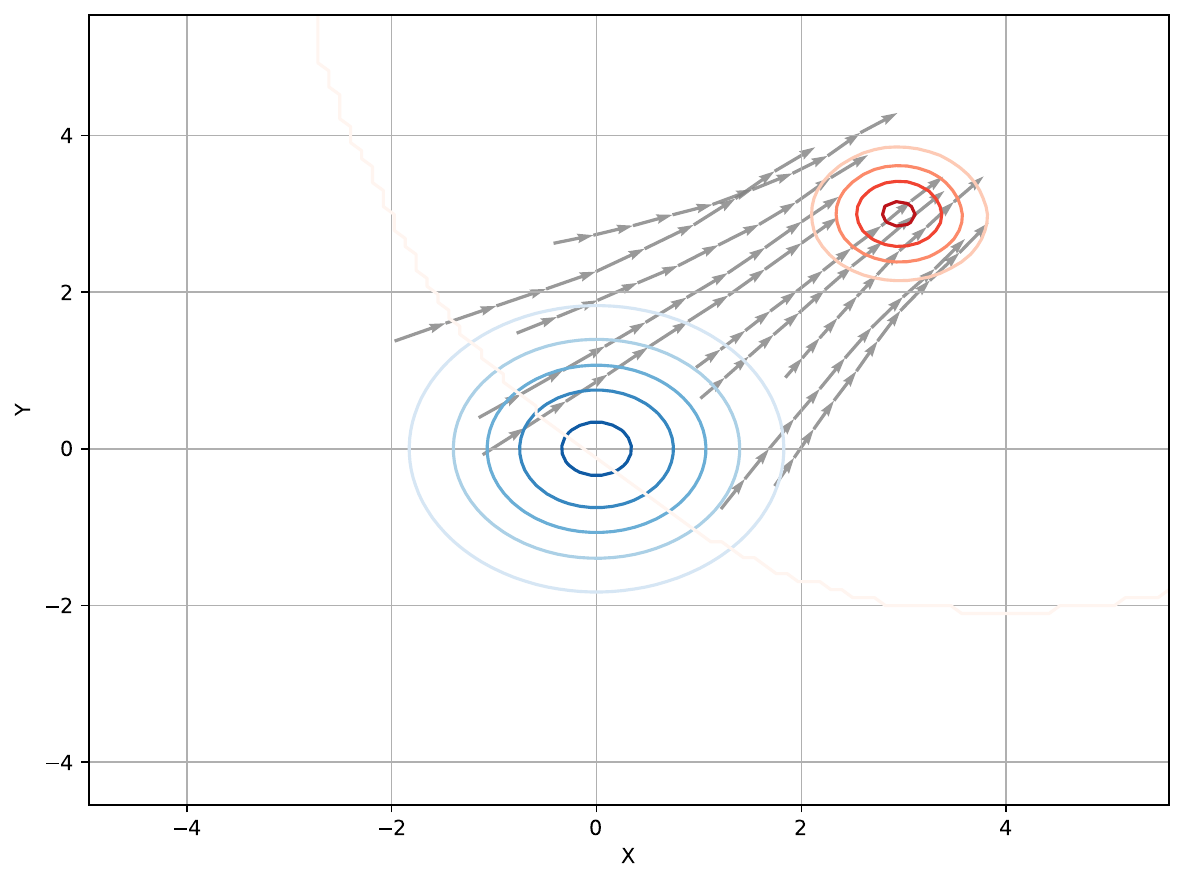}
    \caption{Transport map learned by Algorithm~\ref{alg:gMAM} with $K=9$ layers for the 2D isotropic Gaussian case. The  flow map (indicated by the arrows) transports the initial density toward equilibrium while preserving isotropic structure.}
    \label{fig:quad2d:transport}
\end{figure}

With the target $\rho_\infty=\mathcal{N}(x;[3,3]^\top,0.25I_2)$ and base $\rho_0=\mathcal{N}(0,I_2)$, the exact solution remains Gaussian with
$
\mu(t)=\mu(1-e^{-4t})
$
and
$
\Sigma(t)=0.25I_2+0.75e^{-8t}I_2.
$
We first show the results of GenWGP from Algorithm~\ref{alg:gMAM}. 
Fig.~\ref{fig:quad2d:transport} illustrates the   transport map learned by Algorithm~\ref{alg:gMAM} with $K=9$ stacked layers, which produces a smooth contraction flow toward equilibrium. 
Fig.~\ref{fig:quad2d:time}  validates the learned curve is indeed nearly an arc-length parametrized Wasserstein gradient flow: the panel (a) shows that the Wasserstein distance 
between each neighboring layers (segment length) is well maintained close to  constant, indicating a good quality of  arc-length parametrization; 
the panel (b) confirms that  the cosine alignment between $\partial_\tau p_\tau $ and $-\nabla_{\d_\mathcal{W}}\mathcal{F}(p_\tau)$ remains close to one,   verifying  the key parallel condition for the gradient flow: $ \partial_\tau p_\tau \propto -\nabla_{\d_\mathcal{W}}\mathcal{F}(p_\tau)$.
These two quantities are  numerically computed  as in Eqn.   \eqref{eqn:prob_lagrange} as follows  $\partial_\tau p_{\tau_k}\approx \frac{ \|\Phi_{k+1}-\Phi_{k}\|_{L^2(\rho_0)}  } {\tau_{k+1}-\tau_k}$ and 
$-\nabla_{\d_\mathcal{W}}\mathcal{F}(p_{\tau_k})\approx \frac{ \left\| \mathcal{V}[p_{k+1}] \circ \Phi_{k+1} \right\|_{L^2(\rho_0)} + \left\| \mathcal{V}[p_k] \circ \Phi_k \right\|_{L^2(\rho_0)} }{2}$.

\begin{figure}[pos=!htbp]
    \centering
     \subfigure[Arc-length (segment norm) between each pair of neighbouring layers]{%
        \includegraphics[width=0.41\linewidth]{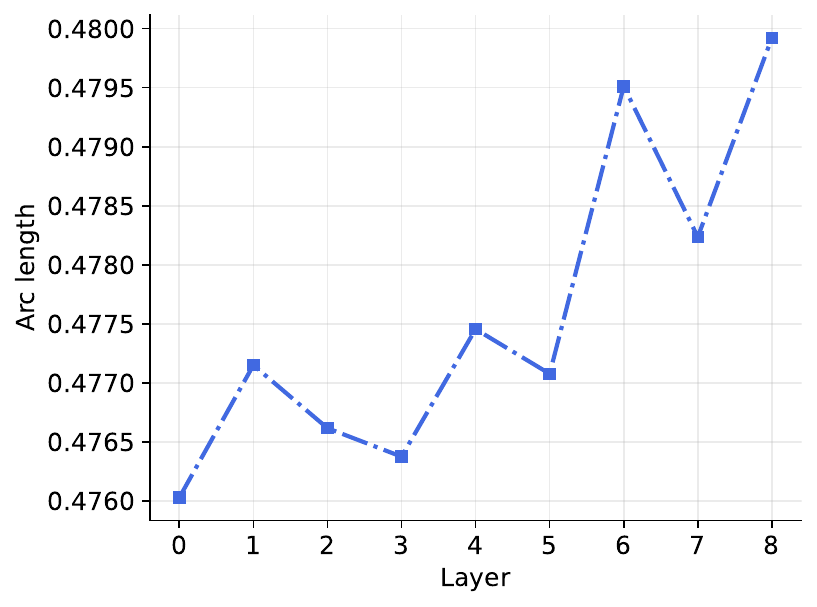}
    }
    \subfigure[Cosine alignment between tangent and negative gradient  at each  layer]{%
        \includegraphics[width=0.41\linewidth]{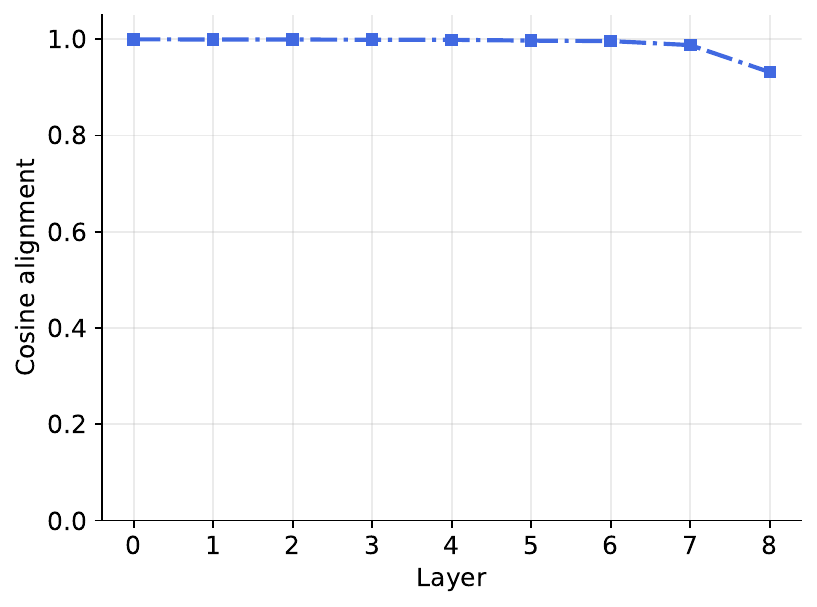}
    }
         \subfigure[Density snapshots (numerical  vs.\ exact pdf at recovered times)]{%
\includegraphics[width=0.9\linewidth]{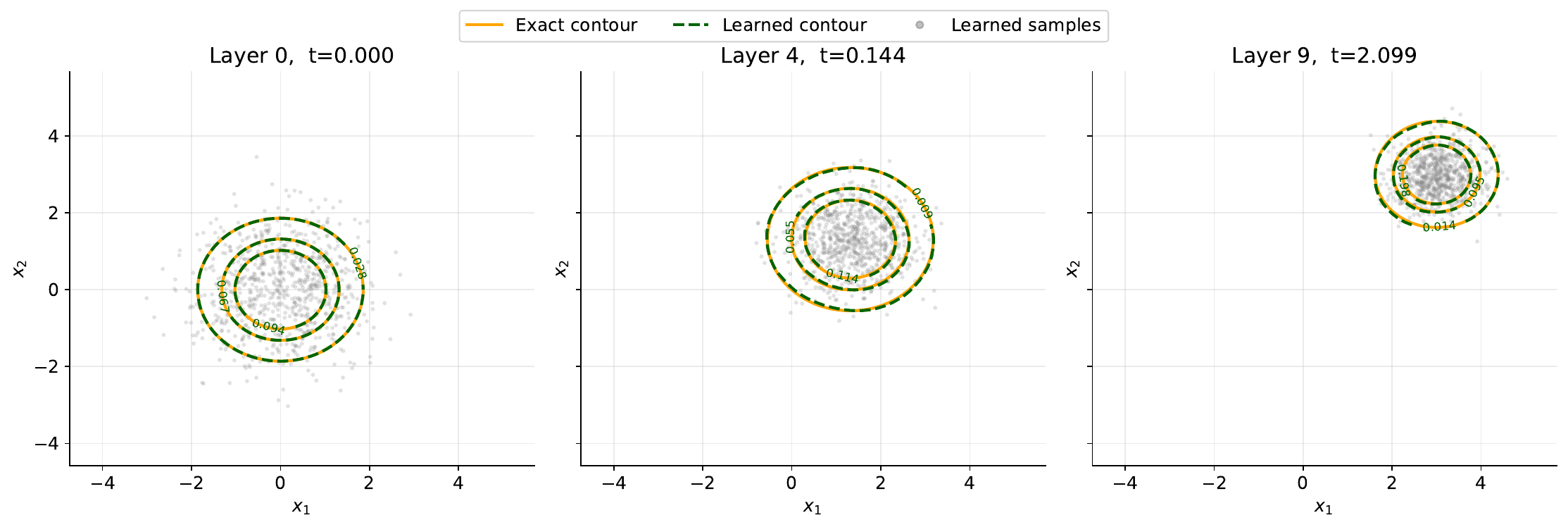}}

    \caption{Validation of the learned geometric path for the 2D isotropic Gaussian example. (a): nearly constant segment lengths indicate approximate arc-length parametrization; (b): cosine alignment close to one is consistent with the gradient-flow direction; and (c) the density snapshots at two selected times agree well  with the exact solution at the recovered physical times.}

    \label{fig:quad2d:time}
\end{figure}

To compare with  the true gradient flow, we recover the physical times $t_0, t_1, \ldots, t_{K-1}$ using   Algorithm~\ref{alg:recover_wasserstein_flow} from the  geometric path $p_{\tau_k}$ ($0\le k\le K$),
thereby aligning the numerical solution with the true solution as a function of time.  Fig.~\ref{fig:quad2d:time}(c) shows the contours of the density snapshots at two selected physical times, comparing the numerical and true solutions. Excellent agreement is observed, demonstrating that our geometric GenWGP accurately captures the dynamics even when evaluated in terms of physical time.

We also  compare the  geometric  formulation (Algorithm~\ref{alg:gMAM})  with its physical-time counterpart (Algorithm~\ref{alg:MAM}) (using $T=1$) under the same network architecture ($K=9$) and training setup, demonstrating the consistency of the two numerical gradient paths while highlighting their distinct characteristics.
Fig. \ref{fig:quad2d:index}a presents that the physical-time method uses a uniform discretization on $[0,T]$ with $T=1$ which is sufficiently large here to approach the equilibrium,  whereas the geometric method recovers  a non-uniform time mesh  but  adopts uniform in Wasserstein arc-length. 
The results in the panel (b)(c)   in Fig.~\ref{fig:quad2d:index} are  the decay of free energy $\mathcal{F}(p)$ in terms of $t$ and $\tau$ respectively, for the 
numerical paths from these two methods and the truth WGF. 
 In particular, more images are placed in the early stage where the free energy decays rapidly, leading to a more balanced distribution of resolution along the relaxation path.  The gap of free energy between two neighboring discrete layers is more even in the geometric approach than the uniform physical-time approach.
 The accuracy of the two methods  measured  in  $\d_\mathcal{W}$ error is validated by Fig.~\ref{fig:quad2d:index}c.

\begin{figure}[pos=htbp]
    \centering
    \subfigure[Recovered physical time vs.\ layer]{%
        \includegraphics[width=0.22\linewidth]{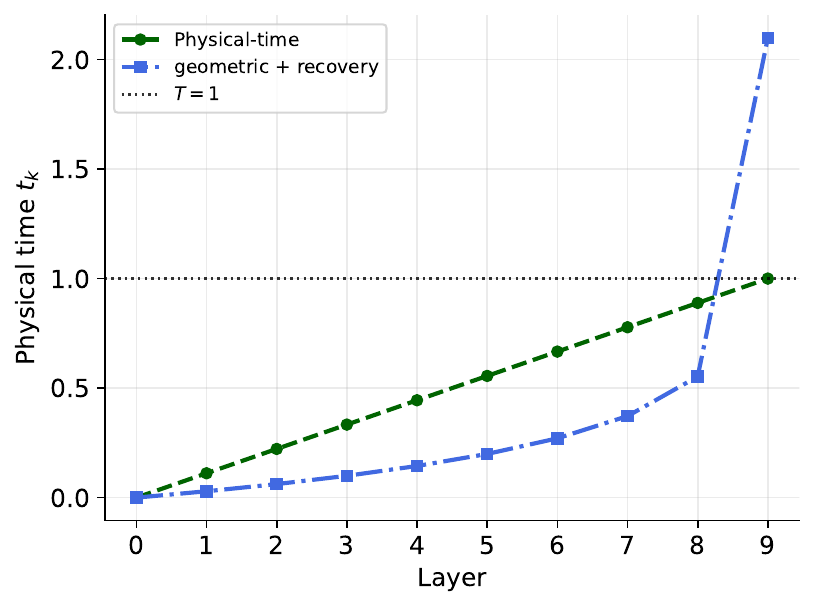}
    }
    \subfigure[Free energy vs.\ physical time]{%
        \includegraphics[width=0.22\linewidth]{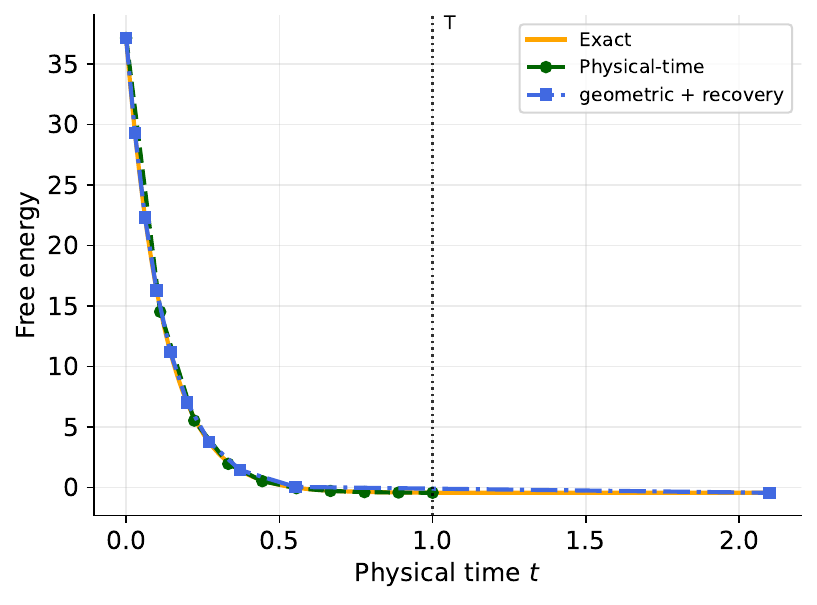}
    }
    \subfigure[Free energy vs.\ layer]{%
        \includegraphics[width=0.22\linewidth]{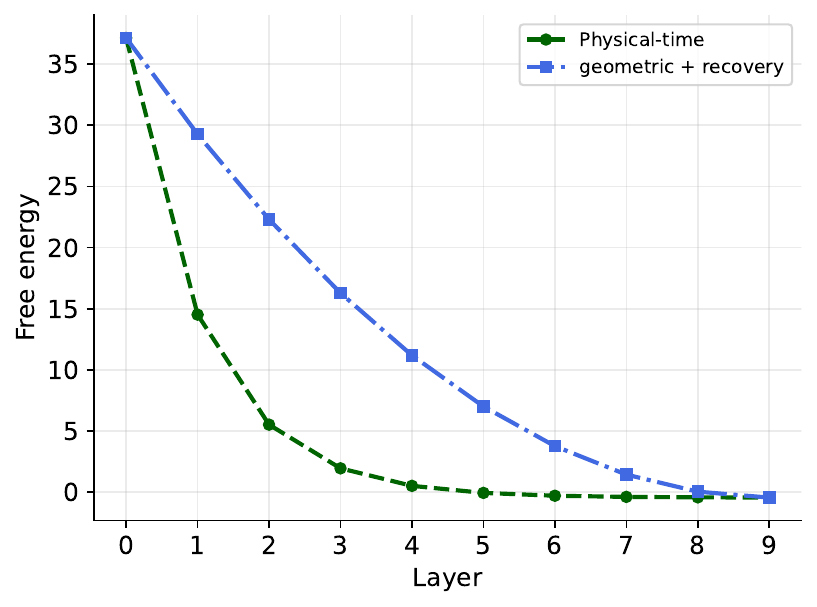}
    }
    \subfigure[$\d_\mathcal{W}$ error vs.\ physical time]{%
        \includegraphics[width=0.22\linewidth]{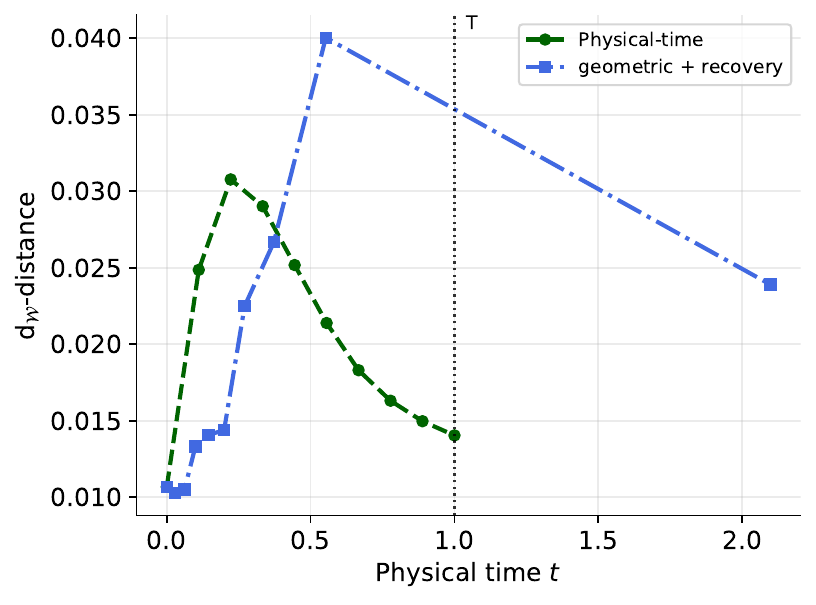}
    }
    \caption{
    Comparison between the physical‑time formulation   and the geometric formulation   for the 2D isotropic Gaussian example. (a):  the uniform time mesh for each layer in Algorithm~\ref{alg:MAM} and the recovered physical time mesh in Algorithm~\ref{alg:gMAM};
    (b): the decay of free energy plot in physical time;  (c) the decay of free energy plot in layers; (d) the $\d_\mathcal{W}$ errors. 
   }
    \label{fig:quad2d:index}
\end{figure}

\paragraph{2D anisotropic diffusion.}
For $\mu=[3,3]^\top$ and $\Sigma=\mathrm{diag}(1,0.25)$, the exact solution remains Gaussian with
\[
\mu(t)=[3(1-e^{-t}),3(1-e^{-4t})]^\top,
\qquad
\Sigma(t)=\mathrm{diag}(1,0.25+0.75e^{-8t}).
\]
Compared with the isotropic case, this example is a bit more challenging because the two coordinate variables evolve on distinct time scales to take longer time to reach equilibrium.  As in the isotropic case, we compare  in Fig.~\ref{fig:quad2d:aniso} the geometric formulation (blue curve) against the matched physical-time approach (red curve) with  $T=1$  under the same architecture and training setup. 
 The recovered physical time from the geometric path is now much longer than $T=1$.  Consequently,  the panel (c) shows that  the geometric path achieves a lower free energy value.  The comparison of  free energy decays up to $T=1$ confirms that  the geometric path resolves both the fast initial transient and the slower remaining relaxation. The $\d_\mathcal{W}$ error curves in the panel (d) show that, on the interval $[0,1]$, the recovered geometric method has slightly less accurate  than  the physical-time path method, owing to the fewer discrete points available on the geometric path within     $[0,1]$. 
  This accuracy can be straightforwardly improved, as discussed in Remark~\ref{rm4}.
Finally, the density snapshots  in the panel (e) further confirm the accuracy of the geometric WGF path when benchmarked against the true solution.

\begin{figure}[pos=htbp]
    \centering
    \subfigure[Recovered physical time vs.\ layer]{%
        \includegraphics[width=0.22\linewidth]{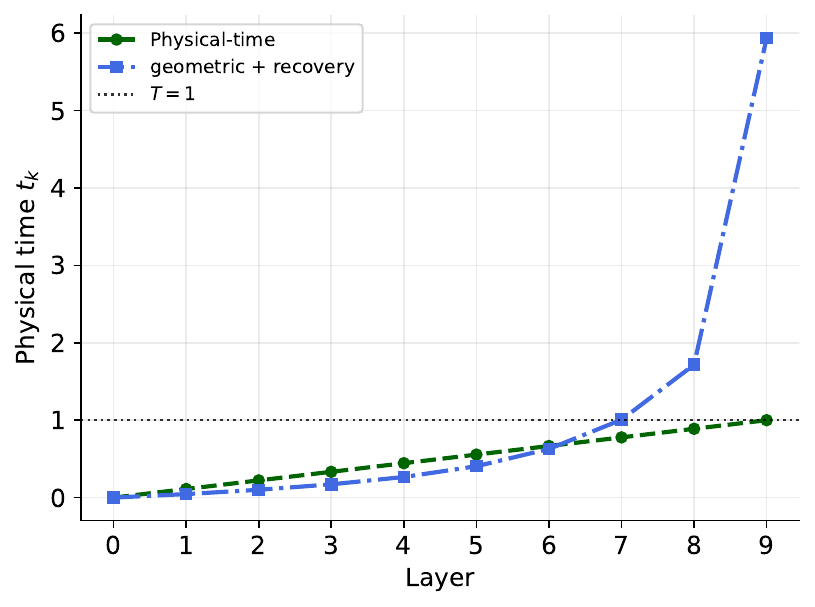}
    }
    \subfigure[Free energy vs.\ physical time]{%
        \includegraphics[width=0.22\linewidth]{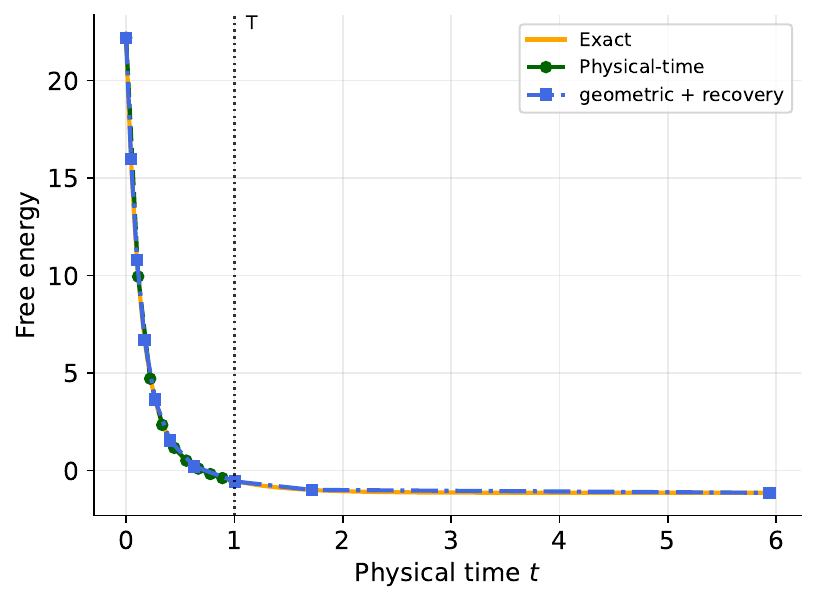}
    }
    \subfigure[Free energy vs.\ layer]{%
        \includegraphics[width=0.22\linewidth]{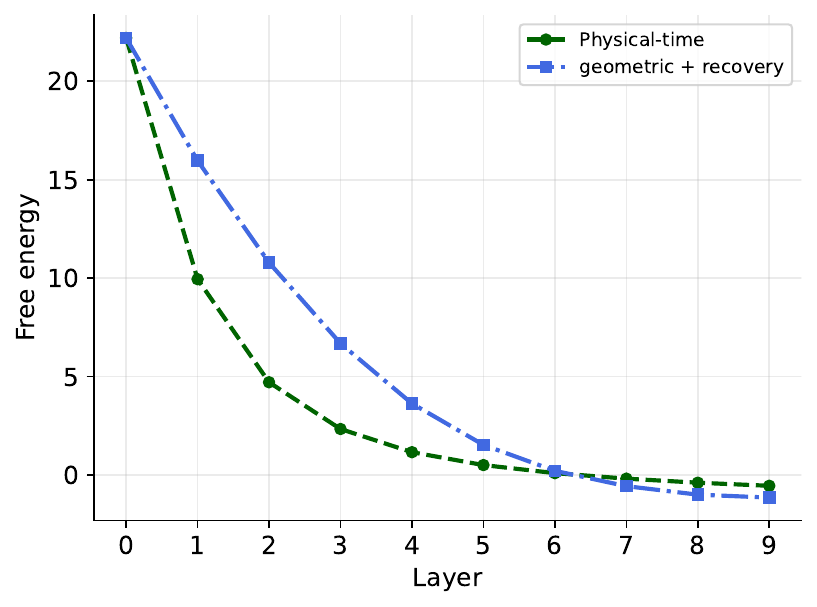}
    }
    \subfigure[$\d_\mathcal{W}$ error vs.\ physical time]{%
        \includegraphics[width=0.22\linewidth]{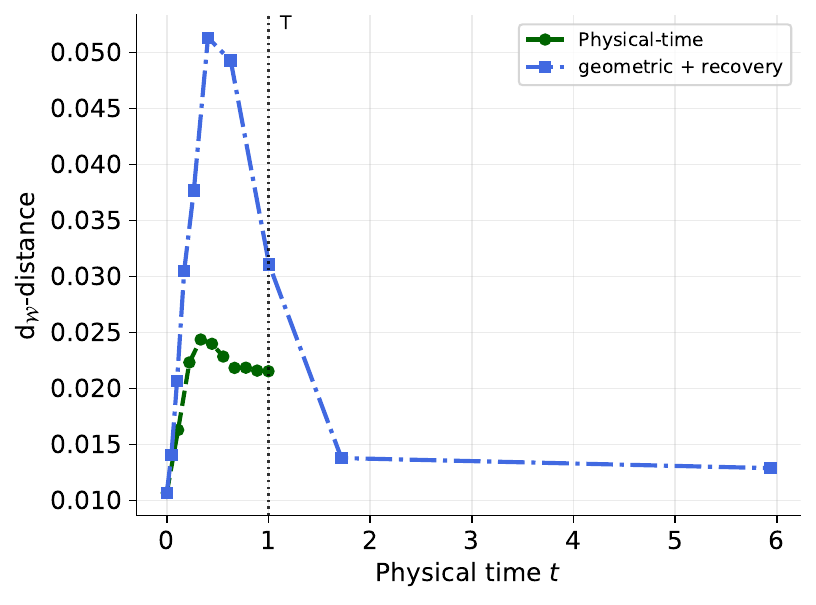}
    }

    \subfigure[Density snapshots]{%
\includegraphics[width=0.9\linewidth]{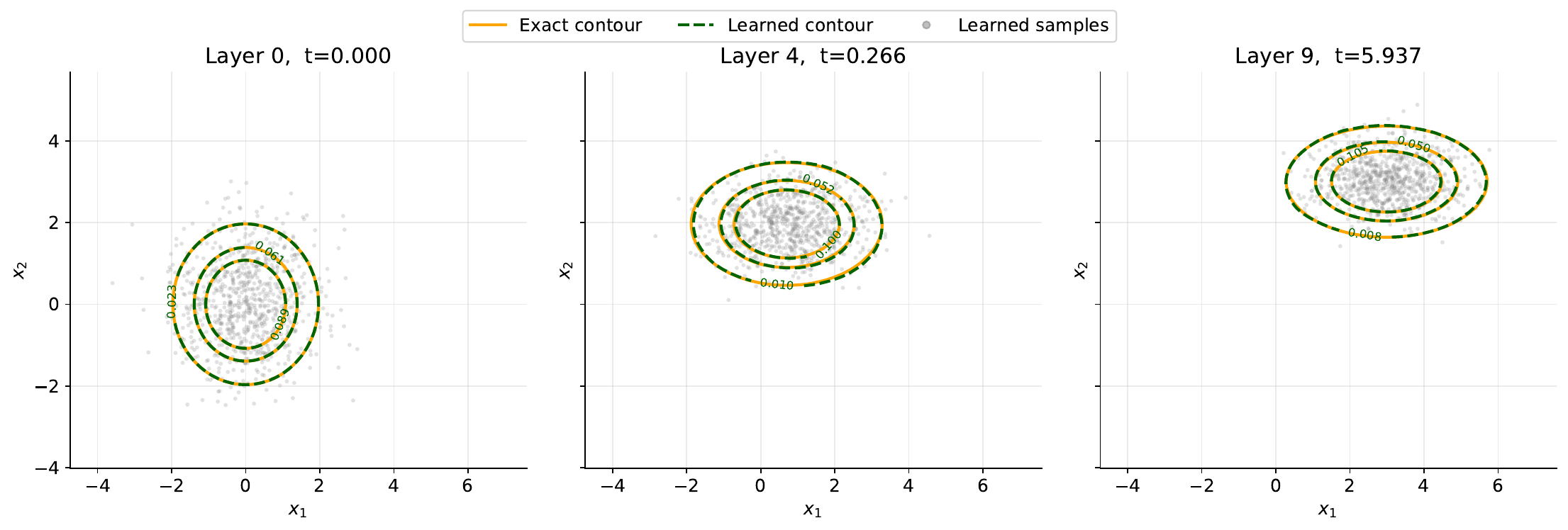}}
\caption{Comparison in the anisotropic Gaussian case. 
}
    \label{fig:quad2d:aniso}
\end{figure}
\paragraph{10D  diffusion.}
Let $\mu=(1,1,0,0,1,2,0,0,2,3)^\top$ and $\Sigma=\mathrm{diag}(\Sigma_A,I_2,\Sigma_B,I_2,\Sigma_C)$, where
\[
\Sigma_A=
\begin{bmatrix}
5/8 & -3/8\\
-3/8 & 5/8
\end{bmatrix},
\qquad
\Sigma_B=
\begin{bmatrix}
1 & 0\\
0 & 0.25
\end{bmatrix},
\qquad
\Sigma_C=0.25I_2.
\]
With the initial distribution $\rho_0=\mathcal{N}(0,I_{10})$, the exact solution is $\mathcal{N}(\mu(t),\Sigma(t))$, where
\begin{align*}
\mu(t)=&(1-e^{-t},1-e^{-t},0,0,1-e^{-t},2(1-e^{-4t}),0,0,2(1-e^{-4t}),3(1-e^{-4t}))^\top,\\
\Sigma(t)=&\mathrm{diag}(\Sigma_A(t),I_2,\Sigma_B(t),I_2,\Sigma_C(t)), \\
\text{with} \quad & \Sigma_A(t) = 
\begin{bmatrix}
\frac{5 + 3e^{-4t}}{8}  & -\frac{3+ 3e^{-4t}}{8}   \\
-\frac{3+ 3e^{-4t}}{8}  & \frac{5+ 3e^{-4t}}{8} 
\end{bmatrix}, \quad 
\Sigma_B(t) = 
\begin{bmatrix}
1 &  \\
 & \frac{1 + 3e^{-8t}}{4}
\end{bmatrix}, \quad 
 \Sigma_C(t) = 
\begin{bmatrix}
\frac{1 + 3e^{-8t}}{4} &  \\
& \frac{1 + 3e^{-8t}}{4}
\end{bmatrix}.
\end{align*}
This example examines whether the method remains accurate in a moderate-dimensional setting with coupled and anisotropic substructures. The learned flow
from Algorithm~\ref{alg:gMAM}  captures the expected rotated and anisotropic components in the selected two-dimensional projections; see Fig.~\ref{fig:quad10d:projection}. For accuracy, Fig.~\ref{fig:quad10d:error} reports the errors in the   mean and covariance against the exact Gaussian solution  over time. The errors remain small throughout the evolution, indicating that the geometric formulation retains good accuracy in this higher-dimensional but still exactly solvable setting.

\begin{figure}[pos=htbp]
    \centering
    \includegraphics[width=0.68\linewidth]{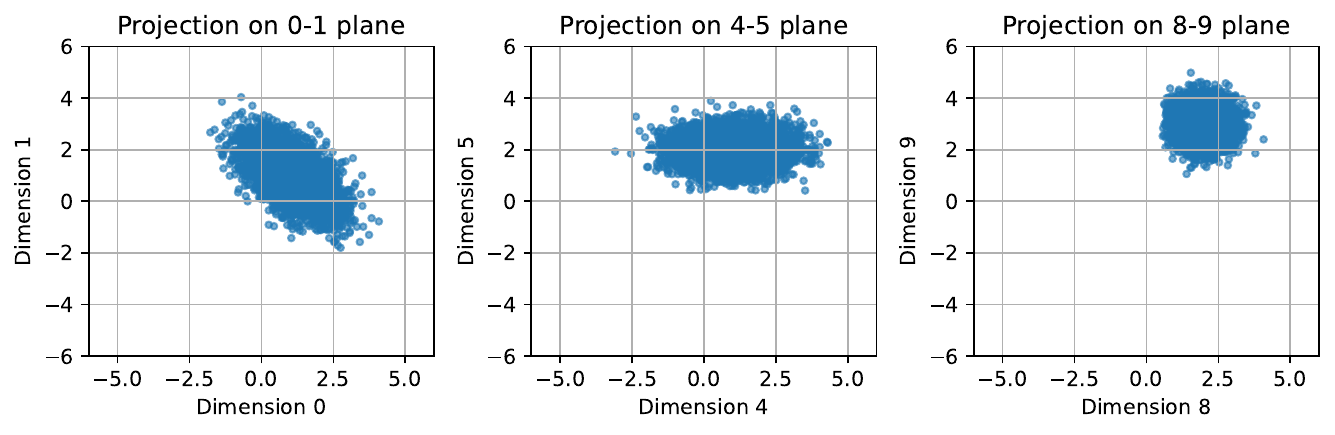}
    \caption{2D projections of the terminal distribution for the
    10D (dimension 0 to 9) block-structured Gaussian example.}
    \label{fig:quad10d:projection}
\end{figure}

\begin{figure}[pos=htbp]
    \centering
    \includegraphics[width=0.6\linewidth]{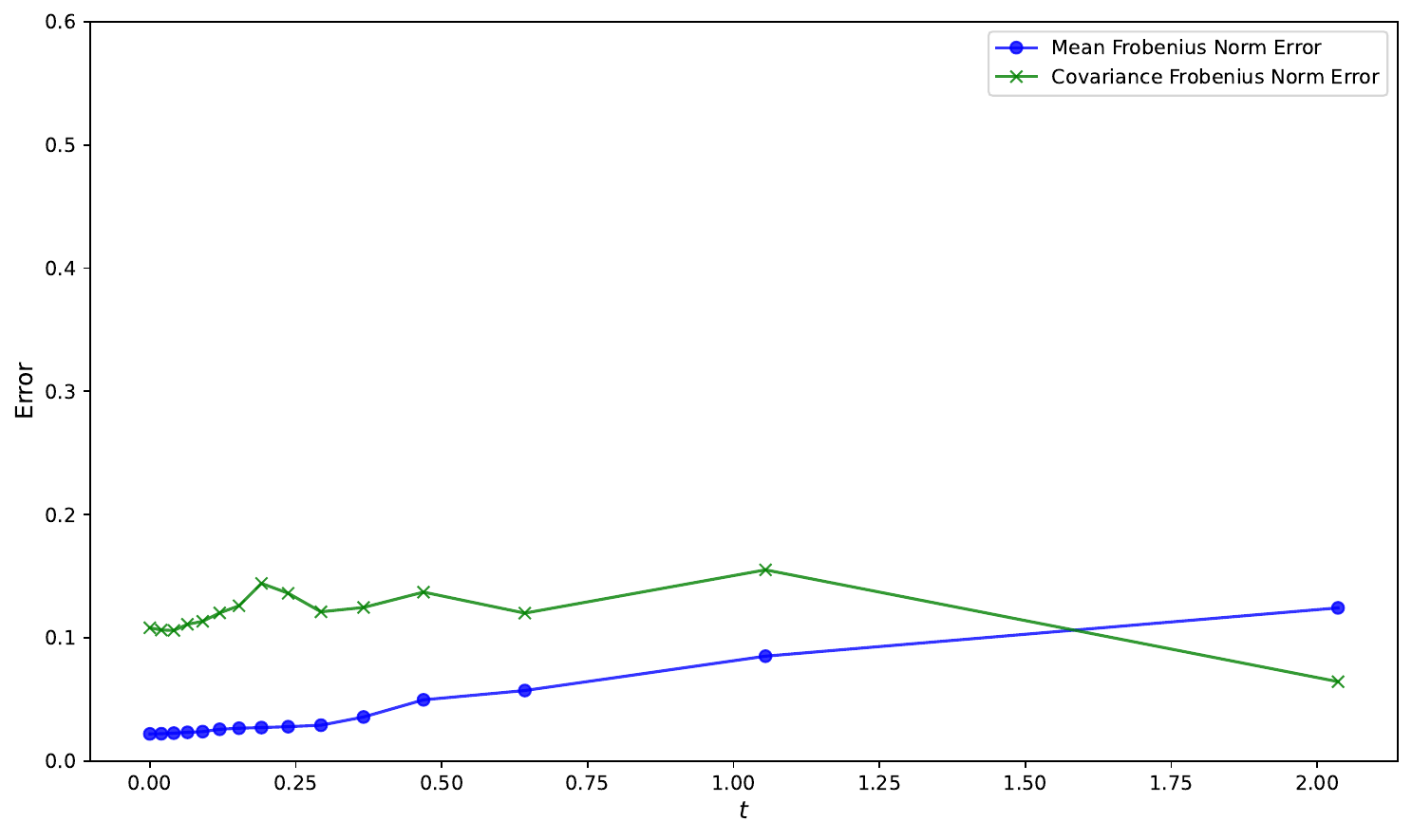}
    \caption{Absolute error of the mean and Frobenius norm of the covariance error vs.\ recovered time in the 10D block-structured Gaussian example.}
    \label{fig:quad10d:error}
\end{figure}

\subsubsection{Non-Convex Potential: 10D Styblinski-Tang Potential}
We next consider the 10D Styblinski-Tang potential, given by a sum of identical one-dimensional potentials over each coordinate:
\[
V(x)=\frac{3}{50}\left(\sum_{i=1}^d x_i^4-16x_i^2+5x_i\right),
\qquad
x=(x_1,\dots,x_{10})\in\mathbb{R}^{10}.
\]
The initial is the standard Gaussian measure. 
Because of permutation symmetry, all one-dimensional marginals $p_\tau^{(i)}(x_i)$ are statistically identical, thus $p_{\tau}(x)= \Pi_i p_\tau^{(1)}(x_i)$. 
 
\paragraph{Parameterization and visualization.}
We parameterize the path by a non-uniform B-spline Flow~\cite{hong2023neural} with two hidden layers (width $100$, SiLU activation), which provides smooth $C^2$-diffeomorphic transports with controlled regularity. Fig.~\ref{fig:styblinski:samples} shows a representative two-dimensional projection $(x_5,x_6)$ of the particle evolution along the path, illustrating the transition from a unimodal Gaussian to a complex multimodal distribution.

\begin{figure}[pos=htbp]
    \centering
    \includegraphics[width=1\linewidth]{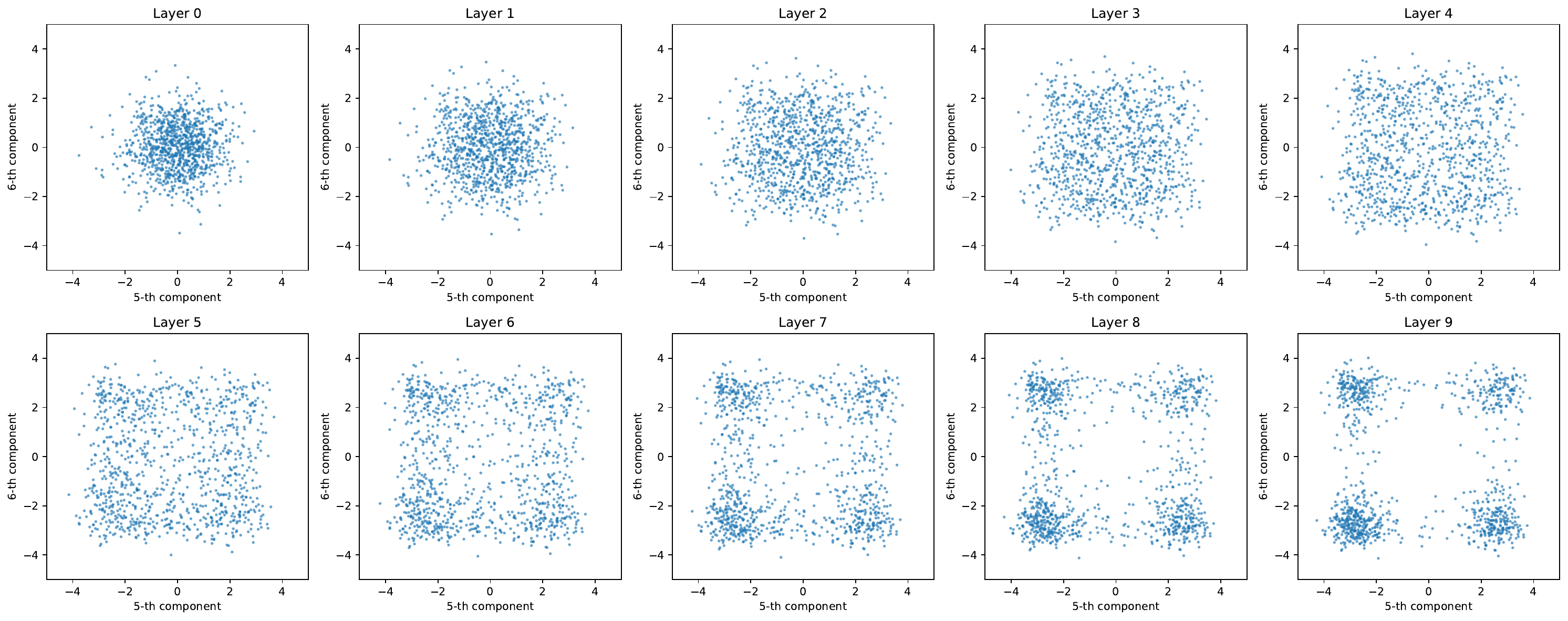}
    \caption{Sample points projected onto the $(x_5, x_6)$-plane at each layer along the learned geometric path for the 10D Styblinski-Tang potential.}
    \label{fig:styblinski:samples}
\end{figure}
\paragraph{Reference solution and comparison.}
 Each one-dimensional marginal evolves independently according to the one-dimensional Fokker-Planck equation, equivalently the over-damped Langevin SDE $$
\mathrm{d}X_t=-V_1'(X_t)\mathrm{d}t+\sqrt{2}\mathrm{d}B_t,
\qquad
V_1(X)=\frac{3}{50}\left(X^4-16X^2+5X\right),
\qquad
X_t\in\mathbb{R}.
$$
Unlike the Ornstein–Uhlenbeck process, this SDE has no analytical expression of the density evolution. We therefore   simulate  $5000$ Euler-Maruyama trajectories with time step $10^{-3}$, and compare the resulting empirical marginal density with the learned one-dimensional marginals at matched physical times. Fig.~\ref{fig:styblinski:marginals} shows close agreement, including pronounced non-Gaussian and multimodal features. This provides quantitative evidence that the learned transport geometry remains accurate in a high-dimensional nonconvex setting, at least at the marginal level made accessible by the separable structure.
\begin{figure}[pos=htbp]
    \centering
    \includegraphics[width=1\linewidth]{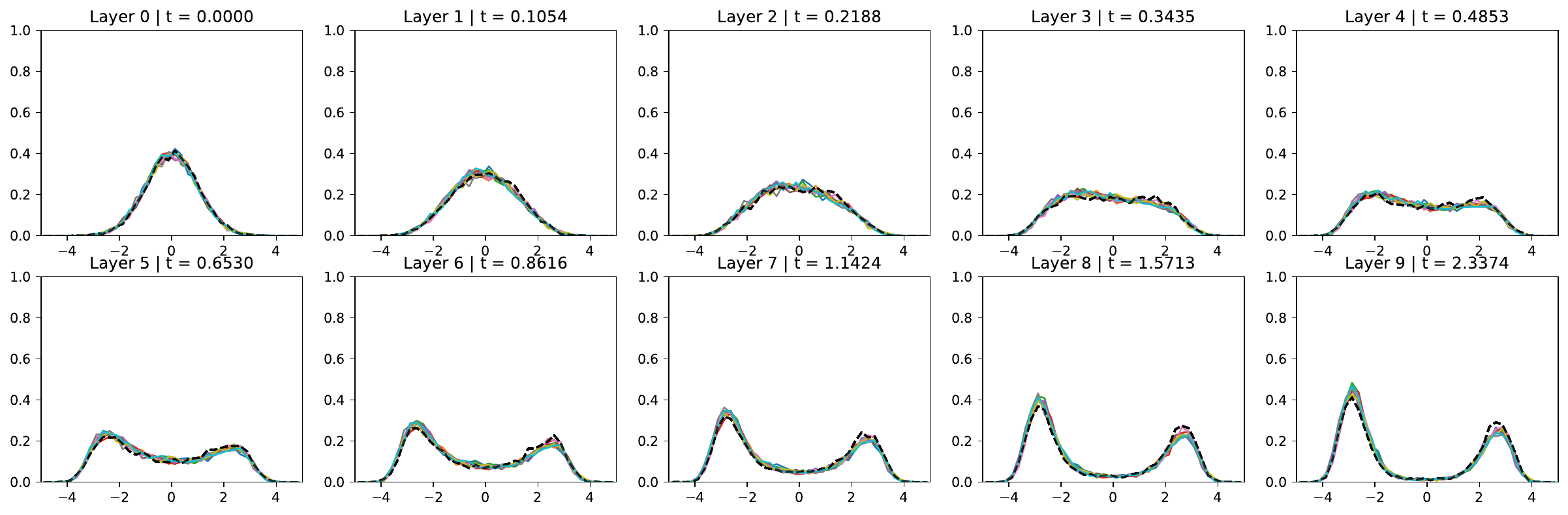}
    
    \caption{Comparison of marginal densities from the learned path (colored for each component) and from 1D SDE simulation (black) at each layer, with the recovered physical times indicated.}
    \label{fig:styblinski:marginals}
\end{figure}

\subsection{Interacting Particle Dynamics}

We next consider WGFs driven by nonlocal interaction energies with the pairwise term $
\frac{1}{2}\int_{\mathbb{R}^d\times\mathbb{R}^d}W(x-y)\rho(x)\rho(y)\mathrm{d}x\mathrm{d}y.$
Such models arise in aggregation, swarming, and mean-field dynamics. In contrast to the Fokker-Planck examples above, these systems typically do not admit explicit transient solutions and may even possess compactly supported equilibrium states. Our validation therefore focuses on problem-adapted quantities: exact steady-state information whenever available, recovered physical-time comparisons under matched training setups, and structural diagnostics that test whether the learned path remains consistent with Wasserstein gradient-flow behavior.

\subsubsection{Pure Aggregation}

\begin{figure}[pos=!htbp]
    \centering
    \includegraphics[width=0.88\linewidth]{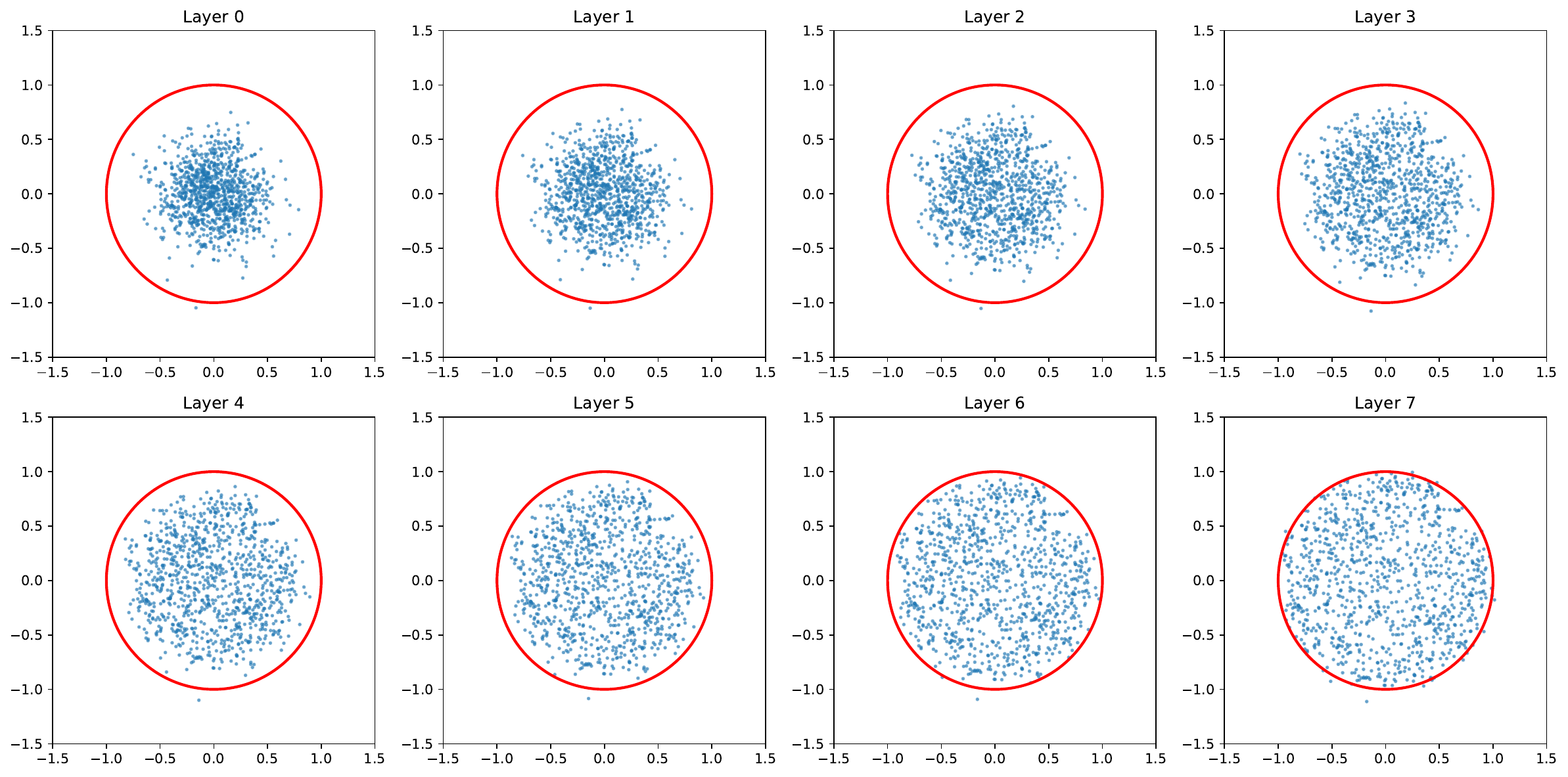}
    \includegraphics[width=0.88\linewidth]{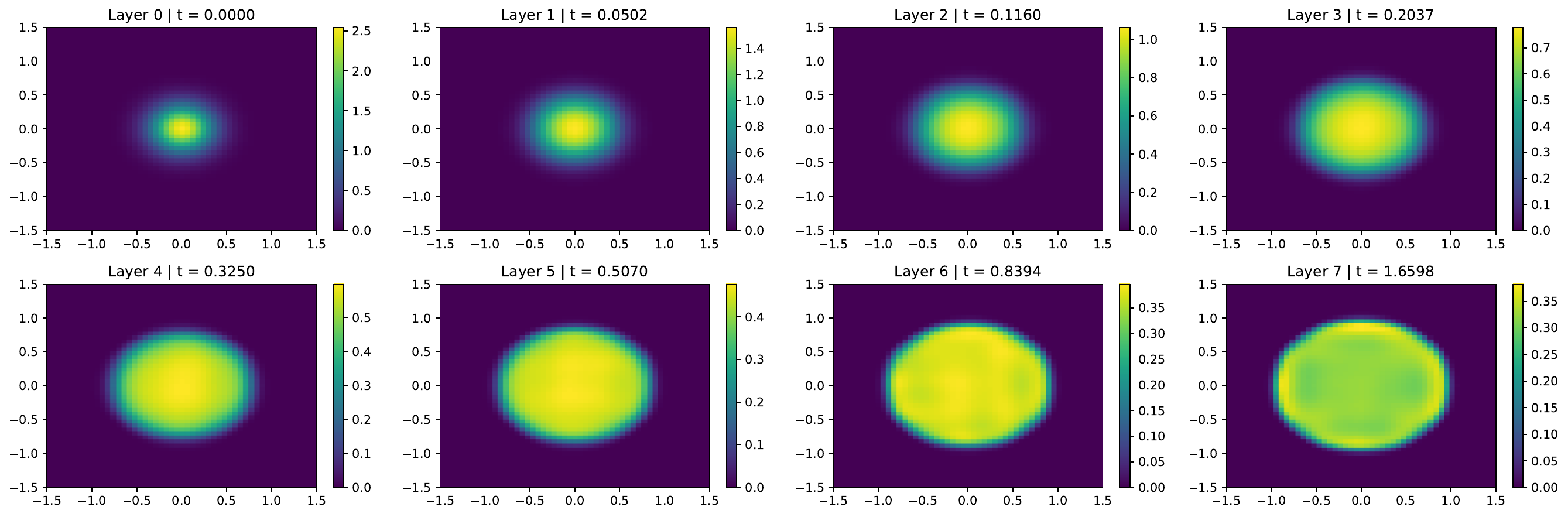}
    \caption{Pure aggregation: particle transport (top) and density evolution (bottom) along the learned geometric path. The terminal state approaches the uniform distribution supported on the unit disk.} 
    \label{fig:agg:transport}
\end{figure}

The first example is a two-dimensional pure aggregation model where
\[
\mathcal{F}(\rho)=\frac{1}{2}\int_{\mathbb{R}^2\times\mathbb{R}^2}W(x-y)\rho(x)\rho(y)\mathrm{d}x\mathrm{d}y,
\qquad
W(x)=\frac{1}{2}\|x\|^2-\log\|x\|.
\]
This kernel balances quadratic attraction with Newtonian repulsion, and its steady state is the uniform distribution on the unit disk. We initialize from $\mathcal{N}(0,0.25I_2)$ and use a seven-layer non-uniform B-spline Flow with $N=5000$ particles. Fig.~\ref{fig:agg:transport} displays the particle transport and density evolution along the learned path. The solution becomes radially symmetric, develops compact support, and approaches the analytical minimizer at terminal time.

\begin{figure}[pos=!htbp]
    \centering
\includegraphics[width=0.46\linewidth]{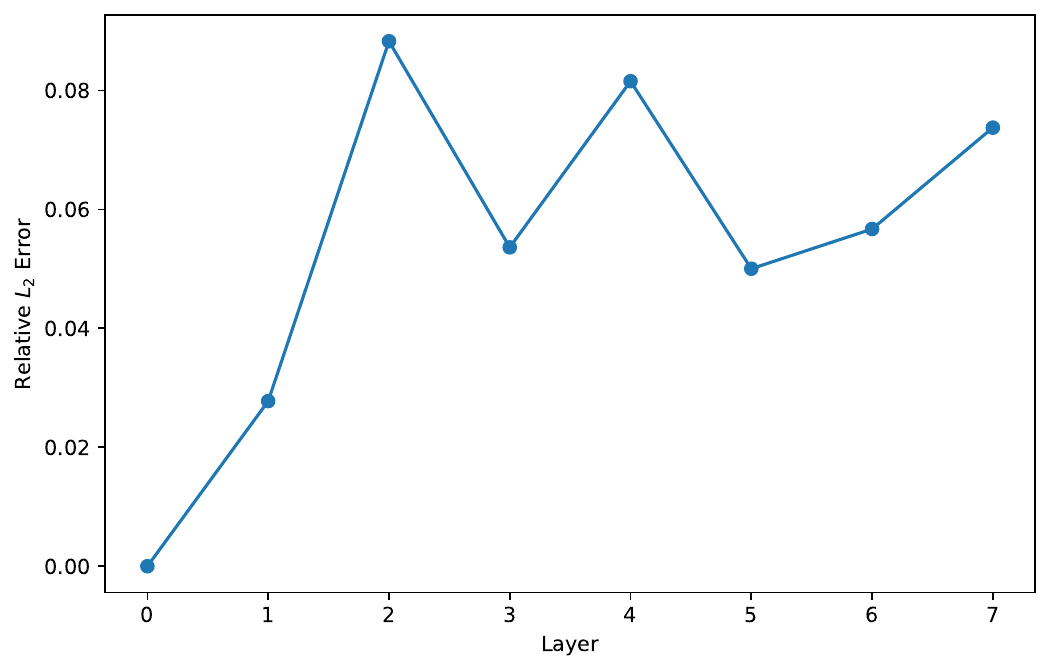}
\caption{Relative $L^2$ difference between the learned densities and the reference solution computed by the primal-dual method.}
    \label{fig:agg:comparison}
\end{figure}

There is no closed-form transient solution for this example. For quantitative validation, we therefore compute an independent numerical reference using the primal-dual scheme developed in ~\cite{carrillo2022primal} which is  grid-based and of time-marching type. After reparameterizing the learned geometric path to physical time via Algorithm~\ref{alg:recover_wasserstein_flow}, we compare the resulting densities with the primal-dual solution at the corresponding discrete times. Fig.~\ref{fig:agg:comparison} shows that the relative $L^2$ difference between the two solutions remains uniformly small throughout the evolution. This example therefore provides a direct quantitative comparison against an existing  numerical solver for a nontrivial interacting-particle dynamics.

\subsubsection{Aggregation-Drift Equation}
We next consider an aggregation model with an additional singular confining potential $V$, where the free energy is
\[
\mathcal{F}(\rho)=\int_{\mathbb{R}^2}V(x)\rho(x)\mathrm{d}x
+\frac{1}{2}\iint_{\mathbb{R}^2\times\mathbb{R}^2}W(x-y)\rho(x)\rho(y)\mathrm{d}x\mathrm{d}y,
\]
with
\[
W(x)=\frac{1}{2}\|x\|^2-\log\|x\|,
\qquad
V(x)=-\frac{\alpha_1}{\alpha_2}\log\|x\|.
\]
The explicit steady state \cite{byun2024planar} is the uniform distribution on the annulus with inner and outer radii
\(
R_i=\sqrt{\frac{\alpha_1}{\alpha_2}},
R_o=\sqrt{R_i^2+1}.
\)
Starting from a non-radially symmetric initial datum composed of five Gaussians, the learned geometric path restores radial symmetry and converges to the annular steady state; see Fig.~\ref{fig:aggdrift:transport}.

\begin{figure}[pos=htbp]
    \centering
    \includegraphics[width=0.88\linewidth]{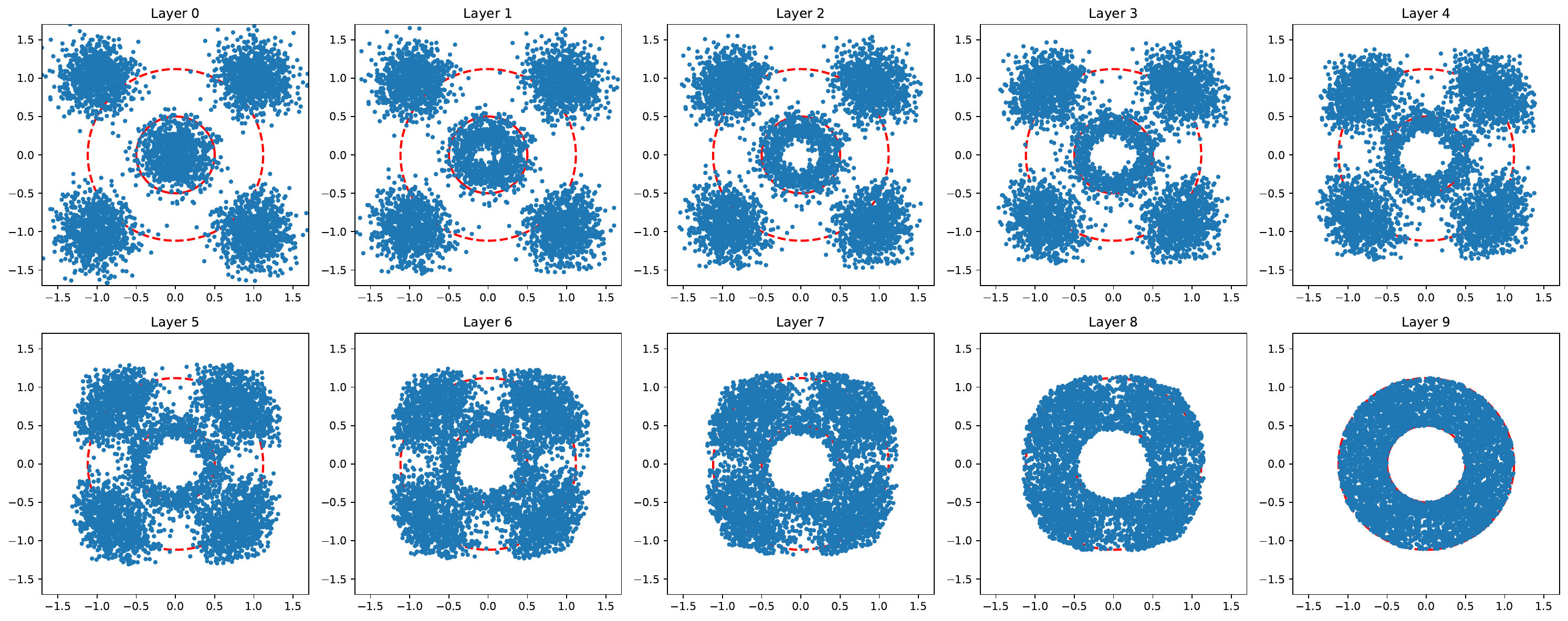}
    \caption{Aggregation-drift: particle evolution from an asymmetric initial distribution toward the annular steady state.}
    \label{fig:aggdrift:transport}
\end{figure}

We use this example  to  show how the geometric path obtained by Algorithm~\ref{alg:gMAM} can provide a better non-uniform time mesh as well as a good terminal time  for  the physical-time path minimization algorithm ~\ref{alg:MAM}, as discussed in Remark \ref{rm4}. More precisely,  we first compute a geometric path, then recover the corresponding physical times by Algorithm~\ref{alg:recover_wasserstein_flow}, and finally use this recovered non-uniform time mesh back in Algorithm~\ref{alg:MAM}. 
By comparing the numerical optimal paths under the uniform time mesh grid and this recovered non-uniform time mesh grid, we show the improvement of the accuracy Fig.~\ref{fig:aggdrift:timecompare}(b), where 
Fig.~\ref{fig:aggdrift:timecompare}(a) presents  the difference of these two time meshes in terms of each layer. Fig.~\ref{fig:aggdrift:timecompare}(b) presents the cumulative MAM loss along the discrete path. The recovered-time discretization yields a uniformly smaller cumulative loss, indicating that the recovered mesh provides a more effective physical-time representation of the same relaxation process.

\begin{figure}[pos=htbp]
    \centering
    \subfigure[Physical time vs.\ layer]{%
        \includegraphics[width=0.42\linewidth]{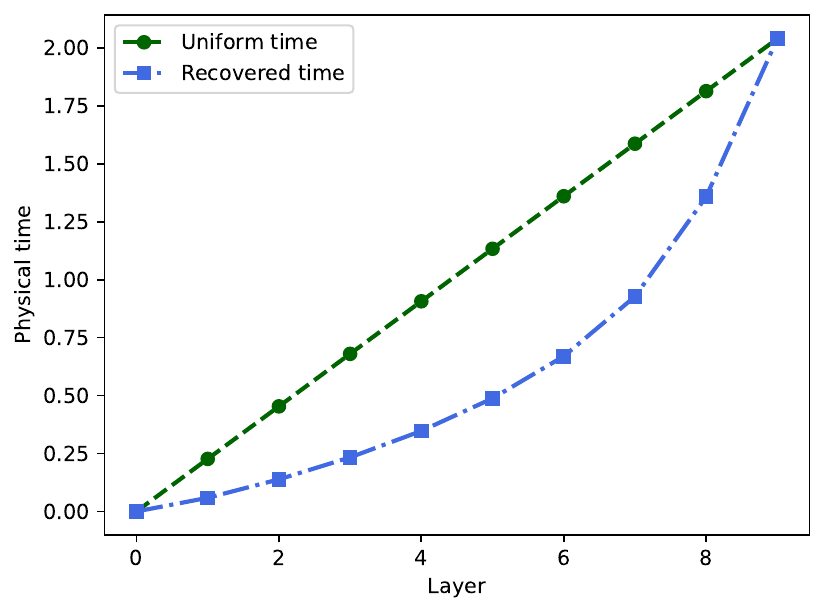}
    }
    \subfigure[Cumulative MAM loss vs.\ layer]{%
        \includegraphics[width=0.42\linewidth]{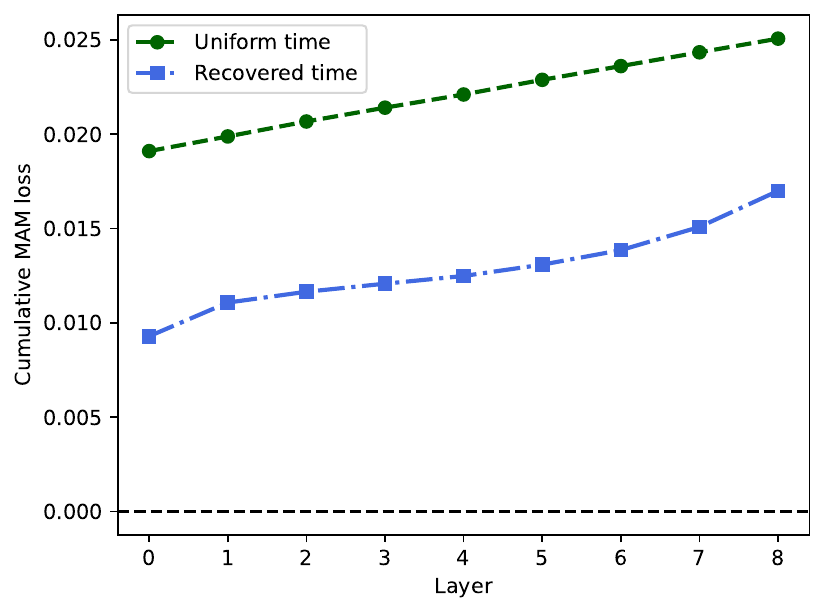}
    }
    \caption{Comparison between two physical-time discretizations for the aggregation-drift equation: the standard uniform-time mesh and the recovered time mesh obtained from the geometric path. The recovered time allocates more layers to the fast transient regime and produces a smaller cumulative MAM loss.}
    \label{fig:aggdrift:timecompare}
\end{figure}

To further examine the structure of the two learned paths, we report in Fig.~\ref{fig:aggdrift:diagnostics} three intrinsic diagnostics: the free-energy profile, the cosine alignment between the discrete path velocity and the negative Wasserstein gradient, and the Wasserstein distance between neighboring layers. The recovered-time path offers a more uniform drop of the free energy along the discrete path,   a cosine value closer to one, and a more balanced distribution of inter-layer distances. 
This example  illustrates the recommendation of Remark \ref{rm4} where the geometric method serves as an effective preprocessing step to further fine tune the physical-time discretization for more accurate path.

\begin{figure}[pos=htbp]
    \centering
    \subfigure[Free energy vs.\ layer]{%
        \includegraphics[width=0.3\linewidth]{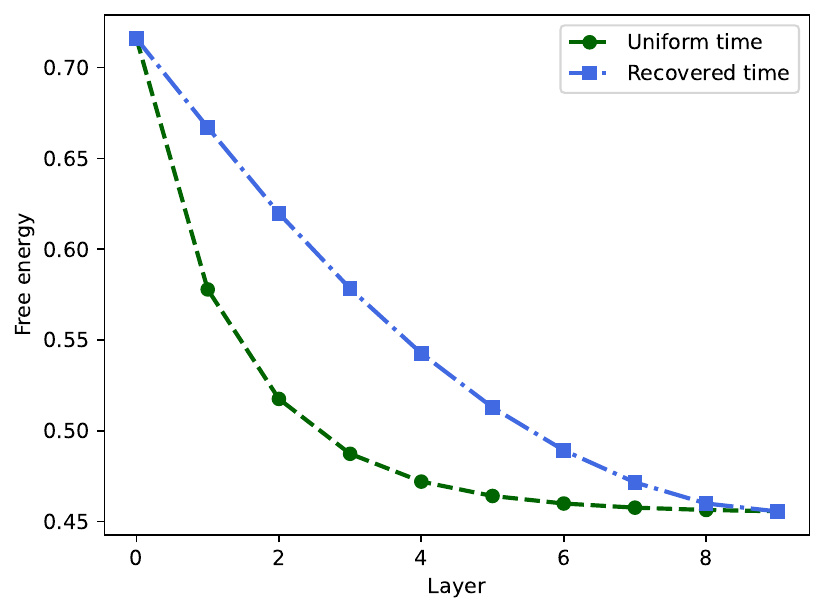}
    }
    \subfigure[Cosine alignment vs.\ layer]{%
        \includegraphics[width=0.3\linewidth]{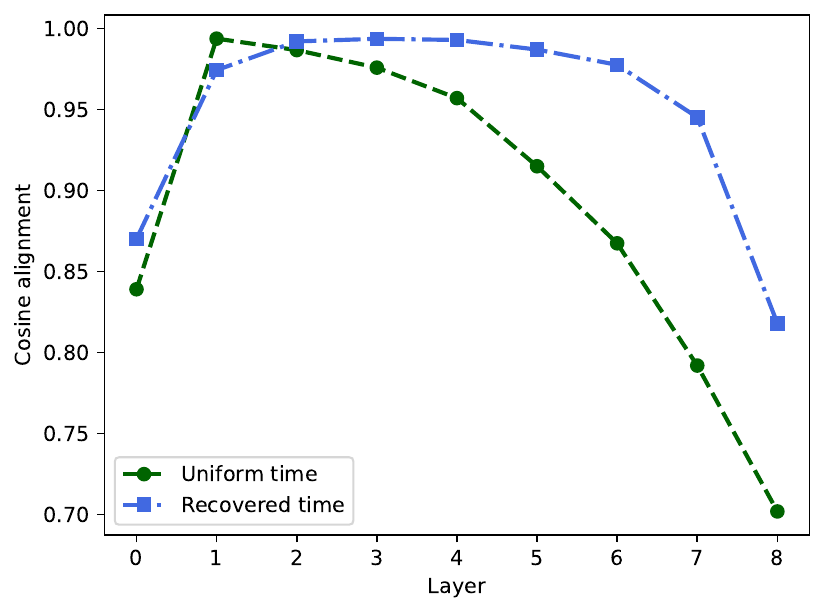}
    }
    \subfigure[Inter-layer distance vs.\ layer]{%
        \includegraphics[width=0.3\linewidth]{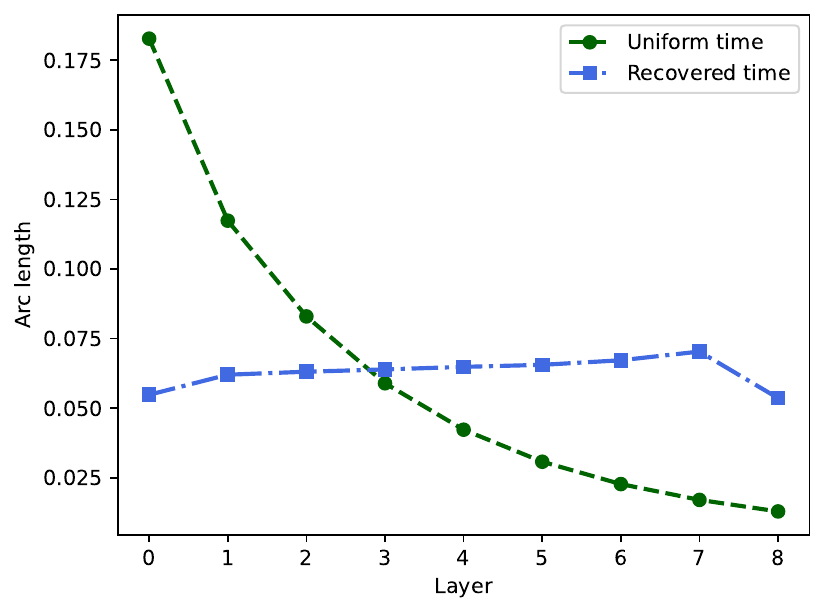}
    }
    \caption{Structural diagnostics for the aggregation-drift equation under uniform time and recovered time discretizations. From left to right: free energy, cosine alignment with $-\nabla_{\d_\mathcal{W}}\mathcal{F}$, and the Wasserstein distance between neighboring layers.}
    \label{fig:aggdrift:diagnostics}
\end{figure}

\subsubsection{Aggregation-Diffusion Equation}
We conclude this section with an aggregation-diffusion model that combines nonlinear diffusion and nonlocal attraction:
\[
\mathcal{F}(\rho)=\int_{\mathbb{R}^2}\frac{\nu}{m-1}\rho^m(x)\mathrm{d}x
+\frac{1}{2}\iint_{\mathbb{R}^2\times\mathbb{R}^2}W(x-y)\rho(x)\rho(y)\mathrm{d}x\mathrm{d}y,
\]
where the interaction kernel is
\[
W(x)=-\frac{1}{\pi}e^{-|x|^2}.
\]
This smooth radially symmetric kernel induces short-range attraction and is frequently used in models of biological aggregation and collective behavior. The corresponding evolution equation is
\[
\partial_t\rho=\nabla\cdot(\rho\nabla W * \rho)+\nu\Delta\rho^m.
\]
We choose $m=2$. To probe nontrivial intermediate dynamics toward the equilibirum, we take as initial datum the  characteristic function on $[-3,3]^2$  thus the constant  mass is  $9$. These  settings produce  a compactly supported constant-density profile that undergoes highly nontrivial transient dynamics before converging to equilibrium. 
We employ an equal \emph{arc-action} penalty (i.e., $\mathcal{F}(p_{k+1})-\mathcal{F}(p_{k})$ is constant) instead of the arc-length parametrization,  so that distributed discrete images (layers) along the path  
offer  better quality with only 11 layers. Fig.~\ref{fig:aggdiff:density} successfully discovers  the  complex  evolution of this system: the initial density first  splits into four localized clusters,  merging  into a single radially symmetric steady state, reflecting the balance between attraction and diffusion in the underlying dynamics.
\begin{figure}[pos=htbp]
    \centering
\includegraphics[width=1.1\linewidth]{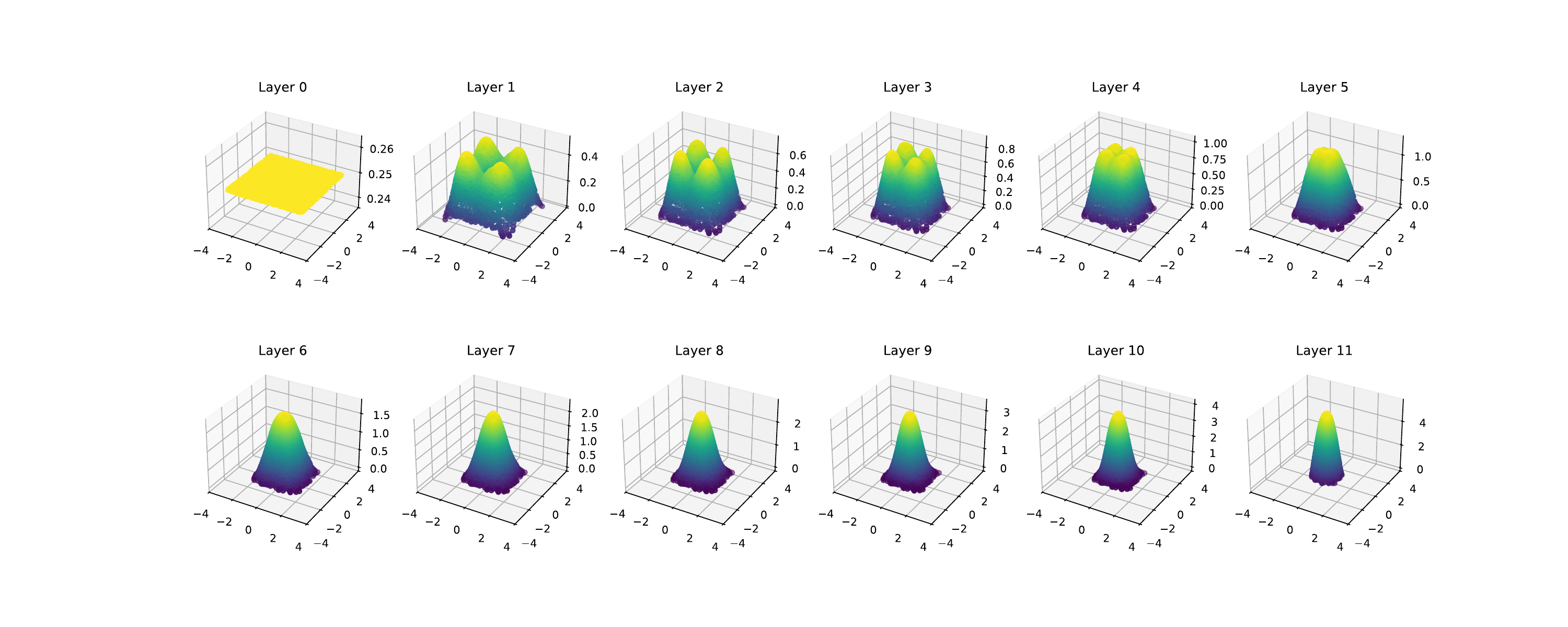}
    \caption{Aggregation-diffusion: density snapshots along the learned geometric path. Multi-bump transient states gradually merge into a smooth radial steady state.}
    \label{fig:aggdiff:density}
\end{figure}

\begin{figure}[pos=htbp]
    \centering
    \includegraphics[width=0.4\linewidth]{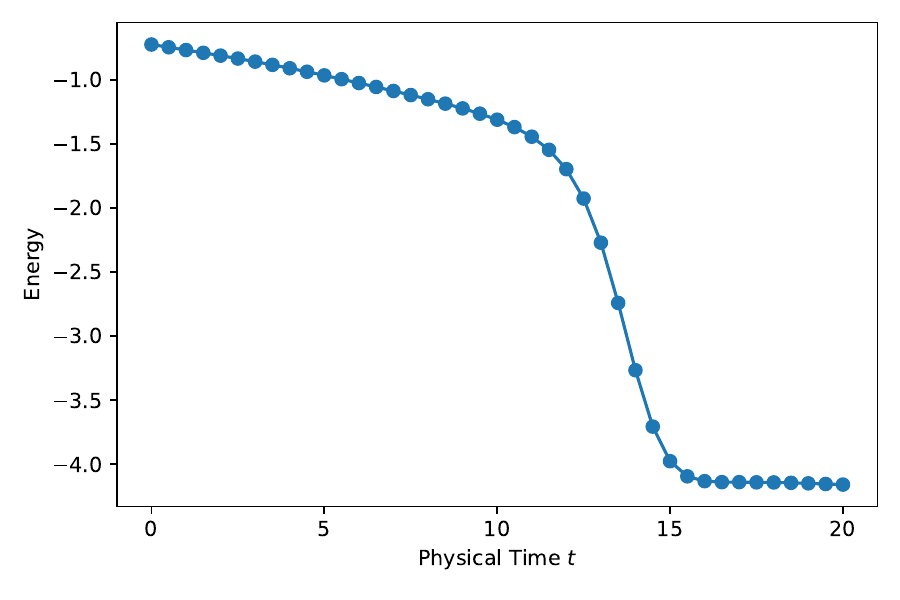}
    \includegraphics[width=0.4\linewidth]{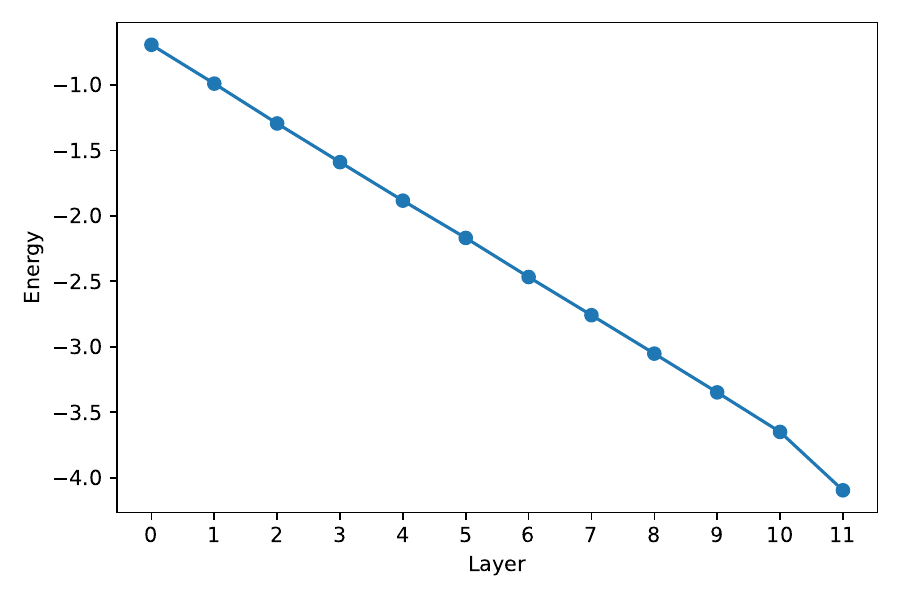}
    \caption{Free-energy profiles along the gradient flow    path. Left: the evolution computed by a conventional primal-dual scheme with a  constant   time step size $0.5$. Right: evolution along the learned geometric path with only 11 images.}
    \label{fig:aggdiff:energy}
\end{figure}
 
Fig.~\ref{fig:aggdiff:energy} compares the free-energy profiles computed from two methods under two different parameterizations. The left panel shows the energy evolution produced by a conventional primal-dual scheme~\cite{carrillo2022primal} with fixed physical time step $\Delta t=0.5$, which exhibits distinct dynamical phases: a relatively slow initial descent, a rapid drop during bump formation, and then slower relaxation. The right panel shows the energy profile along the learned geometric path with arc-action parametrization, where the energy decreases much more evenly along the gradient flow path. 
This example is included as a stress test of the central geometric claim of the paper: when the relaxation contains strongly nonuniform dynamical phases, an equal arc-action parametrization yields a more informative distribution of path images than a fixed physical-time mesh.

%% file: 6-Conclusion.tex
\section{\label{sec:Conclusion} Conclusion}

We have proposed an efficient and robust computational method for the entire Wasserstein gradient flows from a global path‑optimization perspective.
The introduced GenWGP is a generative framework that represents the entire probability trajectory through a Lagrangian normalizing‑flow parametrization of transport maps, enabling to efficiently  approximate the long‑time relaxation process toward equilibrium without relying on sequential time marching. 

Our proposed path‑finding approach is based on the transport-flow-based numerical scheme  for the 
Dawson-Gartner action functional in the large deviation theory, which is presented  in two  forms: one is for the path  in physical time within any specified  finite time horizon,
and the other is the parametrization‑free geometric formulation. The physical‑time formulation provides a horizon‑based variational description together with a trainable discrete objective built from Monte Carlo particle approximation and a Crank–Nicolson‑type discretization. The geometric formulation furthermore automatically determines the final time and is capable of capturing the entire gradient path up to equilibrium, yielding a practical arc‑length or free‑energy parametrized curve. Both methods achieve well‑approximated paths for which time‑stepping methods would require a significantly larger number of steps to attain comparable accuracy.

Our analysis shows that these variational formulations and algorithms exhibit mathematically controlled errors. In particular, we derived an a priori KL‑divergence estimate for the physical‑time formulation, established a trajectory‑error decomposition for its discrete scheme, and proved consistency of the discrete geometric objective. As demonstrated by numerical results on representative Fokker–Planck and interacting aggregation particle models, these findings indicate that GenWGP can approximate both relaxation trajectories and terminal equilibrium states in a stable and computationally efficient manner.

The least‑action principle underlying our method naturally extends beyond Wasserstein gradient flows to more general non‑equilibrium systems in probability measure space, even in the absence of a free energy or in the presence of additional non‑conservative forces. The corresponding action functionals retain a similar structure, and both the physical‑time and geometric formulations developed in this paper remain applicable—particularly in scenarios where the terminal state is prescribed and the noise‑induced optimal transition path is of primary interest. We leave this promising application for future investigation.

%% file: proof_KL_bound.tex
\begin{assumption}\label{ass:densitytheta}
\hfill
    \begin{enumerate}
    \item The domain $\Omega$ is a bounded domain of finite measure (in particular, $\mathbb T^d$), and the boundary conditions are periodic or no-flux.
    \item \label{subass:initiallaw}
    The initial distribution $\rho_0$ is absolutely continuous with respect to the Lebesgue measure (we still denote its density as $\rho_0$) and there exists a positive constant $C_0$ such that $(C_0)^{-1}\leq\rho_0(x)\leq C_0$ for all $x\in\Omega$, and $\rho_0\in C^2(\Omega)$.
    
    \item \label{subass:truedensity}
    For every $T\geq0$, the solution $\widehat{p}_t\in C^3(\Omega)$ and there is a positive constant $C^*_T$ such that $(C^*_T)^{-1}\leq \widehat{p}_t(x)\leq C^*_T$ for all $x\in\Omega, t\in[0,T]$.
    
    \item \label{subass:KT}
    The velocity field $\mathbf{f}_t(x)=\mathbf{f}(x,t)\in C^{2,1}(\Omega\times\mathbb{R},\mathbb{R}^d)$. For every $T \geq 0$, there is a positive constant $C_T$ such that $\sup_{(t,x)\in\Omega\times [0,T]}|\nabla \cdot \mathbf{f}(x ,t)| \leq C_T$.
    
    \item \label{subass:bounded}
    The kernel $W\in C^3(\mathbb{R}^d)$ and there exists a positive constant $C_W$ such that $|\nabla W(x)| \leq C_W$ for all $x\in\Omega$.

    \item \label{subass:approx_regularity}
    \textit{(Regularity of Generated Density)} 
    The density $p_t$ induced by the flow $\Phi$ satisfies the following regularity conditions for $t \in [0, T]$:
    \begin{itemize}
        \item There exists a constant $C^f_T$ such that $(C^f_T)^{-1} \leq p_t(x) \leq C^f_T$ for all $x \in \Omega$.
        \item The score function is bounded: there exists $C_s > 0$ such that $\sup_{x \in \Omega} \|\nabla \log p_t(x)\|^2 \leq C_s$.
    \end{itemize}
    \end{enumerate}
\end{assumption}

We first present a lemma that establishes  the upper bound on the squared $L^2$-norm of two probability density functions in terms of  their Kullback–Leibler divergence.

\begin{lemma}\label{L2KL}
    Suppose $p$ and $q$ are two probability densities on $\Omega$, and there exists a positive constant $C_{\text{MAX}}$ such that $0<p(x),q(x)<C_{\text{MAX}}$ for all $x\in\Omega$. Then we have
    {
    \begin{equation*}
        \int_{\Omega}|p(x)-q(x)|^2\d x\leq \frac{2C_{\text{MAX}}}{1-\log2}D_{\mathrm{KL}}(p\|q).
    \end{equation*}
    }
\end{lemma}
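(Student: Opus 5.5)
The plan is a pointwise inequality. We bound $|p-q|^2$ by a constant multiple of a nonnegative integrand whose integral is exactly $D_{\mathrm{KL}}(p\|q)$, and then integrate. Because $p$ and $q$ are probability densities, we have $\int_\Omega (q-p)\,\d x=0$. We can therefore write
\[
D_{\mathrm{KL}}(p\|q)=\int_\Omega \bigl(p\log(p/q) - p + q\bigr)\,\d x .
\]
This integrand is pointwise nonnegative: it equals $q\,h(p/q)$ with $h(r)=r\log r - r + 1\ge 0$. It then suffices to prove the pointwise bound
\[
|p-q|^2 \le \frac{2C_{\text{MAX}}}{1-\log 2}\,q\,h(p/q)\qquad\text{for all } x\in\Omega .
\]

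Set $r=p/q$. The target becomes $q^2(r-1)^2\le \frac{2C_{\text{MAX}}}{1-\log2}\,q\,h(r)$, i.e.
\[
q\,(r-1)^2\le \tfrac{2C_{\text{MAX}}}{1-\log 2}\,h(r).
\]
We split into two regimes according to whether $r\le 2$ or $r>2$.

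\textbf{Case $r\le 2$.} By Taylor expansion with $h(1)=h'(1)=0$ and $h''(s)=1/s$, we get $h(r)=\int_1^r (r-s)/s\,\d s\ge (r-1)^2/(2\max(r,1))\ge (r-1)^2/4$. Hence $q(r-1)^2\le 4C_{\text{MAX}}\,h(r)$, and $4\le 2/(1-\log2)\approx 6.52$, so this case closes.

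\textbf{Case $r>2$.} Here the function $h(r)/(r-1)$ is increasing for $r>1$, with value $(2\log 2-1)/1$ at $r=2$. The quadratic bound fails for large $r$, so we must also use $p<C_{\text{MAX}}$. Write $q(r-1)^2=(p-q)(r-1)\le p\,(r-1)$. We then need $p(r-1)\lesssim C_{\text{MAX}} h(r)$, which with $p\le C_{\text{MAX}}$ reduces to $r-1\le c\,h(r)$ with $c=2/(1-\log 2)$. Since $h(r)/(r-1)$ is increasing and equals $2\log2-1\approx 0.386$ at $r=2$, we get $c\,h(r)/(r-1)\ge 6.52\cdot 0.386>1$, so this case closes too.

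The main obstacle is exactly this large-ratio regime. There the KL integrand grows only like $r\log r$ while $|p-q|^2$ grows quadratically, so the upper bound $C_{\text{MAX}}$ on $p$ is essential. The step to verify most carefully is the monotonicity of $h(r)/(r-1)$, since the constant $1-\log 2$ depends on where the regimes are split. Its derivative has numerator $(r-1)\log r - h(r) = r-1-\log r\ge 0$, which confirms the monotonicity. If the precise constant turned out to come from a different split, the same two-regime argument still yields the stated form with some explicit constant; the threshold would then be adjusted to match $2/(1-\log 2)$.

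Integrating the pointwise bound over $\Omega$ gives the lemma.
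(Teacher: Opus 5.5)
Your proof is correct, and it is close in spirit to the paper's. Both arguments bound pointwise the same nonnegative integrand: the paper's $p\bigl(\zeta-\log(1+\zeta)\bigr)$ with $\zeta=(q-p)/p$, obtained there via the cancellation $\int_A p\zeta\,\mathrm{d}x=-\int_B p\zeta\,\mathrm{d}x$, is exactly your $q\,h(p/q)$.

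The difference is the normalization, and hence where the split falls. The paper works in $\zeta=q/p-1$ and isolates $\{q>2p\}$. On that set it uses $\log(1+\zeta)\le\zeta\log 2$, a linear bound closed via $|q-p|<q<C_{\text{MAX}}$. On $\zeta\in(-1,1]$ it uses $\log(1+\zeta)\le\zeta-(1-\log 2)\zeta^2$, which stays valid all the way to $p\gg q$ and is closed via $1/p\ge 1/C_{\text{MAX}}$. This is where $1-\log 2$ comes from.

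You work in $r=p/q$ instead. There the quadratic lower bound on $h$ degenerates as $r\to\infty$, so your exceptional region is $\{p>2q\}$. You handle it with the monotonicity of $h(r)/(r-1)$ and the bound $p<C_{\text{MAX}}$, and you close the bounded-ratio region with the Taylor remainder of $h$.

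Your route actually gives the constant $4C_{\text{MAX}}$, which is sharper than the stated $2C_{\text{MAX}}/(1-\log 2)\approx 6.5\,C_{\text{MAX}}$. So your closing hedge about readjusting the threshold is unnecessary. The factor $1-\log 2$ is an artifact of the paper's particular elementary inequalities for $\log(1+\zeta)$, not something your split needs to reproduce.
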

\begin{proof}
Define $\zeta(x):= \frac{q(x)-p(x)}{p(x)}$ for $x\in\Omega$. Then $D_{\mathrm{KL}}(p\|q)= \int_{\Omega}p(x)\log\frac{p(x)}{q(x)}\d x= -\int_{\Omega}p(x)\log(1+\zeta(x))\d x$. Define two Borel sets: $A:= \{x \mid \zeta(x)>1\}$ and $B:= \{x \mid \zeta(x)\leq 1\}$; then  for $x\in A$, $1+\zeta(x)\leq e^{\alpha\zeta(x)}$ where $\alpha=\log2>0$; for $x\in B$, $1+\zeta(x)\leq e^{\zeta(x)-\beta\zeta(x)^2}$ where $\beta=1-\log2>0$. Note that $\int_{\Omega}p(x)\zeta(x)\d x= \int_{\Omega}(q(x)-p(x))\d x=0$, which implies $\int_{A}p(x)\zeta(x)\d x= - \int_{B}p(x)\zeta(x)\d x$. Thus 
    {
    \begin{equation*}
        \begin{aligned}
            D_{\mathrm{KL}}(p\|q) =&\ -\int_{A}p(x)\log(1+\zeta(x))\d x  -\int_{B}p(x)\log(1+\zeta(x))\d x\\
            \geq&\ -\alpha\int_{A}p(x)\zeta(x)\d x -\int_{B}p(x)\zeta(x)\d x + \beta\int_{B}p(x)\zeta(x)^2\d x\\
            =&\ (1-\alpha)\int_{A}p(x)\zeta(x)\d x + \beta\int_{B}p(x)\zeta(x)^2\d x\\
            =&\ (1-\log2)\left( \int_{A}|q(x)-p(x)|\d x + \int_{B}p(x) \left(\frac{q(x)-p(x)}{p(x)}\right)^2\d x  \right).
        \end{aligned}
    \end{equation*}
    }
    For the first term, we have $\int_{A}|q(x)-p(x)|\d x\geq \frac{1}{2C_{\text{MAX}}}\int_{A}|q(x)-p(x)|^2\d x$. For the second term, we have $\int_{B}p(x) \left(\frac{q(x)-p(x)}{p(x)}\right)^2\d x \geq \frac{1}{2C_{\text{MAX}}}\int_{B} \left| q(x)-p(x) \right|^2\d x$. Finally, we have $D_{\mathrm{KL}}(p\|q) \geq \frac{1-\log2}{2C_{\text{MAX}}}\int_{\Omega}|q(x)-p(x)|^2\d x$.
\end{proof}

\begin{lemma}\label{lem:inequality}
Let $p(x)$ and $q(x)$ be probability densities defined on a domain $\Omega$ satisfying $0 < C_{\text{min}} \le p(x),q(x) \le C_{\text{MAX}} < \infty$. Assume the score function $\nabla\log p(x)$ is bounded with $C_s = \sup_{x\in \Omega} \|\nabla\log p(x)\|^2<\infty$. Then, for any real  $k\neq 0$, the following inequality holds:
{
\begin{multline}\label{eqn:main-ineq}
    \int_{\Omega} \Bigl[p^k(x) \nabla\log p(x)-q^k(x) \nabla\log q(x)\Bigr]\cdot \Bigl(\nabla\log \frac{p(x)}{q(x)}\Bigr)p(x)\d x
\ge K_1^\sigma \int_{\Omega} \left \|\nabla\log \frac{p(x)}{q(x)} \right \|^2p(x)\d x + K_2^\sigma D_{\mathrm{KL}}(p\|q),
\end{multline}
}
where $\sigma=\operatorname{sign}(k)\in\{+,-\}$, and for any $\lambda>0$,
\[
K_1^+ = C_{\min}^k - \frac{\lambda}{2}C_s,
\qquad
K_2^+ = -\frac{k^2 C_{\text{MAX}}^2 \max\{C_{\min}^{2k-2},C_{\text{MAX}}^{2k-2}\}}{(1-\log 2) \lambda}
<0,
\qquad (k>0),
\]
and
\[
K_1^- = C_{\text{MAX}}^k - \frac{\lambda}{2}C_s,
\qquad
K_2^- = -\frac{k^2 C_{\text{MAX}}^2C_{\min}^{2k-2}}
{(1-\log 2)\lambda}
<0,
\qquad (k<0).
\]
In the applications below we take $p=p_t$ and $q=\widehat{p}_t$. Then the existence of positive constants $C_{\text{min}}$ and $C_{\text{MAX}}$ is guaranteed by Assumptions~\ref{subass:truedensity} and~\ref{subass:approx_regularity}.
\end{lemma}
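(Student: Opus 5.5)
We would prove \eqref{eqn:main-ineq} pointwise, splitting the integrand so that the only term without a sign is controlled by Young's inequality and then by Lemma~\ref{L2KL}. Set $u(x):=\nabla\log\frac{p(x)}{q(x)}=\nabla\log p-\nabla\log q$.

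The splitting should isolate the score of $p$, since only that score is assumed bounded:
\[
p^k\nabla\log p-q^k\nabla\log q
= q^k\bigl(\nabla\log p-\nabla\log q\bigr)+\bigl(p^k-q^k\bigr)\nabla\log p
= q^k u+\bigl(p^k-q^k\bigr)\nabla\log p .
\]
The left-hand side of \eqref{eqn:main-ineq} then equals
\[
\int_\Omega q^k\|u\|^2p\,\d x+\int_\Omega (p^k-q^k)\,\nabla\log p\cdot u\;p\,\d x .
\]

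For the first integral, $q^k\ge C_{\min}^k$ when $k>0$ and $q^k\ge C_{\text{MAX}}^k$ when $k<0$. This produces the leading parts of $K_1^\pm$.

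For the second integral, we apply Young's inequality with parameter $\lambda>0$:
\[
\bigl|(p^k-q^k)\nabla\log p\cdot u\bigr|\le \tfrac{\lambda}{2}\|\nabla\log p\|^2\|u\|^2+\tfrac{1}{2\lambda}(p^k-q^k)^2 .
\]
Using $\|\nabla\log p\|^2\le C_s$, the first piece contributes $-\frac{\lambda}{2}C_s\int\|u\|^2p$, which completes $K_1^\sigma$.

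The remaining piece is $-\frac{1}{2\lambda}\int(p^k-q^k)^2p$. By the mean value theorem, $p^k-q^k=k\xi^{k-1}(p-q)$ for some $\xi$ between $p$ and $q$, so $\xi\in[C_{\min},C_{\text{MAX}}]$. Hence $(p^k-q^k)^2\le k^2\xi^{2k-2}(p-q)^2$, and we bound $\xi^{2k-2}$ separately in each sign case.
\begin{itemize}
\item For $k>0$, the exponent $2k-2$ may have either sign, so $\xi^{2k-2}\le\max\{C_{\min}^{2k-2},C_{\text{MAX}}^{2k-2}\}$.
\item For $k<0$, the exponent $2k-2$ is negative, so $\xi^{2k-2}\le C_{\min}^{2k-2}$.
\end{itemize}
Bounding $p\le C_{\text{MAX}}$ and invoking Lemma~\ref{L2KL}, which gives $\int(p-q)^2\le\frac{2C_{\text{MAX}}}{1-\log2}D_{\mathrm{KL}}(p\|q)$, we obtain
\[
\frac{1}{2\lambda}\int_\Omega (p^k-q^k)^2p\,\d x\le \frac{k^2C_{\text{MAX}}^2\,M_k}{(1-\log 2)\lambda}\,D_{\mathrm{KL}}(p\|q),
\]
where $M_k$ is the bound on $\xi^{2k-2}$ from the relevant case above. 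This is exactly $-K_2^\sigma D_{\mathrm{KL}}(p\|q)$.

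Collecting the three estimates gives \eqref{eqn:main-ineq}. The only delicate point is choosing the splitting so that the unbounded score $\nabla\log q$ never appears outside $u$; once that choice is made, everything else is routine bookkeeping of constants, including the case-dependent bound on $\xi^{2k-2}$.
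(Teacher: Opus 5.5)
Your proposal is correct and follows the paper's proof step for step. You use the same splitting $q^k u+(p^k-q^k)\nabla\log p$, the same Young's inequality with parameter $\lambda$ combined with Cauchy--Schwarz against $C_s$, the same mean value theorem plus Lemma~\ref{L2KL} bound, and the same sign-dependent lower bound on $q^k$. Your explicit bound $\xi^{2k-2}\le C_{\min}^{2k-2}$ for $k<0$ is just the paper's $\max\{C_{\min}^{2k-2},C_{\text{MAX}}^{2k-2}\}$ evaluated in that case.
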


\begin{proof}
Decompose the integrand on the LHS as $I_1 + I_2$:
{
\begin{align*}
\text{LHS} &= \int_{\Omega}\Bigl[(p^k-q^k) \nabla\log p+q^k \nabla\log \frac{p}{q} \Bigr]\cdot\nabla\log \frac{p}{q} p\d x\\
&=\underbrace{ \int_{\Omega}\Bigl[(p^k-q^k) \nabla\log p \Bigr]\cdot \nabla\log \frac{p}{q} p\d x}_{I_1}+\underbrace{ \int_{\Omega} q^k \left \| \nabla\log \frac{p}{q} \right \|^2 p\d x}_{I_2}.
\end{align*}
}
To estimate \( I_1 \), we apply the mean value theorem to \( f (s) = s^k \). For each \( x \), there exists \( \xi(x) \in [\min\{p, q\}, \max\{p, q\}] \) such that $p^k - q^k = k \xi^{k-1} (p - q)$.
Applying Young's inequality with any constant  \( \lambda > 0 \):
{
\begin{align*}
I_1 &= \int_{\Omega} k \xi^{k-1} (p - q) \left[ \nabla \log p \cdot \nabla \log \frac{p}{q} \right] p \d x \\
&\ge - \frac{1}{2\lambda} \int_{\Omega} \left[ k \xi^{k-1} (p - q) \right]^2 p \d x - \frac{\lambda}{2} \int_{\Omega} \left[ \nabla \log p \cdot \nabla \log \frac{p}{q} \right]^2 p \d x.
\end{align*}
}
Using Cauchy–Schwarz and the bound $C_s$:
\begin{equation}
\left[ \nabla \log p \cdot \nabla \log \frac{p}{q} \right]^2 \le \| \nabla \log p \|^2 \left\| \nabla \log \frac{p}{q} \right\|^2 \le C_s \left\| \nabla \log \frac{p}{q} \right\|^2.
\end{equation}

Also, since $\xi(x)\in [C_{\min},C_{\text{MAX}}]$, we have
\[
|\xi(x)^{k-1}|
\le
\max\{C_{\min}^{k-1},C_{\text{MAX}}^{k-1}\}.
\]
Hence, using Lemma~\ref{L2KL},
\begin{align*}
\int_\Omega \left[k\xi^{k-1}(p-q)\right]^2 pdx
&\le
k^2 \max\{C_{\min}^{2k-2},C_{\text{MAX}}^{2k-2}\} C_{\text{MAX}}
\int_\Omega (p-q)^2dx \\
&\le
\frac{2k^2 C_{\text{MAX}}^2 \max\{C_{\min}^{2k-2},C_{\text{MAX}}^{2k-2}\}}
{1-\log 2}
D_{\KL}(p\|q).
\end{align*}
For $I_2$, we distinguish two cases.

If $k>0$, since $q(x)\ge C_{\min}$, we have
\[
I_2 \ge C_{\min}^k \int_\Omega \left\|\nabla \log\frac{p}{q}\right\|^2 pdx .
\]

If $k<0$, since $q(x)\le C_{\text{MAX}}$ and $s\mapsto s^k$ is decreasing on $(0,\infty)$, we have
\[
I_2 \ge C_{\text{MAX}}^k \int_\Omega \left\|\nabla \log\frac{p}{q}\right\|^2 pdx .
\]

The result follows by combining the bounds for $I_1$ and $I_2$.

\end{proof}

\medskip 
\begin{proof}[Proof of Theorem \ref{theo:KL_bound}]
   The proof is inspired by the methodologies presented in \cite[Proposition 1]{boffi2023probability} and \cite[Appendix E]{shen2024entropy}. However, dealing with the non-entropy internal energy term introduces substantial complexity, requiring the introduction of novel techniques for a thorough analysis.

    First, using  the definition of KL divergence, together with the facts  that  $p_t$ satisfies \eqref{eqn:continuity_f},  $\widehat{p}_t$ satisfies  \eqref{eqn:WGF}  and 
    $\int \partial_t p_t \d x  =0$, we derive that
   \begin{equation*}
        \begin{aligned}
            &\ \frac{\d}{\d t}D_{\mathrm{KL}}(p_t\|\widehat{p}_t) = \int_{\Omega} \left( \partial_t p_t \log \frac{p_t}{\widehat{p}_t} + p_t   \partial_t \left( \log \frac{p_t}{\widehat{p}_t}\right) \right) \d x 
            = \int_{\Omega} \left( \partial_t p_t \log \frac{p_t}{\widehat{p}_t} -  \partial_t  {\widehat{p}_t}   \frac{p_t}{\widehat{p}_t}\right) \d x \\
            =&\ \int_{\Omega}\mathbf{f}(x,t)\cdot\nabla\log\left(\frac{p_t}{\widehat{p}_t}\right) p_t\d x
             -\int_{\Omega}\mathcal{V}[\widehat{p}_t]\cdot\nabla\left(\frac{p_t}{\widehat{p}_t}\right)\widehat{p}_t\d x \\
            =&\  \int_{\Omega}\left( \mathbf{f}(x,t) - \mathcal{V}[p_t] +\mathcal{V}[p_t]-\mathcal{V}[\widehat{p}_t] \right)\cdot \nabla\log \left(\frac{p_t}{ \widehat{p}_t }\right)p_t\d x.
        \end{aligned}
    \end{equation*}
Using the expression of $\mathcal{V}[\widehat{p}_t]$ by \eqref{eqn:velocity}, we decompose the above  as:
   {
    \begin{equation*}
        \begin{aligned}        
            \ \frac{\d}{\d t}D_{\mathrm{KL}}(p_t\|\widehat{p}_t)&=\ \underbrace{\int_{\Omega}\left( \mathbf{f}(x,t) - \mathcal{V}[p_t](x,t)\right)\cdot \nabla\log \left(\frac{p_t}{ \widehat{p}_t }\right)p_t\d x}_{\text{Perturbation}}\\
            &\ - \underbrace{\int_{\Omega} \beta^{-1} \left( \nabla U^{\prime}_m(p_t) - \nabla U^{\prime}_m(\widehat{p}_t) \right)\cdot\nabla\log \left(\frac{p_t}{ \widehat{p}_t }\right)p_t\d x }_{\text{Internal}}\\
            &\ - \underbrace{\int_{\Omega} \left( \nabla V - \nabla V \right)\cdot\nabla\log \left(\frac{p_t}{ \widehat{p}_t }\right)p_t\d x }_{\text{Potential}}\\
            &\ - \underbrace{\int_{\Omega}\left( \nabla W \ast (p_t-\widehat{p}_t) \right)\cdot\nabla\log \left(\frac{p_t}{ \widehat{p}_t }\right)p_t\d x }_{\text{Interaction}}.
        \end{aligned}
    \end{equation*}
    }
    The bounds for these terms are studied below in four steps.
    
{\bf (Step 1.)} For the perturbation part, by Young's inequality with any constant $\Lambda > 0$:
    {
    \begin{equation*}
        \begin{aligned}
            &\ \int_{\Omega}\left( \mathbf{f}(x,t) - \mathcal{V}[p_t](x,t)\right)\cdot \nabla\log \left(\frac{p_t}{ \widehat{p}_t }\right)p_t\d x\\
            \leq&\ \frac{1}{2\Lambda}\int_{\Omega}\left\|\nabla\log \left(\frac{p_t}{ \widehat{p}_t }\right)\right\|^2 p_t\d x + \frac {\Lambda}{2}\int_{\Omega}\left\| \mathbf{f}(x,t) - \mathcal{V}[p_t](x,t)\right\|^2p_t\d x.
        \end{aligned}
    \end{equation*}
    }

{\bf (Step 2.)} For the internal energy part, there are two cases about diffusion ($m=1$) and power-law case ($m\ne 1$) for the internal energy $U_m$. We discuss each case respectively.  

{\bf (Entropy case $: U_m(\rho) =U_1(\rho)= \rho \log \rho$)}
    {
    \begin{equation*}
         -\int_{\Omega} \beta^{-1} \left( \nabla U^{\prime}_m(p_t) - \nabla U^{\prime}_m(\widehat{p}_t) \right)\cdot\nabla\log \left(\frac{p_t}{ \widehat{p}_t }\right)p_t\d x 
            = -\frac{1}{\beta}\int_{\Omega}\left\|\nabla\log \left(\frac{p_t}{ \widehat{p}_t }\right)\right\|^2 p_t\d x.
    \end{equation*}
    }

{\bf (Power-law case $: U_m(\rho)=\frac{1}{m-1}\rho^m,\ m\neq 1$)}

Set $k=m-1$. Then $k\neq 0$, and by Lemma~\ref{lem:inequality},
{\small
\begin{equation*}
-\int_\Omega \beta^{-1}\frac{m}{m-1}
\left(\nabla (p_t)^{m-1}-\nabla (\widehat p_t)^{m-1}\right)
\cdot
\nabla\log\left(\frac{p_t}{\widehat p_t}\right)p_tdx
\le
-\frac{m}{\beta}K_1^{\sigma_m}
\int_\Omega
\left\|\nabla\log\frac{p_t}{\widehat p_t}\right\|^2 p_tdx
-\frac{m}{\beta}K_2^{\sigma_m} D_{\KL}(p_t\|\widehat p_t),
\end{equation*}
}
where $\sigma_m=\operatorname{sign}(m-1)\in\{+,-\}$, and
$K_1^{\sigma_m},K_2^{\sigma_m}$ are the constants from
Lemma~\ref{lem:inequality} evaluated at $k=m-1$.
To ensure $K_1^{\sigma_m}>0$, it is sufficient to choose
\[
0<\lambda<\frac{2a_m}{C_s},
\qquad
a_m=
\begin{cases}
C_{\min}^{m-1}, & m>1,\\[1mm]
C_{\text{MAX}}^{m-1}, & 0<m<1.
\end{cases}
\]

{\bf (Step 3.)} The potential term vanishes trivially.

{\bf (Step 4.)} Due to Assumption \ref{subass:bounded} and Csisz\'{a}r–Kullback–Pinsker inequality:
    {
    \begin{equation*}
        \begin{aligned}
            &\ -\int_{\Omega}\left(\nabla W \ast (p_t-\widehat{p}_t)\right)\cdot\nabla\log \left(\frac{p_t}{ \widehat{p}_t }\right)p_t\d x\\
            \leq&\ \frac{1}{2\Lambda}\int_{\Omega}\left\|\nabla\log \left(\frac{p_t}{ \widehat{p}_t }\right)\right\|^2 p_t\d x+ \frac{\Lambda}{2} (C_W)^2 \left(\int_{\Omega}|p_t(y)-\widehat{p}_t(y)|\d y\right)^2\\
            \leq&\ \frac{1}{2\Lambda}\int_{\Omega}\left\|\nabla\log \left(\frac{p_t}{ \widehat{p}_t }\right)\right\|^2 p_t\d x+ \Lambda C_W^2 D_{\mathrm{KL}}(p_t\| \widehat{p}_t).
        \end{aligned}
    \end{equation*}
    }
    
Finally, we collect  these bounds.
    
    {\bf (Entropy Case)}
    Setting $\Lambda = \beta$, we obtain:
    {
    \begin{equation*}
        \frac{\d}{\d t}D_{\mathrm{KL}}(p_t\|\widehat{p}_t) \leq \left( \beta C_W^2 \right)D_{\mathrm{KL}}(p_t\| \widehat{p}_t)+ \frac{\beta}{2}\int_{\Omega}\left\| \mathbf{f}(x,t) - \mathcal{V}[p_t](x,t)\right\|^2p_t\d x.
    \end{equation*}
    }

{\bf (Power-law Case)}
{\small
\begin{equation*}
\frac{\d}{\d t}D_{\mathrm{KL}}(p_t\|\widehat{p}_t)
\leq\;
\left(\frac{1}{\Lambda}-\frac{m}{\beta}K_1^{\sigma_m}\right)
\int_{\Omega}\left\|\nabla\log \left(\frac{p_t}{\widehat{p}_t}\right)\right\|^2 p_t\d x
+
\left(\Lambda C_W^2-\frac{m}{\beta}K_2^{\sigma_m}\right)
D_{\mathrm{KL}}(p_t\|\widehat{p}_t)
+
\frac{\Lambda}{2}
\int_{\Omega}\left\| \mathbf{f}(x,t)-\mathcal{V}[p_t](x,t)\right\|^2 p_t\d x .
\end{equation*}
}
Setting
$
\Lambda=\frac{\beta}{mK_1^{\sigma_m}},
$
the gradient term vanishes, and therefore
\begin{equation*}
\frac{\d}{\d t}D_{\mathrm{KL}}(p_t\|\widehat{p}_t)
\leq
\left(
\frac{\beta}{mK_1^{\sigma_m}}C_W^2
-\frac{m}{\beta}K_2^{\sigma_m}
\right)
D_{\mathrm{KL}}(p_t\|\widehat{p}_t)
+
\frac{\beta}{2mK_1^{\sigma_m}}
\int_{\Omega}\left\| \mathbf{f}(x,t)-\mathcal{V}[p_t](x,t)\right\|^2 p_t\d x.
\end{equation*}

By Gronwall's inequality, we obtain
\[
\sup_{t\in[0,T]} D_{\KL}(p_t\|\widehat p_t)
\le
\exp\{\gamma T\}
\left(
\alpha \int_0^T\int_\Omega
\|\mathbf f(x,t)-\mathcal V[p_t](x,t)\|^2 p_tdxdt
\right),
\]
where $\frac{1}{2}\int_{0}^T\int_{\Omega}\left\| f - \mathcal{V}[p_t]\right\|^2p_t\d x\d t = J[\Phi]$ represents the flow matching loss \eqref{eqn:J-continuous}, and 
    the constants in the above bound  are defined as:
\begin{equation}\label{eqn:Lambda_barC}
\begin{aligned}
\frac{\alpha}{2} &=
\begin{cases}
\dfrac{\beta}{2}, & m=1,\\[2mm]
\dfrac{\beta}{2mK_1^{\sigma_m}}, & m\neq 1,
\end{cases}
\qquad
\gamma =
\begin{cases}
\beta C_W^2, & m=1,\\[2mm]
\dfrac{\beta}{mK_1^{\sigma_m}}C_W^2
-\dfrac{m}{\beta}K_2^{\sigma_m}, & m\neq 1,
\end{cases}
\end{aligned}
\end{equation}
where $\sigma_m=\operatorname{sign}(m-1)$ and
$K_1^{\sigma_m},K_2^{\sigma_m}$ are the constants from
Lemma~\ref{lem:inequality} with $k=m-1$.

\end{proof}

%% file: proof_disc_error_bound.tex
\begin{proof}[Proof of Theorem \ref{theo:disc_error_bound}]
For notational simplicity, we write $X^*_k(z) := X^*(t_k,z)$ and $X^N_k(z) := \Phi_k(z)$ for $k=0,\dots,K$, where $t_k = k\Delta t$.
We define the (expected) trajectory error at the grid points by
\[
    \epsilon_{t_k}
    :=
    \mathbb{E}_{z\sim \rho_0}\big\|X^*_k(z)-X^N_k(z)\big\|.
\]

\paragraph{Step 1: Average-velocity formulation at one time step.}
Fix $k\in\{1,\dots,K\}$ and $z$.
Using the integral form of the exact ODE,
\[
    X^*_k(z) - X^*_{k-1}(z)
    =
    \int_{t_{k-1}}^{t_k} \mathcal{V}[\widehat{p}_s]\big(X^*(s,z)\big)\d s,
\]
we define the \emph{exact average velocity} over the interval $[t_{k-1},t_k]$ by
\[
    \bar v^*_k(z)
    :=
    \frac{X^*_k(z)-X^*_{k-1}(z)}{\Delta t}
    =
    \frac{1}{\Delta t}
    \int_{t_{k-1}}^{t_k} 
        \mathcal{V}[\widehat{p}_s]\big(X^*(s,z)\big)\d s.
\]
By definition of the discrete trajectory $X^N_k(z)=\Phi_k(z)$, we introduce the \emph{discrete kinematic velocity}
\[
    v^N_k(z)
    :=
    \frac{X^N_k(z)-X^N_{k-1}(z)}{\Delta t}.
\]

We also introduce the \emph{ideal Crank-Nicolson} velocity (on the continuous model) and its empirical counterpart:
\[
    v^{\mathrm{CN},*}_k(z)
    :=
    \frac{1}{2}\Big(
       \mathcal{V}[\widehat{p}_{t_{k-1}}]\big(X^*_{k-1}(z)\big)
       +
       \mathcal{V}[\widehat{p}_{t_k}]\big(X^*_k(z)\big)
    \Big),
\]
\[
    v^{\mathrm{CN},N}_k(z)
    :=
    \frac{1}{2}\Big(
       \mathcal{V}_N[\widehat{p}_{t_{k-1}}]\big(X^N_{k-1}(z)\big)
       +
       \mathcal{V}_N[\widehat{p}_{t_k}]\big(X^N_k(z)\big)
    \Big).
\]
By definition,
\[
    \sup_{k,z} \big\|v^N_k(z) - v^{\mathrm{CN},N}_k(z)\big\| \le \varepsilon.
\]

\paragraph{Step 2: Local Crank-Nicolson discretization error.}
Define
\[
    f_k(s,z)
    := \mathcal{V}[\widehat{p}_s]\big(X^*(s,z)\big),\qquad s\in[t_{k-1},t_k].
\]
Then
\[
    \bar v^*_k(z)
    = \frac{1}{\Delta t}\int_{t_{k-1}}^{t_k} f_k(s,z)\d s,
    \qquad
    v^{\mathrm{CN},*}_k(z)
    = \frac{f_k(t_{k-1},z)+f_k(t_k,z)}{2}.
\]
Under the smoothness assumptions in Appendix~\ref{ass:densitytheta}, $f_k(\cdot,z)$ is twice continuously differentiable in $t$, with uniformly bounded second derivative.
Applying the classical error estimate of the trapezoidal rule (which is the time-discretization underlying Crank-Nicolson) yields
\[
    \Big\|
        \bar v^*_k(z) - v^{\mathrm{CN},*}_k(z)
    \Big\|
    \le C\Delta t^2,
\]
for some constant $C>0$ independent of $k$, $\Delta t$, and $N$.
Equivalently,
\begin{equation}\label{eq:loc-CN-disc}
    \bar v^*_k(z)
    =
    v^{\mathrm{CN},*}_k(z)
    + \mathcal{O}(\Delta t^2).
\end{equation}

\paragraph{Step 3: Ideal CN velocity vs. empirical CN velocity.}
We next bound the difference $v^{\mathrm{CN},*}_k(z) - v^{\mathrm{CN},N}_k(z)$.
Let
\[
    a_j(z):=\mathcal{V}[\widehat{p}_{t_j}]\big(X^*_j(z)\big),
    \qquad
    b_j(z):=\mathcal{V}_N[\widehat{p}_{t_j}]\big(X^N_j(z)\big),
    \quad j=k-1,k.
\]
Then
\[
    v^{\mathrm{CN},*}_k(z)-v^{\mathrm{CN},N}_k(z)
    = \frac{1}{2}\Big((a_{k-1}-b_{k-1}) + (a_k-b_k)\Big),
\]
and hence
\begin{equation}\label{eq:vCNstar-vCNN-bound}
    \big\|v^{\mathrm{CN},*}_k(z)-v^{\mathrm{CN},N}_k(z)\big\|
    \le \frac{1}{2}\Big(
        \|a_{k-1}(z)-b_{k-1}(z)\|
        +
        \|a_k(z)-b_k(z)\|
    \Big).
\end{equation}
For a generic index $j\in\{k-1,k\}$, we decompose
\[
\begin{aligned}
    \|a_j(z)-b_j(z)\|
    &=
    \big\|
        \mathcal{V}[\widehat{p}_{t_j}]\big(X^*_j(z)\big)
        -
        \mathcal{V}_N[\widehat{p}_{t_j}]\big(X^N_j(z)\big)
    \big\| \\
    &\le
    \underbrace{
    \big\|
        \mathcal{V}[\widehat{p}_{t_j}]\big(X^*_j(z)\big)
        -
        \mathcal{V}_N[\widehat{p}_{t_j}]\big(X^*_j(z)\big)
    \big\|
    }_{\text{finite-sample approximation of the field}}
    \\
    &\quad+
    \underbrace{
    \big\|
        \mathcal{V}_N[\widehat{p}_{t_j}]\big(X^*_j(z)\big)
        -
        \mathcal{V}_N[\widehat{p}_{t_j}]\big(X^N_j(z)\big)
    \big\|
    }_{\text{Lipschitzinspace $\times$ trajectory error}}.
\end{aligned}
\]

Since $\mathcal V_N[\widehat p_{t_j}](x)$ is an unbiased empirical average of i.i.d.\ random vectors with uniformly bounded variance \ref{subass:bounded}, it follows from the standard Monte Carlo estimate that
\[
\mathbb E\Big\|
\mathcal V[\widehat p_{t_j}](x)-\mathcal V_N[\widehat p_{t_j}](x)
\Big\|
\le C_1 N^{-1/2}.
\]
In particular,
\begin{equation}\label{eq:sampling-V}
    \mathbb{E}_{z\sim\rho_0}\big\|
        \mathcal{V}[\widehat{p}_{t_j}]\big(X^*_j(z)\big)
        -
        \mathcal{V}_N[\widehat{p}_{t_j}]\big(X^*_j(z)\big)
    \big\|
    \le C_1 N^{-1/2}.
\end{equation}

Moreover, by the Lipschitz continuity of $\mathcal{V}_N[\widehat{p}_{t_j}](\cdot)$ in space (Assumption~\ref{ass:densitytheta}), there exists $L_x>0$ such that
\[
    \big\|
        \mathcal{V}_N[\widehat{p}_{t_j}]\big(X^*_j(z)\big)
        -
        \mathcal{V}_N[\widehat{p}_{t_j}]\big(X^N_j(z)\big)
    \big\|
    \le
    L_x \big\|X^*_j(z)-X^N_j(z)\big\|.
\]
Taking expectation in $z\sim\rho_0$ and recalling the definition of $\epsilon_{t_j}$, we obtain
\begin{equation}\label{eq:Lip-VN}
    \mathbb{E}_{z\sim\rho_0}\big\|
        \mathcal{V}_N[\widehat{p}_{t_j}]\big(X^*_j(z)\big)
        -
        \mathcal{V}_N[\widehat{p}_{t_j}]\big(X^N_j(z)\big)
    \big\|
    \le
    L_x\epsilon_{t_j}.
\end{equation}

Combining \eqref{eq:vCNstar-vCNN-bound}, \eqref{eq:sampling-V}, and \eqref{eq:Lip-VN}, we deduce that
\begin{equation}\label{eq:CNstar-CNN-expect}
    \mathbb{E}_{z\sim\rho_0}
    \big\|
        v^{\mathrm{CN},*}_k(z)-v^{\mathrm{CN},N}_k(z)
    \big\|
    \le
    C_1 N^{-1/2}
    + C_2 \max\{\epsilon_{t_{k-1}},\epsilon_{t_k}\},
\end{equation}
for some constant $C_2>0$ independent of $N$ and $\Delta t$.

\paragraph{Step 4: Consistency residual relative to the exact Crank-Nicolson driving force.}
By the definition of $\varepsilon$ in~\eqref{eqn:epsilon_def},
\[
    \sup_{z}\big\|v^N_k(z)-v^{\mathrm{CN},N}_k(z)\big\| \le \varepsilon,
\]
and thus, in particular,
\begin{equation}\label{eq:train-error-vel}
    \mathbb{E}_{z\sim\rho_0}
    \big\|v^N_k(z)-v^{\mathrm{CN},N}_k(z)\big\|
    \le \varepsilon.
\end{equation}

\paragraph{Step 5: One-step error recursion in average velocity form.}
For each $z$,
\[
    X^*_k(z)-X^N_k(z)
    =
    X^*_{k-1}(z)-X^N_{k-1}(z)
    +
    \Delta t\Big(
        \bar v^*_k(z) - v^N_k(z)
    \Big),
\]
hence
\[
    \big\|X^*_k(z)-X^N_k(z)\big\|
    \le
    \big\|X^*_{k-1}(z)-X^N_{k-1}(z)\big\|
    +
    \Delta t\big\|
        \bar v^*_k(z) - v^N_k(z)
    \big\|.
\]
Taking expectation and using the triangle inequality,
\[
\begin{aligned}
    \epsilon_{t_k}
    &\le
    \epsilon_{t_{k-1}}
    +
    \Delta t
    \mathbb{E}_{z\sim\rho_0}
    \Big\|
        \bar v^*_k(z)-v^N_k(z)
    \Big\|.
\end{aligned}
\]
We now split the average-velocity discrepancy into the three components derived above:
\[
\begin{aligned}
    \bar v^*_k(z)-v^N_k(z)
    &=
    \underbrace{
    \bar v^*_k(z)-v^{\mathrm{CN},*}_k(z)
    }_{\text{CN discretization error}}
    +
    \underbrace{
    v^{\mathrm{CN},*}_k(z)-v^{\mathrm{CN},N}_k(z)
    }_{\text{finite-sample + trajectory error}}
    +
    \underbrace{
    v^{\mathrm{CN},N}_k(z)-v^N_k(z)
}_{\text{consistency residual}}.
\end{aligned}
\]
Using \eqref{eq:loc-CN-disc}, \eqref{eq:CNstar-CNN-expect}, and \eqref{eq:train-error-vel}, we obtain
\[
\begin{aligned}
    \mathbb{E}_{z\sim\rho_0}
    \big\|
        \bar v^*_k(z)-v^N_k(z)
    \big\|
    &\le
    \mathcal{O}(\Delta t^2)
    + C_1 N^{-1/2}
    + C_2 \max\{\epsilon_{t_{k-1}},\epsilon_{t_k}\}
    + \varepsilon.
\end{aligned}
\]
Therefore,
\begin{equation}\label{eq:error-recursion-raw}
    \epsilon_{t_k}
    \le
    \epsilon_{t_{k-1}}
    +
    \Delta t\Big(
        \mathcal{O}(\Delta t^2)
        + C_1 N^{-1/2}
        + C_2 \max\{\epsilon_{t_{k-1}},\epsilon_{t_k}\}
        + \varepsilon
    \Big).
\end{equation}

Using $\max\{\epsilon_{t_{k-1}},\epsilon_{t_k}\}\le \epsilon_{t_{k-1}}+\epsilon_{t_k}$ and absorbing the resulting terms, we can rewrite \eqref{eq:error-recursion-raw} as
\[
    \epsilon_{t_k}
    \le
    (1+C\Delta t)\epsilon_{t_{k-1}}
    +
    \Delta t\Big(
        C\Delta t^2
        + C N^{-1/2}
        + C\varepsilon
    \Big),
\]
for some constant $C>0$ independent of $N$, $\Delta t$, and $k$ (provided $\Delta t$ is sufficiently small so that $1-C_2\Delta t>0$).
Applying the discrete Grönwall inequality and using the shared initial condition $X^*_0 = X^N_0$ (i.e.\ $\epsilon_{t_0}=0$), we obtain
\[
    \epsilon_{t_k}
    \le
    C' t_k
    \Big(
        \Delta t^2
        + N^{-1/2}
        + \varepsilon
    \Big),
\]
for all $k$ with $t_k\le T$, where $C'>0$ is a constant depending only on $T$ and the regularity constants of $\mathcal{V}$.
This proves the claimed bound~\eqref{eqn:disc_error} and completes the proof.
\end{proof}

%% file: proof_gMAM.tex
\begin{assumption}\label{ass:gMAM}
\hfill
    \begin{enumerate}
        \item $\mathcal F(\rho_a),\mathcal F(\rho_b)<+\infty$.
        \item  For $\forall T \ge 0$ and  every path $p\in AC_{\rho_a,\rho_b,T}$, the map $t\mapsto \mathcal F(p_t)$ is absolutely continuous on $[0,T]$ and satisfies
\[
\frac{d}{dt}\mathcal F(p_t)
=
\Bigl\langle
\nabla_{\d_{\mathcal W}}\mathcal F(p_t),\partial_t p_t
\Bigr\rangle_{-1,p_t}
\qquad\text{for a.e. }t\in(0,T).
\]
    \end{enumerate}
\end{assumption}
\begin{lemma}\label{lem:finite_action_L2}
Under the assumptions \ref{ass:gMAM}
,if $S_T[p]<+\infty$, then
\[
\|\partial_t p_t\|_{-1,p_t}\in L^2(0,T),
\qquad
\|\nabla_{\d_{\mathcal W}}\mathcal F(p_t)\|_{-1,p_t}\in L^2(0,T),
\]
and
\begin{equation}\label{eq:finite_action_identity}
S_T[p]
=
\frac12\int_0^T \|\partial_t p_t\|_{-1,p_t}^2\d t
+
\frac12\int_0^T \|\nabla_{\d_{\mathcal W}}\mathcal F(p_t)\|_{-1,p_t}^2\d t
+
\mathcal F(\rho_b)-\mathcal F(\rho_a).
\end{equation}
\end{lemma}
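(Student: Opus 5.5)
The plan is to expand the square inside the action pointwise in time, and then to use the chain rule of Assumption~\ref{ass:gMAM} to identify the cross term as a derivative of the free energy. Write $a_t:=\|\partial_t p_t\|_{-1,p_t}$ and $b_t:=\|\nabla_{\d_{\mathcal W}}\mathcal F(p_t)\|_{-1,p_t}$, and $c_t:=\langle \nabla_{\d_{\mathcal W}}\mathcal F(p_t),\partial_t p_t\rangle_{-1,p_t}$. By bilinearity of the inner product defining $\|\cdot\|_{-1,p_t}$, for a.e.\ $t$ we have
\[
\|\partial_t p_t+\nabla_{\d_{\mathcal W}}\mathcal F(p_t)\|_{-1,p_t}^2=a_t^2+b_t^2+2c_t .
\]
This is legitimate since both vectors lie in the tangent space at $p_t$. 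Since $p$ is absolutely continuous in the Wasserstein sense, the metric derivative exists a.e. It is natural to take it to be $a_t\in L^1(0,T)$, but a priori we do not know that $a_t$ or $b_t$ is square integrable.

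\textbf{Step 1: integrability.} This is the main obstacle, because the expansion only gives $\tfrac12(a_t^2+b_t^2+2c_t)\ge 0$ with $c_t$ possibly negative. I would avoid Cauchy--Schwarz alone. Instead, I would use the chain rule to control $\int c_t$. For every $s\in[0,T]$, Assumption~\ref{ass:gMAM}(2) gives $\int_0^s c_t\,\d t=\mathcal F(p_s)-\mathcal F(\rho_a)$. The free energy is bounded below along the path: here one would invoke that $\mathcal F$ is bounded below on the relevant set. Alternatively, and more safely, one can use that $t\mapsto \mathcal F(p_t)$ is absolutely continuous, hence bounded on $[0,T]$ by continuity on a compact interval. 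Absolute continuity also gives $c_t\in L^1(0,T)$, since it is the a.e.\ derivative of an AC function.

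With $c\in L^1$ established, $a_t^2+b_t^2 = 2\cdot(\text{integrand of }S_T) - 2c_t$ is the difference of two $L^1$ functions. The first is in $L^1$ because $S_T[p]<\infty$. Hence $a^2+b^2\in L^1(0,T)$, and since both are nonnegative, each of $a^2$ and $b^2$ is integrable. If one prefers not to rely on $c\in L^1$ directly from absolute continuity, the fallback is a truncation argument. On $E_M=\{a_t+b_t\le M\}$ all terms are integrable. Letting $M\to\infty$ with Fatou's lemma, and using $|c_t|\le a_tb_t$, one would then show that the negative part of $c$ is dominated.

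\textbf{Step 2: the identity.} Once $a,b\in L^2$ and $c\in L^1$, integrating the pointwise expansion over $[0,T]$ splits the action into three finite integrals. By Assumption~\ref{ass:gMAM}(2) and the fundamental theorem of calculus for absolutely continuous functions,
\[
\int_0^T c_t\,\d t=\mathcal F(p_T)-\mathcal F(p_0)=\mathcal F(\rho_b)-\mathcal F(\rho_a),
\]
and both values are finite by Assumption~\ref{ass:gMAM}(1). Dividing by two yields \eqref{eq:finite_action_identity}.
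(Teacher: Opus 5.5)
Your argument is correct and is essentially the paper's proof. You expand $\|\partial_t p_t+\nabla_{\d_{\mathcal W}}\mathcal F(p_t)\|_{-1,p_t}^2$ pointwise, and you use absolute continuity of $t\mapsto\mathcal F(p_t)$ together with the chain rule to put the cross term in $L^1(0,T)$ with integral $\mathcal F(\rho_b)-\mathcal F(\rho_a)$. You then deduce integrability of $a_t^2+b_t^2$, and hence of each nonnegative term, before integrating the identity. The lower-boundedness remark and the truncation/Fatou fallback are unnecessary; the fallback as sketched would not close on its own anyway, since $a_tb_t$ is not known to be integrable.
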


\begin{proof}
Set
\[
u_t:=\partial_t p_t,
\qquad
v_t:=\nabla_{\d_{\mathcal W}}\mathcal F(p_t).
\]
Since $S_T[p]<+\infty$, the definition of $S_T$ yields
\[
\|u_t+v_t\|_{-1,p_t}^2\in L^1(0,T).
\]
Moreover, by absolute continuity of $t\mapsto \mathcal F(p_t)$ and the chain rule,
\[
\langle v_t,u_t\rangle_{-1,p_t}\in L^1(0,T),
\qquad
\int_0^T \langle v_t,u_t\rangle_{-1,p_t}\d t
=
\mathcal F(\rho_b)-\mathcal F(\rho_a).
\]
Using the pointwise identity
\[
\|u_t\|_{-1,p_t}^2+\|v_t\|_{-1,p_t}^2
=
\|u_t+v_t\|_{-1,p_t}^2
-
2\langle v_t,u_t\rangle_{-1,p_t},
\]
we conclude that
\[
\|u_t\|_{-1,p_t}^2+\|v_t\|_{-1,p_t}^2\in L^1(0,T).
\]
Since both terms on the left-hand side are nonnegative, each belongs to $L^1(0,T)$ separately, i.e.
\[
\|\partial_t p_t\|_{-1,p_t}\in L^2(0,T),
\qquad
\|\nabla_{\d_{\mathcal W}}\mathcal F(p_t)\|_{-1,p_t}\in L^2(0,T).
\]
Integrating the same identity and using the chain rule again gives
\[
\int_0^T \|u_t+v_t\|_{-1,p_t}^2\d t
=
\int_0^T \|u_t\|_{-1,p_t}^2\d t
+
\int_0^T \|v_t\|_{-1,p_t}^2\d t
+
2\bigl(\mathcal F(\rho_b)-\mathcal F(\rho_a)\bigr),
\]
which is exactly \eqref{eq:finite_action_identity}.
\end{proof}

\begin{lemma}\label{lem:finite_geometric_parts}
Under the assumptions \ref{ass:gMAM}, let
\[
a_p(\tau):=\|\partial_\tau p_\tau\|_{-1,p_\tau},\qquad
b_p(\tau):=\|\nabla_{\d_{\mathcal W}}\mathcal F(p_\tau)\|_{-1,p_\tau},\qquad
c_p(\tau):=
\Bigl\langle
\nabla_{\d_{\mathcal W}}\mathcal F(p_\tau),\partial_\tau p_\tau
\Bigr\rangle_{-1,p_\tau}.
\]
If $\widehat S[p]<+\infty$, then $c_p\in L^1(0,1)$,
\(
\int_0^1 c_p(\tau)\d\tau=\mathcal F(\rho_b)-\mathcal F(\rho_a),
\)
and $a_pb_p\in L^1(0,1)$. In particular,
\[
\widehat S[p]
=
\int_0^1 a_p(\tau)b_p(\tau)\d\tau
+
\int_0^1 c_p(\tau)\d\tau.
\]
\end{lemma}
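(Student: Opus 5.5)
We mirror the proof of Lemma~\ref{lem:finite_action_L2}, using the chain rule of Assumption~\ref{ass:gMAM}. The first step is to note that Assumption~\ref{ass:gMAM}(2) is stated for paths parametrized on $[0,T]$; with $T=1$ it applies directly to $p\in AC_{\rho_a,\rho_b,1}$.

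It gives that $\tau\mapsto\mathcal F(p_\tau)$ is absolutely continuous on $[0,1]$. Its a.e.\ derivative equals $c_p(\tau)$. Since the derivative of an absolutely continuous function is integrable, we get $c_p\in L^1(0,1)$. The fundamental theorem of calculus for absolutely continuous functions, together with the endpoint conditions $p_0=\rho_a$, $p_1=\rho_b$, then yields
\[
\int_0^1 c_p\,\d\tau=\mathcal F(\rho_b)-\mathcal F(\rho_a),
\]
which is finite by Assumption~\ref{ass:gMAM}(1).

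Next we show $a_pb_p\in L^1(0,1)$. By Cauchy--Schwarz in the $\langle\cdot,\cdot\rangle_{-1,p_\tau}$ inner product, $|c_p|\le a_pb_p$. Hence the integrand $g:=a_pb_p+c_p$ of $\widehat S$ is pointwise nonnegative. Finiteness of $\widehat S[p]$ means, by definition \eqref{eqn:geo-action}, that $\int_0^1 g\,\d\tau<\infty$, so $g\in L^1(0,1)$. We then write $a_pb_p=g-c_p$, a difference of two $L^1$ functions. This gives $a_pb_p\in L^1$, and linearity of the integral yields the decomposition of $\widehat S[p]$.

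The one delicate point is interpreting ``the integral converges'' in \eqref{eqn:geo-action}. Because $g\ge0$, the Lebesgue integral always exists in $[0,\infty]$, so convergence is equivalent to $g\in L^1$; no cancellation issue arises. We would state this explicitly. We would also remark that $a_p$ and $b_p$ are measurable, being a.e.\ defined norms along an absolutely continuous curve, so that all the products above are meaningful.
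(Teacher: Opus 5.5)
Your proposal is correct and follows the paper's proof essentially step for step. The paper likewise argues that $g_p=a_pb_p+c_p\ge 0$ by Cauchy--Schwarz, that $g_p\in L^1$ because $\widehat S[p]<\infty$, that the chain rule in Assumption~\ref{ass:gMAM} gives $c_p\in L^1$ with the endpoint identity, and hence that $a_pb_p=g_p-c_p\in L^1$; your remarks on taking $T=1$ and on convergence of a nonnegative integrand simply make explicit what the paper leaves implicit.
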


\begin{proof}
Set
\[
g_p(\tau):=a_p(\tau)b_p(\tau)+c_p(\tau).
\]
By Cauchy-Schwarz,
\[
c_p(\tau)\ge -a_p(\tau)b_p(\tau)
\qquad\text{for a.e. }\tau\in(0,1),
\]
hence $g_p(\tau)\ge0$ a.e. Since $\widehat S[p]<+\infty$, the definition of $\widehat S$ implies that $g_p\in L^1(0,1)$. On the other hand, the absolute continuity of $\tau\mapsto \mathcal F(p_\tau)$ and the chain rule give
\[
c_p\in L^1(0,1),
\qquad
\int_0^1 c_p(\tau)\d\tau
=
\mathcal F(\rho_b)-\mathcal F(\rho_a).
\]
Therefore
\[
a_pb_p=g_p-c_p\in L^1(0,1),
\]
and the claimed representation of $\widehat S[p]$ follows.
\end{proof}


\begin{proof}[Proof of Theorem \ref{thm:gMAM_equivalence}]
We prove the two inequalities separately.

\medskip
\noindent\textbf{Step 1.}
Fix $T>0$ and $p\in AC_{\rho_a,\rho_b,T}$. If $S_T[p]=+\infty$, the claim is immediate. Assume $S_T[p]<+\infty$. By Lemma~\ref{lem:finite_action_L2},
\[
\|\partial_t p_t\|_{-1,p_t},
\|\nabla_{\d_{\mathcal W}}\mathcal F(p_t)\|_{-1,p_t}\in L^2(0,T).
\]
Define $\tilde p_\tau:=p_{T\tau}$ on $[0,1]$. Then $\tilde p\in AC_{\rho_a,\rho_b,1}$ and, with $t=T\tau$,
\[
\partial_t p_t=\frac1T\partial_\tau\tilde p_\tau
\qquad\text{for a.e. }\tau\in(0,1).
\]
Substituting this relation into \eqref{eqn:rate-functional} gives
\begin{align*}
S_T[p]
&=
\frac12\int_0^1
T\Bigl\|
\frac1T\partial_\tau\tilde p_\tau+\nabla_{\d_{\mathcal W}}\mathcal F(\tilde p_\tau)
\Bigr\|_{-1,\tilde p_\tau}^2\d\tau \\
&=
\frac1{2T}\int_0^1 \|\partial_\tau\tilde p_\tau\|_{-1,\tilde p_\tau}^2\d\tau
+
\frac T2\int_0^1 \|\nabla_{\d_{\mathcal W}}\mathcal F(\tilde p_\tau)\|_{-1,\tilde p_\tau}^2\d\tau
+
\int_0^1
\Bigl\langle
\nabla_{\d_{\mathcal W}}\mathcal F(\tilde p_\tau),\partial_\tau\tilde p_\tau
\Bigr\rangle_{-1,\tilde p_\tau}\d\tau\\
& =\frac{A}{2T}+\frac{BT}{2}+C\ge \sqrt{AB}+C,
\end{align*}
where
\[
A:=\int_0^1 \|\partial_\tau\tilde p_\tau\|_{-1,\tilde p_\tau}^2\d\tau,\qquad
B:=\int_0^1 \|\nabla_{\d_{\mathcal W}}\mathcal F(\tilde p_\tau)\|_{-1,\tilde p_\tau}^2\d\tau,
\qquad
C:=\int_0^1
\Bigl\langle
\nabla_{\d_{\mathcal W}}\mathcal F(\tilde p_\tau),\partial_\tau\tilde p_\tau
\Bigr\rangle_{-1,\tilde p_\tau}\d\tau.
\]
By Cauchy-Schwarz,
\[
\sqrt{AB}\ge
\int_0^1
\|\partial_\tau\tilde p_\tau\|_{-1,\tilde p_\tau}
\|\nabla_{\d_{\mathcal W}}\mathcal F(\tilde p_\tau)\|_{-1,\tilde p_\tau}\d\tau.
\]
Therefore
\[
S_T[p]\ge \widehat S[\tilde p].
\]
Since this estimate holds for every $T>0$ and every $p\in AC_{\rho_a,\rho_b,T}$,
\[
\inf_{T>0}\inf_{p\in AC_{\rho_a,\rho_b,T}} S_T[p]
\ge
\inf_{q\in AC_{\rho_a,\rho_b,1}} \widehat S[q].
\]

\medskip
\noindent\textbf{Step 2.}
Fix $p\in AC_{\rho_a,\rho_b,1}$. If $\widehat S[p]=+\infty$, then
\[
\inf_{T>0}\inf_{q\in AC_{\rho_a,\rho_b,T}} S_T[q]\le \widehat S[p]
\]
is immediate. Assume $\widehat S[p]<+\infty$.

Since the integrand in \eqref{eqn:geo-action} is positively homogeneous of degree one in $\partial_\tau p_\tau$, the functional $\widehat S$ is invariant under absolutely continuous monotone reparameterizations. Thus, without loss of generality, we may assume that 
$p$ is parametrized with constant speed
$
a(\tau):=\|\partial_\tau p_\tau\|_{-1,p_\tau}\equiv \ell
$ for a.e. $\tau\in(0,1),
$
for some $\ell\ge0$. If $\ell=0$, then $p$ is constant, hence $\rho_a=\rho_b$, and both sides of \eqref{eqn:gMAM_equivalence} vanish. We therefore restrict to the case $\ell>0$. Define
\[
b(\tau):=\|\nabla_{\d_{\mathcal W}}\mathcal F(p_\tau)\|_{-1,p_\tau},\qquad
c(\tau):=\Bigl\langle
\nabla_{\d_{\mathcal W}}\mathcal F(p_\tau),\partial_\tau p_\tau
\Bigr\rangle_{-1,p_\tau}.
\]
By Lemma~\ref{lem:finite_geometric_parts}, we have $ab\in L^1(0,1)$, $c\in L^1(0,1)$, and
\[
\widehat S[p]=\int_0^1 a(\tau)b(\tau)\d\tau+\int_0^1 c(\tau)\d\tau.
\]

For $\varepsilon>0$, define
\[
\nu_\varepsilon(\tau):=\frac{a(\tau)}{b(\tau)+\varepsilon},\qquad
t_\varepsilon(\tau):=\int_0^\tau \nu_\varepsilon(s)\d s.
\]
Since $a(\tau)=\ell>0$ a.e. and $b(\tau)+\varepsilon\ge\varepsilon$, we have
\[
0<\nu_\varepsilon(\tau)\le \frac{\ell}{\varepsilon}
\qquad\text{for a.e. }\tau\in(0,1),
\]
so $\nu_\varepsilon\in L^1(0,1)$ and $t_\varepsilon$ is absolutely continuous and strictly increasing. Set
\[
T_\varepsilon:=t_\varepsilon(1),
\]
and let $\tau_\varepsilon:[0,T_\varepsilon]\to[0,1]$ denote the inverse map of $t_\varepsilon$. Then $\tau_\varepsilon$ is absolutely continuous and
\[
\dot\tau_\varepsilon(t)=\frac1{\nu_\varepsilon(\tau_\varepsilon(t))}
\qquad\text{for a.e. }t\in(0,T_\varepsilon).
\]
Define
\[
p_t^\varepsilon:=p_{\tau_\varepsilon(t)},\qquad t\in[0,T_\varepsilon].
\]
Then $p^\varepsilon\in AC_{\rho_a,\rho_b,T_\varepsilon}$. Moreover,
\[
\frac{a(\tau)^2}{\nu_\varepsilon(\tau)}=a(\tau)\bigl(b(\tau)+\varepsilon\bigr)\in L^1(0,1),
\qquad
b(\tau)^2\nu_\varepsilon(\tau)=\frac{a(\tau)b(\tau)^2}{b(\tau)+\varepsilon}\le a(\tau)b(\tau)\in L^1(0,1),
\]
and $c\in L^1(0,1)$, so in particular $S_{T_\varepsilon}[p^\varepsilon]<+\infty$.

Using the chain rule and the change of variables $t=t_\varepsilon(\tau)$, we obtain
\begin{align*}
S_{T_\varepsilon}[p^\varepsilon]
&=
\frac12\int_0^{T_\varepsilon}
\Bigl\|
\partial_t p_t^\varepsilon+\nabla_{\d_{\mathcal W}}\mathcal F(p_t^\varepsilon)
\Bigr\|_{-1,p_t^\varepsilon}^2\d t \\
&=
\frac12\int_0^1
\left(
\frac{a(\tau)^2}{\nu_\varepsilon(\tau)}
+
b(\tau)^2\nu_\varepsilon(\tau)
\right)\d\tau
+
\int_0^1 c(\tau)\d\tau.
\end{align*}
Substituting $\nu_\varepsilon(\tau)=a(\tau)/(b(\tau)+\varepsilon)$ gives
\[
\frac12\left(
\frac{a^2}{\nu_\varepsilon}+b^2\nu_\varepsilon
\right)
=
\frac12\left(
a(b+\varepsilon)+\frac{ab^2}{b+\varepsilon}
\right)
=
ab+\frac{\varepsilon^2 a}{2(b+\varepsilon)}.
\]
Therefore
\[
S_{T_\varepsilon}[p^\varepsilon]
=
\widehat S[p]+R_\varepsilon[p],
\qquad
R_\varepsilon[p]:=
\frac12\int_0^1 \frac{\varepsilon^2 a(\tau)}{b(\tau)+\varepsilon}\d\tau.
\]
Since
\[
0\le \frac{\varepsilon^2 a(\tau)}{b(\tau)+\varepsilon}\le \varepsilon a(\tau)
\qquad\text{for a.e. }\tau\in(0,1),
\]
and $a\in L^1(0,1)$, it follows that
\[
0\le R_\varepsilon[p]\le \frac{\varepsilon}{2}\int_0^1 a(\tau)\d\tau
=\frac{\varepsilon\ell}{2}\xrightarrow[\varepsilon\downarrow0]{}0.
\]
Hence
\[
\lim_{\varepsilon\downarrow0} S_{T_\varepsilon}[p^\varepsilon]=\widehat S[p].
\]
Since $p^\varepsilon\in AC_{\rho_a,\rho_b,T_\varepsilon}$ for every $\varepsilon>0$,
\[
\inf_{T>0}\inf_{q\in AC_{\rho_a,\rho_b,T}} S_T[q]
\le S_{T_\varepsilon}[p^\varepsilon]
\qquad\text{for all }\varepsilon>0.
\]
Passing to the limit $\varepsilon\downarrow0$ yields
\[
\inf_{T>0}\inf_{q\in AC_{\rho_a,\rho_b,T}} S_T[q]\le \widehat S[p].
\]
Taking the infimum over all $p\in AC_{\rho_a,\rho_b,1}$ gives
\[
\inf_{T>0}\inf_{q\in AC_{\rho_a,\rho_b,T}} S_T[q]
\le
\inf_{p\in AC_{\rho_a,\rho_b,1}} \widehat S[p].
\]
Combining Steps 1 and 2 proves \eqref{eqn:gMAM_equivalence}.
\end{proof}

%% file: proof_EL_compare.tex
\begin{proof}
  Let \(\hat{J}^{K,*}\) denote the minimum value of the Lagrangian problem \eqref{eqn:prob_lagrange}, and \(\hat{S}^{K,*}\) denote the minimum value of the Eulerian problem \eqref{eqn:prob_euler}.
  First, we compare the minimum values of the two problems. For any admissible sequence of maps \(\Phi\), let \(p\) be the induced density sequence defined by \(p_k=(\Phi_k)_\#\rho_0\). By definition of the Wasserstein-2 distance, for each \(k\), the coupling \((\Phi_{k-1},\Phi_k)_\#\rho_0\) between \(p_{k-1}\) and \(p_k\) yields
  \[
  W_2^2(p_{k-1},p_k)\le \mathbb{E}\|\Phi_k-\Phi_{k-1}\|^2.
  \]
  Therefore,
  \begin{equation}\label{eq:basic_inequality}
    \hat{S}^K[p]\le \hat{J}^K[\Phi].
  \end{equation}
  Taking the infimum over all admissible \(\Phi\), we obtain
  \[
    \hat{S}^{K,*}\le \hat{J}^{K,*}.
  \]

  Conversely, let \(p^*\) be a minimizer of \eqref{eqn:prob_euler}, so that \(\hat{S}^K[p^*]=\hat{S}^{K,*}\). Let \(\Psi_k\) be the optimal transport map from \(p^*_{k-1}\) to \(p^*_k\), and define the Lagrangian map sequence by
  \[
    \Phi_0^*=\mathrm{id},\qquad
    \Phi_k^*=\Psi_k\circ\Phi_{k-1}^*
    =\Psi_k\circ\cdots\circ\Psi_1.
  \]
  Then \((\Phi_k^*)_\#\rho_0=p_k^*\) for every \(k\), so \(\Phi^*\) is admissible for \eqref{eqn:prob_lagrange}. By construction, each transport cost in \(\hat{J}^K[\Phi^*]\) agrees exactly with the corresponding Wasserstein distance in \(\hat{S}^K[p^*]\). Hence
  \[
    \hat{J}^K[\Phi^*]=\hat{S}^K[p^*]=\hat{S}^{K,*}.
  \]
  It follows that
  \[
    \hat{J}^{K,*}\le \hat{S}^{K,*}.
  \]
  Combining the two inequalities, we conclude that
  \[
    \hat{J}^{K,*}=\hat{S}^{K,*}.
  \]

  \textbf{Proof of (a):}
  Let \(\Phi^*\) be a minimizer of \eqref{eqn:prob_lagrange}, and let \(p^*\) be the induced density sequence defined by \(p_k^*=(\Phi_k^*)_\#\rho_0\). By \eqref{eq:basic_inequality},
  \[
    \hat{S}^K[p^*]\le \hat{J}^K[\Phi^*]=\hat{J}^{K,*}=\hat{S}^{K,*}.
  \]
  Since \(\hat{S}^{K,*}\) is the minimum value of the Eulerian problem, we also have \(\hat{S}^{K,*}\le \hat{S}^K[p^*]\). Therefore,
  \[
    \hat{S}^K[p^*]=\hat{S}^{K,*},
  \]
  and \(p^*\) is a minimizer of \eqref{eqn:prob_euler}.

  \textbf{Proof of (b):}
  Suppose \(p^*\) is a minimizer of \eqref{eqn:prob_euler}, so \(\hat{S}^K[p^*]=\hat{S}^{K,*}\). Let \(\Psi_k\) be the optimal transport map from \(p^*_{k-1}\) to \(p^*_k\), and define \(\Phi_0^*=\mathrm{id}\), \(\Phi_k^*=\Psi_k\circ\Phi_{k-1}^*\). Then, as shown above,
  \[
    \hat{J}^K[\Phi^*]=\hat{S}^K[p^*]=\hat{S}^{K,*}=\hat{J}^{K,*}.
  \]
  This proves that the composite map \(\Phi^*\) is a minimizer of the Lagrangian problem \eqref{eqn:prob_lagrange}.
\end{proof}

%% file: proof_disc_conv.tex
\begin{proof}
We analyze the discretization error $|\hat{J}(\Phi)-\hat{J}_N^K(\Phi)|$, which arises from the temporal discretization of the geometric action and the Monte Carlo approximation of the $L^2(\rho_0)$-norms.

The continuous action functional is
\begin{equation}\label{eqn:conti-loss}
    \hat{J}(\Phi)
    =
    \int_0^1
    \left\| \mathcal{V}[p_\tau](\Phi(\tau,\cdot)) \right\|_{L^2(\rho_0)}
    \left\| \partial_\tau \Phi(\tau,\cdot) \right\|_{L^2(\rho_0)}
    \d\tau.
\end{equation}
Let $L(\tau):=A(\tau)B(\tau)$, where
\[
    A(\tau):=\left\| \mathcal{V}[p_\tau](\Phi(\tau,\cdot)) \right\|_{L^2(\rho_0)},
    \qquad
    B(\tau):=\left\| \partial_\tau \Phi(\tau,\cdot) \right\|_{L^2(\rho_0)}.
\]

Let $\tau_k:=k\Delta\tau$, with $\Delta\tau=1/K$, and $\Phi_k:=\Phi(\tau_k,\cdot)$. In accordance with \eqref{eqn:dk_vk_def}, define
\[
    \hat{A}_j^N:=v_j(\Phi),
    \qquad
    \hat{B}_k^N:=\frac{d_k(\Phi)}{\Delta\tau}
    =
    \left(
    \frac1N\sum_{i=1}^N
    \left\|
    \frac{\Phi_k(z_i)-\Phi_{k-1}(z_i)}{\Delta\tau}
    \right\|^2
    \right)^{1/2},
\]
where $\{z_i\}_{i=1}^N$ are i.i.d.\ samples drawn from $\rho_0$. Then
\begin{equation}\label{eqn:disc-loss}
    \hat{J}_N^K(\Phi)
    =
    \sum_{k=1}^K
    \underbrace{\frac{\hat{A}_k^N+\hat{A}_{k-1}^N}{2}}_{\approx A(\tau_{k-1/2})}
    \cdot
    \underbrace{\hat{B}_k^N}_{\approx B(\tau_{k-1/2})}
    \Delta\tau,
\end{equation}
with $\tau_{k-1/2}:=(\tau_{k-1}+\tau_k)/2$.

We assume directly that $A,B\in C^2([0,1])$ with uniformly bounded derivatives. We also assume that $\Phi\in C^3([0,1];L^2(\rho_0))$. Furthermore, for
\[
    \widetilde{B}_k
    :=
    \left\|
    \frac{\Phi(\tau_k,\cdot)-\Phi(\tau_{k-1},\cdot)}{\Delta\tau}
    \right\|_{L^2(\rho_0)},
\]
we assume the Monte Carlo estimators satisfy the second-moment bounds
\[
    \mathbb{E}\bigl[|\hat{A}_j^N-A(\tau_j)|^2\bigr]\le C_A^2 N^{-1},
    \qquad
    \mathbb{E}\bigl[|\hat{B}_k^N-\widetilde{B}_k|^2\bigr]\le C_B^2 N^{-1},
\]
together with
\[
    \sup_j \mathbb{E}\bigl[|\hat{A}_j^N|^2\bigr]
    +
    \sup_k \mathbb{E}\bigl[|\hat{B}_k^N|^2\bigr]
    \le C,
\]
for some constant $C>0$ independent of $K$ and $N$.

We first estimate the deterministic quadrature error. Since $L\in C^2([0,1])$, the midpoint rule yields
\begin{equation}\label{eq:midpoint-expansion}
    \int_{\tau_{k-1}}^{\tau_k} L(\tau)\d\tau
    =
    A(\tau_{k-1/2})B(\tau_{k-1/2})\Delta\tau
    +
    \mathcal{O}((\Delta\tau)^3),
\end{equation}
where the constant depends only on a uniform bound on $L''$.

We next estimate the two discrete factors.

\medskip
\noindent\textbf{Term $\hat{A}^N$.}
By Taylor expansion around $\tau_{k-1/2}$,
\[
    A(\tau_j)
    =
    A(\tau_{k-1/2})
    \pm
    \frac{\Delta\tau}{2}A'(\tau_{k-1/2})
    +
    \mathcal{O}((\Delta\tau)^2),
    \qquad j\in\{k-1,k\},
\]
and therefore
\[
    \frac{A(\tau_k)+A(\tau_{k-1})}{2}
    =
    A(\tau_{k-1/2})+\mathcal{O}((\Delta\tau)^2).
\]
It follows that
\begin{equation}\label{eq:A-error}
    \left(
    \mathbb{E}\left|
    \frac{\hat{A}_k^N+\hat{A}_{k-1}^N}{2}
    -
    A(\tau_{k-1/2})
    \right|^2
    \right)^{1/2}
    =
    \mathcal{O}((\Delta\tau)^2)+\mathcal{O}(N^{-1/2}).
\end{equation}

\medskip
\noindent\textbf{Term $\hat{B}_k^N$.}
Since $\Phi\in C^3([0,1];L^2(\rho_0))$, the centered difference at the midpoint satisfies
\[
    \frac{\Phi(\tau_k,\cdot)-\Phi(\tau_{k-1},\cdot)}{\Delta\tau}
    =
    \partial_\tau \Phi(\tau_{k-1/2},\cdot)
    +
    \mathcal{O}((\Delta\tau)^2)
    \qquad\text{in }L^2(\rho_0).
\]
Because the $L^2(\rho_0)$-norm is $1$-Lipschitz, we obtain
\[
    \widetilde{B}_k
    =
    B(\tau_{k-1/2})+\mathcal{O}((\Delta\tau)^2).
\]
Hence
\begin{equation}\label{eq:B-error}
    \left(
    \mathbb{E}\bigl|
    \hat{B}_k^N-B(\tau_{k-1/2})
    \bigr|^2
    \right)^{1/2}
    \le
    \left(
    \mathbb{E}\bigl|
    \hat{B}_k^N-\widetilde{B}_k
    \bigr|^2
    \right)^{1/2}
    +
    \bigl|
    \widetilde{B}_k-B(\tau_{k-1/2})
    \bigr|
    =
    \mathcal{O}((\Delta\tau)^2)+\mathcal{O}(N^{-1/2}).
\end{equation}

Define
\[
    L_k^{\mathrm{disc}}
    :=
    \frac{\hat{A}_k^N+\hat{A}_{k-1}^N}{2}\hat{B}_k^N.
\]
Set
\[
    \Delta A_k
    :=
    \frac{\hat{A}_k^N+\hat{A}_{k-1}^N}{2}-A(\tau_{k-1/2}),
    \qquad
    A_m:=A(\tau_{k-1/2}),
    \qquad
    B_m:=B(\tau_{k-1/2}).
\]
Then
\[
    L_k^{\mathrm{disc}}-A_mB_m
    =
    \Delta A_k\hat{B}_k^N
    +
    A_m\bigl(\hat{B}_k^N-B_m\bigr).
\]
Therefore,
\[
    \mathbb{E}\bigl|L_k^{\mathrm{disc}}-A_mB_m\bigr|
    \le
    \mathbb{E}\bigl|\Delta A_k\hat{B}_k^N\bigr|
    +
    |A_m|\mathbb{E}\bigl|\hat{B}_k^N-B_m\bigr|.
\]
By Cauchy-Schwarz and \eqref{eq:A-error},
\[
    \mathbb{E}\bigl|\Delta A_k\hat{B}_k^N\bigr|
    \le
    \bigl(\mathbb{E}|\Delta A_k|^2\bigr)^{1/2}
    \bigl(\mathbb{E}|\hat{B}_k^N|^2\bigr)^{1/2}
    =
    \mathcal{O}((\Delta\tau)^2)+\mathcal{O}(N^{-1/2}),
\]
while \eqref{eq:B-error} gives
\[
    |A_m|\mathbb{E}\bigl|\hat{B}_k^N-B_m\bigr|
    \le
    |A_m|
    \bigl(\mathbb{E}|\hat{B}_k^N-B_m|^2\bigr)^{1/2}
    =
    \mathcal{O}((\Delta\tau)^2)+\mathcal{O}(N^{-1/2}).
\]
Hence
\begin{equation}\label{eq:Ldisc-error}
    \mathbb{E}\bigl|
    L_k^{\mathrm{disc}}-A(\tau_{k-1/2})B(\tau_{k-1/2})
    \bigr|
    =
    \mathcal{O}((\Delta\tau)^2)+\mathcal{O}(N^{-1/2}).
\end{equation}

Let
\[
    \mathcal{E}_k
    :=
    L_k^{\mathrm{disc}}\Delta\tau
    -
    \int_{\tau_{k-1}}^{\tau_k}L(\tau)\d\tau
\]
denote the local error on $[\tau_{k-1},\tau_k]$. Then
\[
    \mathcal{E}_k
    =
    \left(
    L_k^{\mathrm{disc}}-A(\tau_{k-1/2})B(\tau_{k-1/2})
    \right)\Delta\tau
    +
    \left(
    A(\tau_{k-1/2})B(\tau_{k-1/2})\Delta\tau
    -
    \int_{\tau_{k-1}}^{\tau_k}L(\tau)\d\tau
    \right).
\]
Taking expectations and using \eqref{eq:midpoint-expansion} and \eqref{eq:Ldisc-error}, we obtain
\begin{align*}
    \mathbb{E}|\mathcal{E}_k|
    &\le
    \mathbb{E}\bigl|
    L_k^{\mathrm{disc}}-A(\tau_{k-1/2})B(\tau_{k-1/2})
    \bigr|\Delta\tau
    +
    \mathcal{O}((\Delta\tau)^3) \\
    &=
    \left(
    \mathcal{O}((\Delta\tau)^2)+\mathcal{O}(N^{-1/2})
    \right)\Delta\tau
    +
    \mathcal{O}((\Delta\tau)^3) \\
    &=
    \mathcal{O}((\Delta\tau)^3)+\mathcal{O}(\Delta\tau N^{-1/2}).
\end{align*}

Summing over all $K=1/\Delta\tau$ intervals gives
\begin{align*}
    \mathbb{E}\bigl|
    \hat{J}_N^K(\Phi)-\hat{J}(\Phi)
    \bigr|
    &\le
    \sum_{k=1}^K \mathbb{E}|\mathcal{E}_k| =
    K\mathcal{O}((\Delta\tau)^3)
    +
    K\mathcal{O}(\Delta\tau N^{-1/2}) =
    \mathcal{O}((\Delta\tau)^2)+\mathcal{O}(N^{-1/2}) =
    \mathcal{O}(K^{-2})+\mathcal{O}(N^{-1/2}).
\end{align*}
This proves the claimed estimate.
\end{proof}

%% file: GenerativeWGF.bib
@book{boltzmann1872weitere,
  title={Weitere studien {\"u}ber das w{\"a}rmegleichgewicht unter gasmolek{\"u}len},
  author={Boltzmann, Ludwig},
  volume={66},
  year={1872},
  publisher={Aus der kk Hot-und Staatsdruckerei}
}

@article{lafferty1988density,
  title={The density manifold and configuration space quantization},
  author={Lafferty, John D},
  journal={Transactions of the American Mathematical Society},
  volume={305},
  number={2},
  pages={699--741},
  year={1988}
}

@article{jordan1998variational,
  title={{The variational formulation of the Fokker--Planck equation}},
  author={Jordan, Richard and Kinderlehrer, David and Otto, Felix},
  journal={SIAM journal on mathematical analysis},
  volume={29},
  number={1},
  pages={1--17},
  year={1998},
  publisher={SIAM}
}

@article{otto2001geometry,
  title={The geometry of dissipative evolution equations: the porous medium equation},
  author={Otto, F},
  journal={Comm. Partial Differential Equations},
  volume={26},
  pages={101--174},
  year={2001}
}

@article{carrillo2003kinetic,
  title={Kinetic equilibration rates for granular media and related equations: entropy dissipation and mass transportation estimates},
  author={Carrillo, Jos{\'e} A and McCann, Robert J and Villani, C{\'e}dric},
  journal={Revista Matematica Iberoamericana},
  volume={19},
  number={3},
  pages={971--1018},
  year={2003}
}

@book{vazquez2007porous,
  title={The porous medium equation: mathematical theory},
  author={V{\'a}zquez, Juan Luis},
  year={2007},
  publisher={Oxford university press}
}

@book{ambrosio2008gradient,
  title={Gradient flows: in metric spaces and in the space of probability measures},
  author={Ambrosio, Luigi and Gigli, Nicola and Savar{\'e}, Giuseppe},
  year={2008},
  publisher={Springer Science \& Business Media}
}

@book{rousset2010free,
  title={Free energy computations: a mathematical perspective},
  author={Rousset, Mathias and Stoltz, Gabriel and Lelievre, Tony},
  year={2010},
  publisher={World Scientific}
}

@book{villani2021topics,
  title={Topics in optimal transportation},
  author={Villani, C{\'e}dric},
  volume={58},
  year={2021},
  publisher={American Mathematical Soc.}
}

@article{byun2024planar,
  title={Planar equilibrium measure problem in the quadratic fields with a point charge},
  author={Byun, Sung-Soo},
  journal={Computational Methods and Function Theory},
  volume={24},
  number={2},
  pages={303--332},
  year={2024},
  publisher={Springer}
}

@article{carrillo2015finite,
  title={A finite-volume method for nonlinear nonlocal equations with a gradient flow structure},
  author={Carrillo, Jos{\'e} A and Chertock, Alina and Huang, Yanghong},
  journal={Communications in Computational Physics},
  volume={17},
  number={1},
  pages={233--258},
  year={2015},
  publisher={Cambridge University Press}
}

@article{carrillo2022primal,
  title={Primal dual methods for Wasserstein gradient flows},
  author={Carrillo, Jos{\'e} A and Craig, Katy and Wang, Li and Wei, Chaozhen},
  journal={Foundations of Computational Mathematics},
  pages={1--55},
  year={2022},
  publisher={Springer}
}

@article{yu2018deep,
  title={{The deep Ritz method: a deep learning-based numerical algorithm for solving variational problems}},
  author={Yu, Bing and others},
  journal={Communications in Mathematics and Statistics},
  volume={6},
  number={1},
  pages={1--12},
  year={2018},
  publisher={Springer}
}

@article{raissi2019physics,
  title={Physics-informed neural networks: A deep learning framework for solving forward and inverse problems involving nonlinear partial differential equations},
  author={Raissi, Maziar and Perdikaris, Paris and Karniadakis, George E},
  journal={Journal of Computational physics},
  volume={378},
  pages={686--707},
  year={2019},
  publisher={Elsevier}
}

@article{tang2022adaptive,
  title={{Adaptive deep density approximation for Fokker-Planck equations}},
  author={Tang, Kejun and Wan, Xiaoliang and Liao, Qifeng},
  journal={Journal of Computational Physics},
  volume={457},
  pages={111080},
  year={2022},
  publisher={Elsevier}
}

@article{xie2023deep,
  title={Deep variational free energy approach to dense hydrogen},
  author={Xie, Hao and Li, Zi-Hang and Wang, Han and Zhang, Linfeng and Wang, Lei},
  journal={Physical Review Letters},
  volume={131},
  number={12},
  pages={126501},
  year={2023},
  publisher={APS}
}

@article{cai2024weak,
  title={Weak Generative Sampler to Efficiently Sample Invariant Distribution of Stochastic Differential Equation},
  author={Cai, Zhiqiang and Cao, Yu and Huang, Yuanfei and Zhou, Xiang},
  journal={SIAM Journal on Scientific Computing (to appear. arXiv2405.19256 )},
  year={2026},
  publisher={SIAM}
}

@article{cai2025weak,
  title={Weak Generative Sampler for Stationary Distributions of McKean-Vlasov System},
  author={Cai, Zhiqiang and Liu, Chengyu and Zhou, Xiang},
  journal={arXiv preprint arXiv:2509.12841},
  year={2025}
}

@article{liu2022neural,
  title={Neural Parametric Fokker--Planck Equation},
  author={Liu, Shu and Li, Wuchen and Zha, Hongyuan and Zhou, Haomin},
  journal={SIAM Journal on Numerical Analysis},
  volume={60},
  number={3},
  pages={1385--1449},
  year={2022},
  publisher={SIAM}
}

@article{boffi2023probability,
  title={Probability flow solution of the {F}okker--{P}lanck equation},
  author={Boffi, Nicholas M and Vanden-Eijnden, Eric},
  journal={Machine Learning: Science and Technology},
  volume={4},
  number={3},
  pages={035012},
  year={2023},
  publisher={IOP Publishing}
}

@article{nurbekyan2023efficient,
  title={{Efficient natural gradient descent methods for large-scale PDE-based optimization problems}},
  author={Nurbekyan, Levon and Lei, Wanzhou and Yang, Yunan},
  journal={SIAM Journal on Scientific Computing},
  volume={45},
  number={4},
  pages={A1621--A1655},
  year={2023},
  publisher={SIAM}
}

@article{huang2023bridging,
  title={Bridging mean-field games and normalizing flows with trajectory regularization},
  author={Huang, Han and Yu, Jiajia and Chen, Jie and Lai, Rongjie},
  journal={Journal of Computational Physics},
  volume={487},
  pages={112155},
  year={2023},
  publisher={Elsevier}
}

@inproceedings{li2023selfconsistent,
title={Self-Consistent Velocity Matching of Probability Flows},
author={Lingxiao Li and Samuel Hurault and Justin Solomon},
booktitle={Thirty-seventh Conference on Neural Information Processing Systems},
year={2023},
url={https://openreview.net/forum?id=C6fvJ2RfsL}
}

@article{huang2024vy,
  title={{Levy Score Function and Score-Based Particle Algorithm for Nonlinear Levy--Fokker--Planck Equations}},
  author={Huang, Yuanfei and Liu, Chengyu and Zhou, Xiang},
journal={SIAM Journal on Numerical Analysis (to appear), arXiv 2412.19520 },
year={2026},
  publisher={SIAM}
}

@article{shen2024entropy,
  title={Entropy-dissipation informed neural network for {M}ckean-{V}lasov type {PDE}s},
  author={Shen, Zebang and Wang, Zhenfu},
  journal={Advances in Neural Information Processing Systems},
  volume={36},
  year={2024}
}

@article{hu2024energetic,
  title={Energetic Variational Neural Network Discretizations of Gradient Flows},
  author={Hu, Ziqing and Liu, Chun and Wang, Yiwei and Xu, Zhiliang},
  journal={SIAM Journal on Scientific Computing},
  volume={46},
  number={4},
  pages={A2528--A2556},
  year={2024},
  publisher={SIAM}
}

@article{lee2024deep,
  title={{Deep JKO: time-implicit particle methods for general nonlinear gradient flows}},
  author={Lee, Wonjun and Wang, Li and Li, Wuchen},
  journal={Journal of Computational Physics},
  pages={113187},
  year={2024},
  publisher={Elsevier}
}

@article{lu2024score,
  title={Score-based transport modeling for mean-field {F}okker-{P}lanck equations},
  author={Lu, Jianfeng and Wu, Yue and Xiang, Yang},
  journal={Journal of Computational Physics},
  volume={503},
  pages={112859},
  year={2024},
  publisher={Elsevier}
}

@article{peletier2014variational,
  title={Variational modelling: Energies, gradient flows, and large deviations},
  author={Peletier, Mark A},
  journal={arXiv preprint arXiv:1402.1990},
  year={2014}
}

@article{onsager1931reciprocal1,
  title={Reciprocal relations in irreversible processes. I.},
  author={Onsager, Lars},
  journal={Physical review},
  volume={37},
  number={4},
  pages={405},
  year={1931},
  publisher={APS}
}

@article{heymann2008geometric,
  title={The geometric minimum action method: A least action principle on the space of curves},
  author={Heymann, Matthias and Vanden-Eijnden, Eric},
  journal={Communications on Pure and Applied Mathematics: A Journal Issued by the Courant Institute of Mathematical Sciences},
  volume={61},
  number={8},
  pages={1052--1117},
  year={2008},
  publisher={Wiley Online Library}
}

@inproceedings{rezende2015variational,
  title={Variational inference with normalizing flows},
  author={Rezende, Danilo and Mohamed, Shakir},
  booktitle={International conference on machine learning},
  pages={1530--1538},
  year={2015},
  organization={PMLR}
}

@inproceedings{dinh2016density,
  title={{Density estimation using Real NVP}},
  author={Dinh, Laurent and Sohl-Dickstein, Jascha and Bengio, Samy},
  booktitle={International Conference on Learning Representations},
  year={2016}
}

@article{chen2018neural,
  title={Neural ordinary differential equations},
  author={Chen, Ricky TQ and Rubanova, Yulia and Bettencourt, Jesse and Duvenaud, David K},
  journal={Advances in neural information processing systems},
  volume={31},
  year={2018}
}

@article{durkan2019neural,
  title={Neural spline flows},
  author={Durkan, Conor and Bekasov, Artur and Murray, Iain and Papamakarios, George},
  journal={Advances in neural information processing systems},
  volume={32},
  year={2019}
}

@article{kobyzev2020normalizing,
  title={Normalizing flows: An introduction and review of current methods},
  author={Kobyzev, Ivan and Prince, Simon JD and Brubaker, Marcus A},
  journal={IEEE transactions on pattern analysis and machine intelligence},
  volume={43},
  number={11},
  pages={3964--3979},
  year={2020},
  publisher={IEEE}
}

@article{papamakarios2021normalizing,
  title={Normalizing flows for probabilistic modeling and inference},
  author={Papamakarios, George and Nalisnick, Eric and Rezende, Danilo Jimenez and Mohamed, Shakir and Lakshminarayanan, Balaji},
  journal={Journal of Machine Learning Research},
  volume={22},
  number={57},
  pages={1--64},
  year={2021}
}

@inproceedings{hong2023neural,
  title={{Neural diffeomorphic non-uniform B-spline flows}},
  author={Hong, Seongmin and Chun, Se Young},
  booktitle={Proceedings of the AAAI Conference on Artificial Intelligence},
  volume={37},
  number={10},
  pages={12225--12233},
  year={2023}
}

@article{hyvarinen2005estimation,
  title={Estimation of non-normalized statistical models by score matching.},
  author={Hyv{\"a}rinen, Aapo and Dayan, Peter},
  journal={Journal of Machine Learning Research},
  volume={6},
  number={4},
  year={2005}
}

@inproceedings{
    song2021scorebased,
    title={Score-Based Generative Modeling through Stochastic Differential Equations},
    author={Yang Song and Jascha Sohl-Dickstein and Diederik P Kingma and Abhishek Kumar and Stefano Ermon and Ben Poole},
    booktitle={International Conference on Learning Representations},
    year={2021},
}

@inproceedings{xu2023normalizing,
title={Normalizing flow neural networks by {JKO} scheme},
author={Chen Xu and Xiuyuan Cheng and Yao Xie},
booktitle={Thirty-seventh Conference on Neural Information Processing Systems},
year={2023},
url={https://openreview.net/forum?id=ZQMlfNijY5}
}


%% file: gad.bib
@string{cpam = {Comm. Pure Appl. Math.}}

@string{jcp = {J. Chem. Phys.}}

@article{weinan-MAM2004,
	Author = {W. ~E and W. ~Ren and E. ~Vanden-Eijnden},
	Issue = {5},
	Journal = CPAM,
	Pages = {637-656},
	Title = {Minimum Action Method for the Study of Rare Events},
	Volume = {57},
	Year = {2004}}

@article{String2002,
	Author = {W.~E and W. ~Ren and E. ~Vanden-Eijnden},
	Journal = {Phys. Rev. B},
	Pages = {052301},
	Title = {String method for the study of rare events},
	Volume = {66},
	Year = {2002}}

@article{Heymann2006,
	Author = {Matthias.~Heymann and Eric ~Vanden-Eijnden},
	Journal = CPAM,
	Pages = {1052-1117},
	Title = {The geometric minimum action method: a least action principle on the space of curves},
	Volume = {61},
	Year = {2008}}

@article{Heyman2008,
	Author = {E. Vanden-Eijnden and M. Heymann},
	Journal = JCP,
	Pages = {061103},
	Title = {The Geometric Minimum Action Method for Computing Minimum Energy Paths},
	Volume = {128},
	Year = {2008}}

@article{SIMONNET2023112349,
	author = {Eric Simonnet},
	doi = {https://doi.org/10.1016/j.jcp.2023.112349},
	issn = {0021-9991},
	journal = {Journal of Computational Physics},
	pages = {112349},
	title = {Computing non-equilibrium trajectories by a deep learning approach},
	url = {https://www.sciencedirect.com/science/article/pii/S0021999123004448},
	volume = {491},
	year = {2023}}

@article{aMAM2008,
	Author = {X.~Zhou and W.~Ren and W.~E},
	Doi = {10.1063/1.2830717},
	Eid = {104111},
	Journal = JCP,
	Number = {10},
	Numpages = {11},
	Pages = {104111},
	Publisher = {AIP},
	Title = {Adaptive minimum action method for the study of rare events},
	Url = {http://link.aip.org/link/?JCP/128/104111/1},
	Volume = {128},
	Year = {2008}}

@Article{CiCP2018-SUNZHOU,
author = {Yiqun Sun and Xiang Zhou},
title = {An Improved Adaptive Minimum Action Method for the Calculation of Transition Path in Non-Gradient Systems},
journal = {Communications in Computational Physics},
year = {2018},
volume = {24},
number = {1},
pages = {44--68},
 issn = {1991-7120},
doi = {https://doi.org/10.4208/cicp.OA-2016-0230},
url = {http://global-sci.org/intro/article_detail/cicp/10927.html}
}

@article{WantMAM2015,
  title={A minimum action method with optimal linear time scaling},
  author={Wan, Xiaoliang},
  journal={Communications in Computational Physics},
  volume={18},
  number={5},
  pages={1352--1379},
  year={2015},
  publisher={Cambridge University Press}
}

@book{FW2012,
	Address = {New York},
	Author = {M. I.~Freidlin and A. D.~Wentzell},
	Edition = {3},
	Publisher = {Springer-Verlag},
	Series = {Grundlehren der mathematischen Wissenschaften},
	Title = {Random Perturbations of Dynamical Systems},
	Year = {2012}}

@book{Feng2006,
	Address = {Prividence, RI},
	Author = {Jin Feng and Thomas G. Kurtz},
	Publisher = {American Mathematical Society},
	Series = {Mathematical Surveys and Monographs},
	Title = {Large Deviations for Stochastic Processes},
	Volume = {131},
	Year = {2006}}

@article{Dawson1983,
  	Author = {Dawson, Donald A.},
  	Doi = {10.1007/BF01010922},
   	Journal = {Journal of Statistical Physics},
  	Language = {English},
  	Number = {1},
  	Pages = {29-85},
  	Publisher = {Kluwer Academic Publishers-Plenum Publishers},
  	Title = {Critical dynamics and fluctuations for a mean-field model of cooperative behavior},
  	Volume = {31},
  	Year = {1983},
  	 }

@article{DG1987,
	Author = {Dawson, Donald A. and G\"{a}rtner, J\"{u}rgen},
	Doi = {10.1080/17442508708833446},
	Eprint = {http://dx.doi.org/10.1080/17442508708833446},
	Journal = {Stochastics},
	Number = {4},
	Pages = {247-308},
	Title = {Large deviations from the {M}c{K}ean-{V}lasov limit for weakly interacting diffusions},
	Url = {http://dx.doi.org/10.1080/17442508708833446},
	Volume = {20},
	Year = {1987}}

@inbook{DW1989,
  	Author = {Dawson, Donald A. and G\"{a}rtner, J\"{u}rgen},
  	Doi = {http://dx.doi.org/10.1090/memo/0398},
  	Number = {398},
  	Publisher = {Memoirs of the American Mathematical Society},
  	Title = {Large deviations, free energy functional and quasi-potential for a mean field model of interacting diffusions},
  	Volume = {78},
  	Year = {1989}}

@article{Zimmer2013,
  author = {Adams, Stefan  and Dirr, Nicolas  and Peletier, Mark  and Zimmer, Johannes },
  title = {Large deviations and gradient flows},
  journal = {Philosophical Transactions of the Royal Society A: Mathematical, Physical and Engineering Sciences},
  volume = {371},
  number = {2005},
  pages = {20120341},
  year = {2013},
  doi = {10.1098/rsta.2012.0341},

  URL = {https://royalsocietypublishing.org/doi/abs/10.1098/rsta.2012.0341},
  eprint = {https://royalsocietypublishing.org/doi/pdf/10.1098/rsta.2012.0341}
  }

@book{Villani2009,
	Author = {Villani, Cedric},
	Publisher = {Springer-Verlag, Berlin},
	Title = {Optimal transport: old and new},
	Year = {2009},
  Series = {Grundlehren der Mathematischen Wissenschaften [Fundamental Principles of Mathematical Sciences]},
	Volume ={338},
  }

@article{StringNET2026,
  title={{ StringNET: Neural Network based Variational Method for Transition Pathways}},
  author={Jiayue Han and Zhiyou Wu  and   Shuting Gu and Xiang Zhou},
journal = {Communications in Computational Physics},
  year={2026}
}
